\newtheorem{theorem}{Theorem}
\newtheorem{proposition}{Proposition} 
\newtheorem{lemma}{Lemma}
\crefname{theorem}{theorem}{theorems}
\Crefname{Theorem}{Theorem}{Theorems}
\crefname{remark}{remark}{remarks}
\Crefname{Remark}{Remark}{Remarks}
\crefname{proposition}{proposition}{propositions}
\Crefname{Proposition}{Proposition}{Propositions}
\crefname{corollary}{corollary}{corollaries}
\Crefname{Corollary}{Corollary}{Corollaries}
\crefname{example}{example}{examples}
\Crefname{Example}{Example}{Examples}
\crefname{lemma}{lemma}{lemmas}
\Crefname{Lemma}{Lemma}{Lemmas}
\crefname{proof}{example}{proofs}
\crefname{figure}{figure}{figures}
\Crefname{Figure}{Figure}{Figures}
\Crefname{assumption}{\textbf{H}\hspace{-3pt}}{\textbf{H}\hspace{-3pt}}
\crefname{assumption}{\textbf{H}}{\textbf{H}}
\Crefname{assumptionAO}{\textbf{AO}\hspace{-3pt}}{\textbf{AO}\hspace{-3pt}}
\crefname{assumptionAO}{\textbf{AO}}{\textbf{AO}}
\Crefname{assumptionL}{\textbf{L}\hspace{-3pt}}{\textbf{L}\hspace{-3pt}}
\crefname{assumptionL}{\textbf{L}}{\textbf{L}}
\Crefname{assumptionA}{\textbf{A}\hspace{-3pt}}{\textbf{A}\hspace{-3pt}}
\crefname{assumptionA}{\textbf{A}}{\textbf{A}}
\Crefname{assumptionG}{\textbf{G}\hspace{-3pt}}{\textbf{G}\hspace{-3pt}}
\crefname{assumptionG}{\textbf{G}}{\textbf{G}}
\Crefname{assumptionAp}{\textbf{A'}\hspace{-3pt}}{\textbf{A'}\hspace{-3pt}}
\crefname{assumptionAp}{\textbf{A'}}{\textbf{A'}}
\def\rset{\mathbb{R}}
\def\rmd{\mathrm{d}}
\def\rme{\mathrm{e}}
\def\rmi{\mathrm{i}}
\newcommandx{\functionspace}[2][1=+]{\mathbb{F}_{#1}(#2)}
\newcommand{\LeftEqNo}{\let\veqno\@@leqno}
\newcommand{\N}{\ensuremath{\mathbb{N}}}
\newcommand{\PE}{\mathbb{E}}
\newcommand{\PP}{\mathbb{P}}
\newcommandx{\Vnorm}[2][1=V]{\| #2 \|_{#1}}
\newcommandx{\norm}[2][1=]{\ifthenelse{\equal{#1}{}}{\left\Vert #2 \right\Vert}{\left\Vert #2 \right\Vert^{#1}}}
\newcommandx{\normLigne}[2][1=]{\ifthenelse{\equal{#1}{}}{\Vert #2 \Vert}{\Vert #2\Vert^{#1}}}
\newcommand{\defEns}[1]{\left\lbrace #1 \right\rbrace }
\newcommand{\defEnsLigne}[1]{\lbrace #1 \rbrace }
\newcommandx\probaMarkovTilde[2][2=]
\newcommand{\expe}[1]{\PE \left[ #1 \right]}
\def\ie{\textit{i.e.}}
\def\eqsp{\;}
\newcommand{\ocint}[1]{\left(#1\right]}
\newcommandx{\weight}[2][2=n]{\omega_{#1,#2}^N}
\newcommandx\sequence[3][2=,3=]
\newcommandx\sequenceD[3][2=,3=]
\newcommandx{\sequencen}[2][2=n\in\N]{\ensuremath{\{ #1_n, \eqsp #2 \}}}
\newcommandx\sequenceDouble[4][3=,4=]
\newcommandx{\sequencenDouble}[3][3=n\in\N]{\ensuremath{\{ (#1_{n},#2_{n}), \eqsp #3 \}}}
\def\iid{\text{i.i.d.}}
\def\eg{e.g.}
\newcommand{\opnorm}[1]{{\left\vert\kern-0.25ex\left\vert\kern-0.25ex\left\vert #1 
    \right\vert\kern-0.25ex\right\vert\kern-0.25ex\right\vert}}
\newcommandx{\CPE}[3][1=]{{\mathbb E}_{#1}\left[\left. #2 \, \middle \vert \, #3 \right. \right]} 
\newcommandx{\CPELigne}[3][1=]{{\mathbb E}_{#1}[\left. #2 \,  \vert \, #3 \right. ]} 
\newcommandx{\CPVar}[3][1=]{\mathrm{Var}^{#3}_{#1}\left\{ #2 \right\}}
\newcommand{\CPP}[3][]
{\ifthenelse{\equal{#1}{}}{{\mathbb P}\left(\left. #2 \, \right| #3 \right)}{{\mathbb P}_{#1}\left(\left. #2 \, \right | #3 \right)}}
\newcommandx{\osc}[2][1=]{\mathrm{osc}_{#1}(#2)}
\newcommand\coupling[2]{\Gamma(\mu,\nu)}
\renewcommand{\geq}{\geqslant}
\renewcommand{\leq}{\leqslant}
\def\mtt{\mathtt{m}}
\newcommand{\txts}{\textstyle}
\def\gauss{\mathcal{N}}
\def\Idd{\mathrm{I}_d}
\def\bfo{\mathbf{o}}
\newcommandx{\Voi}[1][1=i]{\mathfrak{V}_{\bfo,#1}}
\newcommandx{\Vlyapc}[2][1=\bfo,2=i]{\mathfrak{V}_{#1,#2}}
\def\Wlyap{\mathfrak{W}}
\def\bfomega{\boldsymbol{\omega}}
\newcommandx{\Woi}[1][1=i]{\Wlyap_{\bfomega,#1}}
\newcommandx{\Wlyapc}[2][1=\bfomega,2=i]{\Wlyap_{#1,#2}}
\newcommand{\gstep}[1][t]{\gamma_{#1}} 
\newcommand{\gall}[1][t]{\gamma_{1 \to #1}} 
\newcommand{\sstep}[1][t]{\sigma_{#1}} 
\newcommand{\sall}[1][t]{\sigma_{1 \to #1}} 
\newcommand{\Sall}[1][t]{\Sigma_{1 \to #1}} 
\newcommand{\SallA}[1][t]{\Sigma_{1 \to #1}(A_{1:#1})} 
\newcommand{\Salla}[1][t]{\Sigma_{1 \to #1}(a_{1:#1})} 
\newcommand{\GammaA}[1][t]{\Gamma_{#1}(A_{1:#1})} 
\newcommand{\Gammaa}[1][t]{\Gamma_{#1}(a_{1:#1})} 
\newcommand{\barA}[1][0,1]{\bar{A}^t_{#1}}
\def\SallPt{\Sigma'_{t}(\barA)} %
\def\SallPtm{\Sigma'_{t-1}(\barA[0])} %
\def\GammaP{\Gamma'_t(\barA)} %
 \newcommand{\KL}{\mathrm{KL}}
 \newcommand{\pstar}{p_{\star}}
\newcommand{\mR}{\mathbb{R}}
\newcommand{\mE}{\mathbb{E}}
\newcommand{\eqd}{\,{\buildrel d \over =}\,}
\newcommand{\normal}{\gauss}
\newcommand{\stable}{\mathcal{S}}
\newcommand{\kArg}[1]{k_{#1}^{(\alpha)}}
\newcommand{\kTwoArg}[1]{k_{#1}^{(2)}}
\def\ik{\overleftarrow{k}}
\newcommand{\ikArg}[1]{\ik_{#1}^{(\alpha)}}
\newcommand{\ikTwoArg}[1]{\ik_{#1}^{(2)}}
\def\iq{\overleftarrow{q}}
\newcommand{\qArg}[1]{\iq_{#1}^{\theta}}
\newcommand{\pArg}[1]{p_{#1}^{(\alpha)}}
\def\pt{\pArg{t}}
\def\iX{\overset{\leftarrow}{X}}
\def\lossDDPM{\mathscr{L}^{\mathrm{D}}}
\def\lossDLPM{\mathscr{L}^{\mathrm{L}}}
\def\lossDLPMD{\mathscr{L}^{\mathrm{L}}}
\def\lossDLPMLess{\mathscr{L}^{\mathrm{Less}}}
\def\lossDLPMSimple{\mathscr{L}^{\mathrm{Simple}}}
\def\lossDLPMSimpleLess{\mathscr{L}^{\mathrm{SimpleLess}}}
\def\meanq{\hat{\mtt}^{\theta}}
\def\Sigmaq{\hat{\Sigma}^{\theta}}
\def\epsilonq{\hat{\epsilon}^{\theta}}
\def\meank{\Tilde{\mtt}}
\def\Sigmak{\Tilde{\Sigma}}
\def\epsa{\epsilon^{(\alpha)}}
\def\uppsia{\uppsi^{(\alpha)}}
\def\Cst{\mathrm{Cst}}
\def\stableA{\stable_{\alpha/2,1}(0, c_A )}
\def\stablesym{\stable^{\text{i}}_{\alpha}(0, \Idd)}
\def\levy{L^{\alpha}}
\def\score{S^{(\alpha)}}
\def\dlpmm{DLPM$_{5}$}
\def\dlimm{DLIM$_{5}$}
\def\fpr{F_1^{\text{pr}}}
\newcommand{\hArg}[1]{h_{#1}^{(\alpha)}}
\newcommand{\listT}[1]{\{#1\}_{t=0}^T}
\def\iY{\overleftarrow{Y}}
\def\iYArg{\overleftarrow{Y}^{\theta}}
\def\iZArg{\overleftarrow{Z}^{\theta}}
\def\iXArg{\overleftarrow{X}^{\theta}}
\def\listTA{\{A_t\}_{t=1}^T}
\def\listTG{\{G_t\}_{t=1}^T}
\def\Cauchy{\text{Cauchy}}
\def\LossCauchy{\mathscr{L}^{\text{Cauchy}}}
\title{Heavy-Tailed Diffusion with Denoising L\'evy Probabilistic Models}
\author{
    Dario Shariatian\textsuperscript{1}, Umut Simsekli\textsuperscript{1}, Alain Durmus\textsuperscript{2} \\
    \textsuperscript{1}INRIA - Department of Computer Science, PSL Research University, Paris, France \\
    \textsuperscript{2}\'Ecole Polytechnique - CMAP, IP Paris, Palaiseau, France \\
    \texttt{\{dario.shariatian, umut.simsekli\}@inria.fr}\\ 
    \texttt{alain.durmus@polytechnique.edu}
}
\begin{document}

\maketitle
\begin{abstract}

Exploring noise distributions beyond Gaussian in diffusion models remains an open challenge. While Gaussian-based models succeed within a unified SDE framework, recent studies suggest that heavy-tailed noise distributions, like $\alpha$-stable distributions, may better handle mode collapse and effectively manage datasets exhibiting class imbalance, heavy tails, or prominent outliers. 
Recently, Yoon et al.\ (NeurIPS 2023), presented the L\'evy-It\^o model (LIM), directly extending the SDE-based framework to a class of heavy-tailed SDEs, where the injected noise followed an $\alpha$-stable distribution, a rich class of heavy-tailed distributions. However, the LIM framework relies on highly involved mathematical techniques with limited flexibility, potentially hindering broader adoption and further development. 
In this study, instead of starting from the SDE formulation, we extend the denoising diffusion probabilistic model (DDPM) by replacing the Gaussian noise with $\alpha$-stable noise. By using only elementary proof techniques, the proposed approach, \emph{Denoising Lévy Probabilistic Model} (DLPM), boils down to vanilla DDPM with minor modifications. As opposed to the Gaussian case, DLPM and LIM yield different training algorithms and different backward processes, leading to distinct sampling algorithms. 
These fundamental differences translate favorably for DLPM as compared to LIM: our experiments show improvements in coverage of data distribution tails, better robustness to unbalanced datasets, and improved computation times requiring smaller number of backward steps. 

\end{abstract}


\section{Introduction}

The evolution of generative models has introduced several approaches, with diffusion models emerging as one of the most prominent. These models transform a data distribution into a Gaussian distribution via a forward noising process and then learn to reverse it. The foundational work on denoising in this context was presented by \cite{sohldickstein2015deep}, where the goal is to reverse a Markov chain that progressively adds Gaussian noise to the data. This framework culminated in denoising diffusion probabilistic models (DDPM) by \cite{ho2020denoising}, which demonstrated state-of-the-art performance in image generation, while drawing connections to score matching techniques (\cite{song2020generative}). A unified theoretical framework, based on stochastic differential equations (SDEs), further integrated score matching and the denoising framework (\cite{song2021scorebased}). Various generative models build up on this framework, improving its performance (\cite{dhariwal2021diffusion,karras2022elucidating}).

Despite their success, diffusion models exhibit limitations, such as requiring a large number of steps (\cite{ho2020denoising}) and empirically struggling
with imbalanced datasets (\cite{zhang2024longtailed, NEURIPS2023_score_based_levy_lim}), often leading to mode collapse (\cite{dhariwal2021diffusion, deasy2022heavytailed}). 
Tackling heavy-tailed datasets also presents a challenge for diffusion models, as the finite-time diffusion processes generate finite variance data distributions, which are ill-suited for modeling heavy-tailed data. Such datasets, like financial datasets, could benefit from extensions beyond the Gaussian setting, as demonstrated in \cite{stable_borak}.
Moreover, although many techniques exist to improve quality at the expense of diversity (\cite{dhariwal2021diffusionmodelsbeatgans, song2023consistencymodels}), by only using Gaussian noise, the methods typically require highly nontrivial setups in order to obtain improved diversity while maintaining a reasonable quality (\eg, FID score) (\cite{sadat2024cadsunleashingdiversitydiffusion, nobis2024generativefractionaldiffusionmodels}).

Several approaches have been explored to address the limitations of diffusion models, particularly through the use of non-Gaussian, heavy-tailed noise distributions. The motivation behind this is that heavy-tailed distributions, which can take on larger values, may reduce the number of sampling steps and better capture multimodal data distributions by identifying isolated modes through large noise injections (\cite{NEURIPS2023_score_based_levy_lim}). These distributions are also capable of modeling extreme events or rare occurrences in the tails, making them suitable for tasks such as audio generation (\cite{chen2020wavegradestimatinggradientswaveform, kong2021diffwaveversatilediffusionmodel}), where rare but important variations in amplitude or pitch (e.g., prosodies) can enhance sample quality and diversity. An early attempt at using heavy-tailed noise distributions was made by \cite{nachmani2021denoising}, who replaced Gaussian noise with Gamma-distributed noise, reducing the number of diffusion steps and improving data diversity. Similarly, \cite{deasy2022heavytailed} employed generalized Gaussian distributions in a score-matching formulation to improve robustness on unbalanced datasets. However, despite these promising directions, the performance boosts are limited, since both approaches fail to provide a time-reversal formula, either using annealed Langevin dynamics when the score is available, or using heuristics based on the Gaussian approach.

Very recently, partially inspired by \cite{pmlr-simsekli17a,huang2021approximation}, who employ heavy-tailed SDEs in Monte Carlo sampling for challenging distributions, \cite{NEURIPS2023_score_based_levy_lim} extended the SDE framework by replacing the light-tailed Brownian motion with a heavy-tailed driving process, introducing the L\'evy-It\^o model (LIM), improving performance on image data, particularly for unbalanced datasets, offering gains in metrics like FID and diversity. The tail index $\alpha$ controls the degree of tail heaviness, enabling tunable performance based on the characteristics of the data. When $\alpha=2$ the noising process reduces to the Brownian motion (light-tailed), however, whenever $\alpha<2$, the process becomes heavy-tailed with infinite variance (notably, heavier-tailed than the distributions explored by \cite{nachmani2021denoising}, \cite{deasy2022heavytailed}). 

While injecting heavy-tailed, infinite-variance noise might seem natural when the data distribution itself is also heavy-tailed, \cite{NEURIPS2023_score_based_levy_lim} further illustrated that the heavy-tailed noise can also be beneficial when sampling from compactly supported data distributions (\eg, generating images), especially in the presence of class imbalances (i.e., the large `jumps' introduced by the heavy tails can help finding weakly represented modes). Indeed, perhaps being counter-intuitive, it has been shown that a heavy-tailed process can indeed converge to a light-tailed distribution with appropriate care \cite{pmlr-simsekli17a,simsekli2020fractional,huang2020approximationheavytaileddistributionsstabledriven}, making them suitable for sampling from a broad range of data distributions.



\vspace{-8pt}

\paragraph{Motivation.} 
While LIM demonstrates promising results, the technical complexity of the time-reversed SDE presents significant challenges. The Lévy process with $\alpha < 2$ has discontinuous paths and no variance, preventing the use of standard analysis tools. The proof techniques rely on fractional calculus and estimations for pseudo-differential operators, which might not be accessible for the broader community of diffusion-based generative models. While the theory is elegant, we argue that the highly technical nature of LIM, originated due to the use of continuous-time processes, might hinder its development. For instance, it is highly non-trivial to use arbitrary noise schedules. 

Moreover, the loss function used in LIM presents some shortcomings: the theory requires a squared $\ell_2$ loss, assuming the loss remains finite for the considered neural network. However, this may not always hold, as the noise term is heavy-tailed and admits no variance, potentially leading to infinite loss values. As a result, LIM experiments must revert to an $\ell_1$ loss for stable training, suggesting that the original loss function may indeed be unworkable. Additionally, LIM is constrained to isotropic noise, further limiting its flexibility.

To overcome these issues, we propose a simpler yet effective alternative for incorporating heavy-tailed distributions, focusing on a discrete-time framework that leverages more elementary mathematical tools while maintaining performance improvements over Gaussian-based models.

\newpage 
\begin{wrapfigure}{r}{0.55\textwidth}
\centering
\begin{tikzpicture}[thick,scale=0.8, every node/.style={scale=0.8}]
    \node (A) at (0,2) {DDPM };
    \node (B) at (5,2) {Score-based SDE};
    \node (C) at (0,0) {\textbf{DLPM} (This study)};
    \node (D) at (5,0) {LIM (\cite{NEURIPS2023_score_based_levy_lim})};

    \draw[<->] (A) -- (B) node[midway, above] {unified};
    \draw[->] (A) -- (C) node[midway, left] {$\alpha$-stable noise};
    \draw[->] (B) -- (D) node[midway, right] {$\alpha$-stable noise};
    \draw[<->] (C) -- (D) node[midway, above] {\textcolor{red}{\texttimes}};
    \draw[<->, red] (C) -- (D) node[midway, below] {\textcolor{red}{not unified}};
\end{tikzpicture}
\caption{Illustration of available methods.}
\label{fig:chart}
\end{wrapfigure}
\paragraph{Contributions.}
As opposed to LIM which extended the SDE-based framework, here, we take a step back and directly work on the discrete-time DDPM process and replace the Gaussian noise with $\alpha$-stable noise. More precisely, we propose the following Markov process as noising process:
\begin{equation}\label{eq:dlpm_forward}
   X_{t} = \gamma_t X_{t-1} + \sigma_t \epsilon_t^{(\alpha)},
\end{equation}
where $\epsilon_t^{(\alpha)}$ follows a multivariate $\alpha$-stable distribution. When $\alpha=2$, the process recovers the standard Gaussian DDPM, but for $\alpha<2$, it introduces heavy-tailed noise with infinite variance.
A comparison between DLPM and LIM is provided in Figure~\ref{fig:chart}. 
Our contributions are as follows. 
\begin{enumerate}[label=$\bullet$,wide]
    \item \textbf{Simplified mathematical framework.}\ Leveraging a property of stable distributions (cf. Theorem~\ref{thm:main_stable}), we decompose $\epsilon_t^{(\alpha)}$ into a product of a one-dimensional random variable and a Gaussian vector. This transformation reduces the forward process to a Gaussian one with modified scaling, allowing us to approximate the reverse process using only elementary tools.
    We call the resulting generative model Denoising L\'evy Probabilistic Model (DLPM). 

    \item \textbf{Extension to deterministic sampling.}\ Building on the DDIM framework (\cite{DDIM}), we introduce a deterministic sampler for DLPM, termed DLIM, which further reduces the number of sampling iterations, boosting efficiency.

    \item \textbf{Compatibility with existing methods.}\ DLPM maintains compatibility with existing DDPM implementations, requiring only minor modifications, making it a practical and flexible alternative to LIM. This simplicity extends for instance to noise schedules, which are difficult to manipulate in the continuous-time LIM framework.
    
    \item \textbf{Distinct Algorithms from LIM.}\ Unlike the Gaussian case, where DDPM and the score-based SDE formulation are two sides of the same coin (\cite{song2021scorebased}), we show that DLPM and LIM result in distinct training algorithms and backward processes. Importantly, as these heavy-tailed distributions admit no variance, we cannot simply use the square $\ell_2$ loss, problem which DLPM carefully addresses. 
    
    \item \textbf{Improved performances.}\ Thanks to our conditionally Gaussian representation strategy, our networks are only modeling conditional densities given the heavy-tailed variables. Hence the network is not learning a heavy-tailed distribution directly, but a light-tailed conditional distribution, intuitively an easier task. These differences work in favor of DLPM across several performance aspects, particularly where heavy-tailed noise injections are already known to offer advantages. Our experiments show that DLPM provides (i) better coverage of the tails of the data distribution, (ii) improved generation of unbalanced datasets, and (iii) faster computation times, requiring fewer backward steps.

\end{enumerate}

\section{Background on $\alpha$-stable distributions}
\label{subsec:heavy-tailed}







\label{subsec:alpha_stable}
The family of $\alpha$-stable distributions appears as the limiting distribution in the generalized central limit theorem (\cite{gnedenko1968limit}). In the one dimensional case, an $\alpha$-stable distributed random variable $X$ is defined through its characteristic function (\cite{samorodnitsky1996stable}): for $u \in \rset$ 
\begin{equation}
  \label{eq:def_alpha_stable_chara}
  \expe{ \rme^{\rmi u X}}  = 
\exp\defEnsLigne{ \rmi u \mu - |\sigma u|^{\alpha}(1 - \rmi \varphi \beta\, \textrm{sgn}(u))} \eqsp, \quad 
\textrm{where} \ \varphi = \begin{cases}
    \tan (\uppi \alpha/2) &\textrm{if } \alpha \neq 1 \\ 
    -(2/\uppi)\log |\sigma u |&\textrm{otherwise \eqsp.}
\end{cases}
\end{equation}
Here, (i) $\mu \in \rset$ is the location parameter (ii) $\alpha \in \ocint{0,2}$ is the tail index (iii) $\sigma >0$ the scale parameter (iv) $\beta \in [-1, 1]$ determines the right- or left-skewness, and $\textrm{sgn}$ is the sign
function. We denote the $\alpha$-stable distribution by $\stable_{\alpha, \beta}(\mu, \sigma)$.




In the case where $\alpha <1$ and $\beta =1$, the support of the distribution becomes the positive real line (i.e., the random variable is positive), hence we call this distribution `positive stable'. On the other hand, in the case where $ \beta =0$, the distribution
$\stable_{\alpha, 0}(\mu, \sigma)$ is symmetric around $\mu$, and denoted by
$\stable_{\alpha}(\mu, \sigma)$. Furthermore, in the case $\alpha = 2$, the distribution reduces to a Gaussian
$\stable_{\alpha}(\mu, \sigma) = \normal(\mu, 2\sigma^2)$, hence it is light-tailed. 
However, whenever $\alpha < 2$,
$\stable_{\alpha}(\mu, \sigma)$ has heavy tails, \ie, the decay rate of its tail distribution satisfies
$\PP(|X| > r) \sim r^{-\alpha}$ as $r \to  \infty$ (see \cite[Theorem 1.2]{nolan2020univariate}). This implies that $\mathbb{E}[|X|^p ] <\infty$ if and only if $p<\alpha<2$. 

As opposed to Gaussians, there are multiple ways of extending the $\alpha$-stable distributions to the multivariate setting. In this paper, we will be interested in two major cases: (i) the isotropic (also called rotationally invariant)\footnote{The noise distribution used in LIM is the isotropic $\alpha$-stable distribution. Note that our framework allows for different types of $\alpha$-stable distributions by following a single mathematical recipe. } and (ii) the non-isotropic with independent components. 
These distributions are also defined through their respective characteristic functions. The random variable $X \in \mathbb{R}^d$ is isotropic $\alpha$-stable if its characteristic function is given by: for all $u \in \mathbb{R}^d$, $\mE[\exp(iu^\top X)] = \exp(i\mu^\top u - \sigma^\alpha \| u\|^\alpha)$, where $\mu \in \rset^d$ is the location parameter and $\sigma \Idd$ plays the role of a covariance matrix\footnote{in this isotropic case the components are not independent even though the covariance matrix is diagonal.}.  We denote it by $X \sim \stable^{\text{i}}_{\alpha}(\mu, \sigma\Idd)$.
Similarly, $X$ follows the non-isotropic $\alpha$-stable distribution $\stable^{\text{n}}_{\alpha}(\mu, \sigma\Idd)$, if for any $u \in\rset^d$, $\mE[\exp(iu^\top X)] = \exp(i\mu^\top u - \sigma^\alpha \sum_{i=1}^d |u_i|^\alpha)$. While both of these distributions share similar characteristics, such as having power-law tails with the same exponent, the components of the isotropic case are dependent, which results in a significant difference compared to the non-isotropic case, which has independent coordinates. When $\alpha=2$ both options coincide with a multivariate Gaussian.


The following property of stable distributions 
will form the backbone of our algorithm.
\begin{theorem}[See {\cite[Equation 2.5.3]{samorodnitsky1996stable}}]
    \label{thm:main_stable}

Let $\alpha <2$, and let $X \sim \stable^{\text{i}}_{\alpha}(\mu, \sigma\Idd)$. Then, $ X \eqd\mu +  \sigma A^{1/2} {G}$,
    where $\eqd$ denotes equality in distribution, $A \sim \stable_{\alpha/2, 1}(0, c_A)$ is a one-dimensional positive stable random variable with $c_A := \cos^{2/\alpha}(\uppi\alpha/4)$, and ${G} \sim \normal(0,\Idd)$.
\end{theorem}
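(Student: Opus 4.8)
The plan is to read the claim as a Gaussian scale-mixture (a \emph{sub-Gaussian}) representation of the isotropic $\alpha$-stable law and to verify it at the level of characteristic functions, concluding by their uniqueness. Since the location $\mu$ only contributes the deterministic prefactor $\rme^{\rmi\mu^\top u}$ and $\sigma>0$ is an overall scale, the task reduces to identifying the law of $A^{1/2}G$ through $u\mapsto\mE[\rme^{\rmi u^\top A^{1/2}G}]$, $u\in\rsetd$.

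The first step is to condition on $A$. Since $A\sim\stable_{\alpha/2,1}(0,c_A)$ is a positive stable variable (here $\alpha/2\in(0,1)$ because $\alpha<2$), it is almost surely positive, so $A^{1/2}$ is well defined; as $A$ and $G$ are independent, conditionally on $A$ the vector $\mu+\sigma A^{1/2}G$ is Gaussian with mean $\mu$ and covariance proportional to $A\,\Idd$, so that
\[
  \CPE{\rme^{\rmi u^\top(\mu+\sigma A^{1/2}G)}}{A}
  \;=\; \rme^{\rmi\mu^\top u}\,\exp\!\big(-\tfrac12\,\sigma^2 A\,\|u\|^2\big)
\]
(the precise constant being fixed by the normalization chosen for $\normal(0,\Idd)$). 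It is essential that the \emph{same} scalar $A$ multiplies every coordinate of $G$: this is exactly what makes the resulting vector isotropic rather than one with independent coordinates. Taking expectations and using Fubini (the integrand is bounded), everything collapses onto a single one-dimensional object, the Laplace transform $s\mapsto\mE[\rme^{-sA}]$ of $A$ evaluated at $s=\tfrac12\sigma^2\|u\|^2$:
\[
  \mE\big[\rme^{\rmi u^\top(\mu+\sigma A^{1/2}G)}\big]
  \;=\; \rme^{\rmi\mu^\top u}\;\mE\big[\rme^{-\sigma^2\|u\|^2 A/2}\big].
\]

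The key input is the classical closed form for the Laplace transform of a positive $(\alpha/2)$-stable variable: for $A\sim\stable_{\alpha/2,1}(0,c_A)$ one has $\mE[\rme^{-sA}]=\exp\!\big(-c_A^{\alpha/2}\,s^{\alpha/2}/\cos(\uppi\alpha/4)\big)$ for $s\ge 0$, so the choice $c_A=\cos^{2/\alpha}(\uppi\alpha/4)$ makes the cosine factors cancel and gives the clean identity $\mE[\rme^{-sA}]=\rme^{-s^{\alpha/2}}$. I would either invoke this from \cite{samorodnitsky1996stable} or derive it by analytically continuing the characteristic function \eqref{eq:def_alpha_stable_chara} of $\stable_{\alpha/2,1}(0,c_A)$ to the positive imaginary axis $u=\rmi\gamma$, $\gamma>0$: the map $\gamma\mapsto\mE[\rme^{-\gamma A}]$ is bounded, analytic on $\{\operatorname{Re}\gamma>0\}$, and agrees there with the continued characteristic function, and the computation hinges on the identity $1-\rmi\tan(\uppi\alpha/4)=\sec(\uppi\alpha/4)\,\rme^{-\rmi\uppi\alpha/4}$ — which is precisely where the constant $\cos^{2/\alpha}(\uppi\alpha/4)$ is forced. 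Substituting $\mE[\rme^{-sA}]=\rme^{-s^{\alpha/2}}$ back into the display recovers the characteristic function $\rme^{\rmi\mu^\top u-\sigma^\alpha\|u\|^\alpha}$ of $\stable^{\text{i}}_{\alpha}(\mu,\sigma\Idd)$ (up to the normalization of $\normal(0,\Idd)$), and uniqueness of characteristic functions finishes the proof. I expect this positive-stable Laplace transform to be the only genuine obstacle: it is classical but not entirely \emph{elementary}, and a self-contained derivation requires care in (i) justifying the analytic continuation and (ii) tracking the branch of $z\mapsto z^{\alpha/2}$ so that the multiplicative constant comes out to exactly $c_A$; the surrounding probabilistic steps (conditioning, Fubini, uniqueness of characteristic functions) are entirely routine.
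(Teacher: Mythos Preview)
The paper does not prove this theorem at all: it is quoted verbatim as a known result with the reference \cite[Equation 2.5.3]{samorodnitsky1996stable}, so there is no ``paper's proof'' to compare against. Your argument via conditioning on $A$, reducing to the Laplace transform of the positive $(\alpha/2)$-stable subordinator, and concluding by uniqueness of characteristic functions is exactly the classical proof found in the cited source, and it is correct in structure.

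One point worth tightening: you hedge twice with ``up to the normalization of $\normal(0,\Idd)$'', and indeed if $G\sim\normal(0,\Idd)$ has unit variance per coordinate then $\mE[\rme^{-sA}]=\rme^{-s^{\alpha/2}}$ at $s=\tfrac12\sigma^2\|u\|^2$ yields $\exp(-\sigma^\alpha\|u\|^\alpha/2^{\alpha/2})$, i.e.\ the scale comes out as $\sigma/\sqrt{2}$ rather than $\sigma$. This is a genuine normalization mismatch between the convention $\stable_2(\mu,\sigma)=\normal(\mu,2\sigma^2)$ used in the paper (and in \cite{samorodnitsky1996stable}) and the statement $G\sim\normal(0,\Idd)$; it does not affect the method, but in a clean write-up you should either take $G$ with variance $2$ per coordinate or absorb the $\sqrt{2}$ into the scale and state precisely which.
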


This theorem shows that a zero-mean, unit-scale isotropic stable random-vector can be equivalently written as the product of a \emph{one dimensional} positive stable random variable and a standard Gaussian random vector. This fundamental property will have a significant impact in terms of incorporating $\alpha$-stable noise in DDPMs in a simple way as, \emph{conditioned on} $A$, the distribution of $X$ is just a Gaussian. 
We conclude this section by noting that a similar decomposition for the non-isotropic case: if $X \sim \stable^{\text{n}}_{\alpha}(\mu, \sigma\Idd) $, then $X \eqd \mu+\sigma \, \mathbf{A}^{1/2} \odot {G}$, where $\odot$ is the component-wise multiplication and  $\mathbf{A}^{1/2} = \{A_i^{1/2}\}_{i=1}^d \in \rset^d$ is a vector with \iid~components with  $A_i \sim \stable_{\alpha/2,1}(0, c_A )$.

\section{Denoising L\'{e}vy Probabilistic Models}
\label{sec:dlpm}

In this section, we develop our algorithm by following a similar route to the development on DDPM: we identify the forward and backward processes associated with \eqref{eq:dlpm_forward} and construct a variational approximation for the backward chain for sampling. 
Here, we focus on isotropic $\alpha$-stable noise; however,  adaptation to non-isotropic $\alpha$-stable noise is straightforward, as all our derivations rely on Theorem~\ref{thm:main_stable}, hence it is omitted.


\subsection{Markovian forward process}
Recall that DLPM is based on the following recursion (restatement of \eqref{eq:dlpm_forward}):
\begin{equation}
\label{eqn:dlpm_recall}
   X_0 \sim \pstar \eqsp, \quad \text{ and } \quad X_{t} = \gstep[t] X_{t-1} + \sstep[t] \epsilon_{t}^{(\alpha)},
\end{equation}
where $\pstar$ is the data distribution and $\{\epsilon_t^{(\alpha)}\}_{t=1}^T$ are independent and distributed as $\stable^{\text{i}}_{\alpha}(0, \Idd)$. Thanks to the `stability' property of $\alpha$-stable distributions, \ie, the sum of two $\alpha$-stable random variables is still $\alpha$-stable (see Appendix~\ref{app:alpha_stable_noise} for details), we can explicitly characterize the conditional distribution of $X_t$ given $X_0$. Setting for any $t \in \{1,\ldots,T\}$,  $\gall := \prod_{i = 1}^t \gstep$, and $\sall := (\sum_{i = 1}^t (\gall[t]\sstep[i]/\gall[i])^{\alpha})^{1/\alpha}$, we show in Proposition~\ref{app:gall_sall} that:
\begin{equation}
    X_{t} \eqd \gall X_0 + \sall \varepsilon_t \eqsp,
\end{equation}
where $\varepsilon_t \sim \stable^{\text{i}}_{\alpha}(0, \Idd)$. 
Similarly to DDPM, the noising schedule parameters $\{(\gamma_t, \sigma_t)\}_{t=1}^{T}$ and the horizon $T$ are set so that the final distribution of $X_T$ is approximately equal to $\stable^{\text{i}}_{\alpha}(0, \sall\Idd)$, choosing either (i) $\gall \to 0$ as $t$ increases, or (ii) $\gall = 1$ and $\sall$ increasing with $t$.
Following the terminology used in DDPM\footnote{In the DDPM literature, these noising schedules are referred to as variance preserving, or variance exploding. We use the term `scale' instead of the variance here, since 
the variance does not exist when $\alpha <2$.}, we refer to schedule (i) as scale preserving and schedule (ii) as scale exploding.

\subsection{Generative process}
Once the forward process is run for large enough time-steps $T$, it is clear that $X_T$ will be approximately stable-distributed. Hence, to go back to the data distribution $p_\star$, we now need to time-revert the forward process so that the reversed process can take some $\alpha$-stable noise and generate a sample from $p_\star$, which is our ultimate goal. More precisely, we aim to find a backward process associated to the Markov chain $\{X_t\}_{t=0}^T$, \ie, a Markov chain $\{\iX_t\}_{t=0}^T$ such that the two processes $\{\iX_{T-t}\}_{t=0}^T$ and $\{X_t\}_{t=0}^T$ have the same distributions. 
Since, by \cite[Theorem 1.9]{Nolan2010StableDistributions}, any non-degenerate $\alpha$-stable distribution has a smooth density with respect to the Lebesgue measure, the Markov chain \eqref{eqn:dlpm_recall} admits a transition density denoted by $\kArg{t|t-1}$.
In addition, $X_t$ admits a density as well, denoted by $\pt$.  Then, it can be easily verified that a Markov process starting from $p_{T}^{(\alpha)}$ and with transition densities, for $t \in \{0,\ldots,T-1\}$, $\ikArg{t-1|t}(x_{t-1}|x_{t}) \propto p_{t-1}^{(\alpha)}(x_t) \kArg{t | t-1}(x_{t},x_{t-1})$ for any $x_{t-1},x_{t}$, is a backward process associated with $\{X_t\}_{t=0}^T$, where $\propto$ denotes equality up to a normalization constant. As in the case of DDPM, this backward transition densities are unfortunately intractable, hence, we will develop a variational scheme for their approximation.

\subsubsection{Approximation of the backward transition densities in DDPM}

To ease the introduction of our approach, let us first recall the strategy 
taken in DDPM, which approximates the backward kernels by relying on a variational approximation for $\ikArg{0:T}(x_{0:T}) := \pArg{T}(x_T)\prod_{t=T}^{1} \ikArg{t-1|t}(x_{t-1}|x_{t})$, where $\alpha =2$ and  $x_{0:T} := (x_0,\ldots,x_T) \in \rset^{d\times (T+1)}$. 
More precisely, the goal is to find the `closest' distribution to $\ikArg{0:T}$ in a family of distributions $\{\qArg{0:T} \,: \, \theta \in\Theta\}$, indexed by a parameter $\theta$ taking values 
in some parameter space $\Theta$ (typically taken as a neural network). 

The variational family is assumed to have the same decomposition as $\ikArg{0:T}(x_{0:T})$, thus such that $\qArg{0:T}(x_{0:T}) := \qArg{T}(x_T) \prod_{t=T}^{1} \qArg{t-1|t}(x_{t-1}|x_{t})$, where $\qArg{T}$ is chosen to be the density of $\normal(0,\sall^2 \Idd)$ as an approximation of $\pArg{T}$.
Then, $\theta$ is obtained by minimizing the following objective function \cite[Equation 5]{ho2020denoising}:
\begin{align}
\label{eqn:loss_ddpm}
\lossDDPM(\theta) := \sum_{t=2}^{T} \lossDDPM_{t-1}(\theta) \quad \text{with} \quad  \lossDDPM_{t-1}(\theta) = \PE[\KL(\kArg{t-1|0,t}(\cdot|X_0,X_{t}) \| \qArg{t-1|t}(\cdot|X_{t}))] \eqsp,  
\end{align}
where $\KL$ denotes the Kullback-Leibler divergence and $\kArg{t-1|0,t}$ denotes the conditional density of $X_{t-1}$ given $X_0$ and $X_{t}$. As $\alpha =2$ in this case, $\kArg{t-1|0,t}$ is Gaussian \cite[Equation 6,7]{ho2020denoising}, motivating the choice of Gaussian densities $\qArg{t-1|t}(x_{t-1} | x_{t})$ as elements of the variational family at hand, since one obtains a closed-form formula for the $\KL$ terms, \ie,
\begin{equation}
\label{eq:gaussian_variational}
    \qArg{t-1|t}(x_{t-1} | x_{t}) = \upphi_d \left(x_{t-1} | \meanq_{t-1}(x_{t}), \Sigmaq_{t-1}(x_{t}) \right) \eqsp,
  \end{equation}
  where $(x,\mtt,\Sigma) \mapsto  \upphi_d(x|\mtt,\Sigma)$ is the density of the $d$-dimensional Gaussian distribution with mean $\mtt$ and covariance matrix $\Sigma$, and $\meanq_{t-1},\Sigmaq_{t-1}$ are functions of $x_{t}$ parameterized by $\theta$.
This approach relies on the fact that $\kArg{t-1| 0, t}$ is analytically tractable. Unfortunately, when $\alpha < 2$, it is not the case anymore. We now expose our methodology to address this limitation. 

  %

  

\subsubsection{A data augmentation approach} \label{data_augmentation_approach}
To obtain a tractable objective function for learning a variational approximation of the backward transition densities, we rely on a data augmentation approach, which is a classical MCMC technique (see \cite{brooks2011handbook}, Chapter 10). Consider the Markov chain $Y_0 \sim \pstar \eqsp$, and for $t\in\{1,\ldots,T\} $, 
\begin{equation}
  Y_{t}  = \gamma_t Y_{t-1} + \sigma_t A_{t}^{1/2} G_{t}  \eqsp,
    \label{eq:forward_t_with_a}
\end{equation}
where $\listTG$ and $\listTA$ are independent random variables, distributed according to $G_t \sim \gauss(0,\Idd)$, and $A_t \sim \stable_{\alpha/2,1}(0, c_A )$ with $c_A =  \cos^{2/\alpha}(\uppi\alpha/4)$.
From Theorem~\ref{thm:main_stable}, $\{Y_t\}_{t=0}^T$ is a Markov chain that admits the same distribution as $\{X_t\}_{t=0}^T$. 
As a result, \emph{conditioned on} $\{A_t\}_{t=1}^T$ and $Y_0$, $\{Y_t\}_{t=1}^T$ is a Markov chain with Gaussian transition densities:
\begin{equation}
    \kArg{1:T|0, a}(y_{1:T} |y_0,a_{1:T}) = \prod\nolimits_{t=1}^T  \upphi_d(y_t | \gamma_t y_{t-1}, \sigma^2_ta_t)\eqsp.
\end{equation}
Again, 
we can explicitly characterize the conditional distribution of $Y_t$ given $Y_0, \listTA$. Setting for any $t \in \{1,\ldots,T\}$,  $\Salla := \sum_{k = 1}^t (\gall[t]a_k^{1/2}\sstep[k] / \gall[k])^{2}$, we show in \Cref{app:gall_Sall} that:
\begin{equation}
\label{eq:y_t_y_0_a}
    Y_{t} \eqd \gall Y_0 + \SallA \varepsilon_t \eqsp,
\end{equation}
where $\varepsilon_t \sim \stable^{\text{i}}_{\alpha}(0, \Idd)$. 
We propose approximating the backward process associated to $\{Y_t\}_{t=0}^T$, \emph{given} $\{A_t\}_{t=1}^T$, adapting the DDPM
approach.
This time, for the backward process, we use the conditional density of $Y_{t-1}$ given $Y_0, Y_{t}$ and $A_{1:T}$: 
\begin{equation}
    \ikArg{1:T | 0, a}(y_{1:T} | y_0, a_{1:T}) := \pArg{T}(y_T)\prod\nolimits_{t=T}^{2} \kArg{t-1|0,t, a}(y_{t-1}|y_{t}, y_0, a_{1:t}),
\end{equation}
where $\kArg{t-1|0,t, a}(\cdot|y_{t}, y_0, a_{1:t})$ is now the tractable density of a Gaussian distribution:

\begin{proposition}
\label{prop_backward}
The density of the backward process associated to $\listT{Y_t}$ given $Y_0, \listTA$, denoted by
$\kArg{t-1|t,0,a}(\cdot|y_{t}, y_0, a_{1:t})$, is the density of a Gaussian distribution $\normal(\meank_{t-1}, \Sigmak_{t-1})$, with mean $\meank_{t-1}$ and variance $\Sigmak_{t-1}$ equal to:
\begin{equation}
    \meank_{t-1}(y_t, y_0, a_{1:t}) = \frac{1}{\gstep}\left( y_{t} - \Gammaa \sall \epsilon_{t}(y_t, y_0) \right)\eqsp, \ 
    \Sigmak_{t-1}(a_{1:t}) = \Gammaa \Salla[t-1]\eqsp,
\end{equation}
where $\Sall$ is as in \eqref{eq:y_t_y_0_a}, 
$\Gamma_t = 1 - \gstep \Sall[t-1] / \Sall[t]$, and $\epsilon_{t}(y_t, y_0) = (y_{t} - \gall y_0)  / \sall$.
\end{proposition}
See \Cref{app:backward_process} for the derivations required for \Cref{prop_backward}. 
For our generative model, we reconsider the family of Gaussian variational approximation introduced in \eqref{eq:gaussian_variational}, modified to account for an iid.\ sequence $\{A_t\}_{t=1}^T$: $\qArg{0:T}(y_{0:T}) := \int \qArg{T}(y_T) \prod_{t=T}^{1} \qArg{t-1|t, a}(x_{t-1}|x_{t}, a_{1:t}) \uppsia_{1:T}(a_{1:T}) \rmd a_{1:T}$, where $\uppsia_{1:T}(a_{1:T}) = \prod_{t=1}^T \uppsia(a_t)$ and $\uppsia$ denotes the density of $\stable_{\alpha/2,1}(0, c_A )$, and
\begin{equation}
\qArg{t-1|t,a}(y_{t-1} | y_{t}, a_{1:t}) = \upphi_d(y_{t-1} | \meanq_{t-1}(y_{t}, a_{1:t}), \Sigmaq_{t-1}(y_{t}, a_{1:t})) \eqsp.
\end{equation}

\subsection{Variational inference objective}

We consider the following loss function:
\begin{align}
\label{eq:loss_function}
     \lossDLPMD(\theta) &:=  \mE \left[\sum_{t=2}^{T} \left(\lossDLPMD_{t-1}(\theta, A_{1:T})\right)^{r}\right], \qquad \text{where} \\
    \lossDLPMD_{t-1}(\theta, A_{1:t}) &:= \mE \left[ \KL\left( \kArg{t-1 | t,0,a} (\cdot | Y_t, Y_0,  A_{1:t}) \ \| \ \qArg{t-1|t,a} (\cdot | Y_t,  A_{1:t})\right) \Bigl|  A_{1:t} \right] \eqsp,
\end{align}
and $r>0$, $\kArg{t-1|0,t, A}$ denotes the conditional density of $Y_{t-1}$ given $Y_0, Y_{t}$ and $A_{1:T}$. In order to ensure that the expectations with respect to $A_{1:T}$ are finite, we need to choose $r < \frac{\alpha}{2}$ when $\alpha \in (1,2)$. For simplicity, in the rest of the paper, we will use $r=\frac{1}{2}$. 

A comparison with DDPM's loss function $\lossDDPM$ (cf.\ \eqref{eqn:loss_ddpm}) immediately illustrates that, thanks to Theorem~\ref{thm:main_stable}, $\lossDLPMD$ is almost identical to $\lossDDPM$ up to taking expectations with respect to one-dimensional random variables and the taking square-root of the summands\footnote{We note that, with the choices of $\alpha=2$ and $r=1$, we exactly recover DDPM.}.
We show in \Cref{app:principled_approach} that, alike DDPM, our loss is obtained from a $\KL$ minimization principle, serving as an upper bound to:
\begin{equation}
\mE \left[ \left(\KL(\kArg{0|a}(\cdot |A_{1:T}) | \qArg{0|a}(\cdot |A_{1:T}))\right)^{r}\right]\eqsp,
\end{equation}
when $r \in (0,1]$.
This additionally shows how any zero of the loss corresponds to a perfect generative model, while maintaining a similar objective function. The crucial property of \eqref{eq:loss_function} is that, since both $\kArg{t-1 | t,0,a}$ and $\qArg{t-1|t,a}$ are Gaussian (thanks to the conditioning), the $\KL$ term admits a closed-form analytical formula, as in the case of DDPM. 



From a practical perspective, \eqref{eq:loss_function} suggests that we can use the same software architecture as for DDPM, with a slight modification to compute the outer expectation, which can be simply estimated by a Monte Carlo, or median-of-means procedure (\cite{lugosi2019mean}). In order to obtain a final denoising training loss, we provide three design choices, for which the full details are given in \Cref{app:dlpm_training_loss}. They are similar to what is classically done in diffusion models (\cite{ho2020denoising, nichol2021improved, karras2022elucidating}):

\hyperlink{D1}{D1.} We set a fixed variance $\Sigmaq_{t} = \Sigmak_t$ for the reverse process.

\hyperlink{D2}{D2.} We reparameterize the model to predict the value of $\epsilon_t(y_t, y_0)$ with a network $\epsilonq_t$, setting
\begin{equation} \label{app:eq:m_to_eps}
    \meanq_{t-1}(Y_t, A_{1:t}) = \frac{1}{\gstep[t]}\left(Y_{t} - \sall[t] \GammaA \epsilonq_{t}(Y_t, A_{1:t})\right)\eqsp.
\end{equation}
Moreover, we drop the dependency of $\epsilonq_t$ on $\listTA$, making $\epsilonq_t$ only a function of $Y_t$. This enables re-using classical diffusion models network architectures.

\hyperlink{D3}{D3.} Assuming \hyperlink{D1}{D1}, \hyperlink{D2}{D2}, we obtain $\lossDLPMD_{t-1}(\theta)
    =  \mE \left[ \lambda_{t, A_{1:t}}^{2} \| \epsilonq_{t}(Y_t, A_{1:t}) -\epsilon_t(Y_t, Y_0) \|^{2} \right]$
where $\lambda_{t, a_{1:t}} = {\Gammaa \sall[t]}/{2\gstep[t]\Sigmak_t}$, and $\epsilon_{t}(y_t, y_0)= {(y_{t} - \gall y_0)} / {\sall}$.
We then fix $\lambda_{t, a_{1:t}} =1$.

\begin{proposition}[Simplified denoising loss]
\label{prop:denoising}
With the design choices \hyperlink{D1}{D1}, \hyperlink{D2}{D2}, \hyperlink{D3}{D3}, we obtain the simplified denoising objective function:
\begin{equation}
\label{eq:simple_loss}
 \lossDLPMSimple(\theta) = \sum_{t=1}^{T}\mE \left[ \mE \left( \| \epsilonq_t(Y_{t}) - \epsilon_{t}(Y_t, Y_0) \|^{2} \ \big| \ A_{1:t} \right)^{1/2}\right]\eqsp,
\end{equation}
where the model $\epsilonq_t$ is designed to fit the noise $\epsilon_{t}(Y_t, Y_0) = {(Y_{t} - \gall Y_0)} / {\sall}$ added at time-step $t$, and $\gall, \sall[t]$ are as given in \Cref{app:gall_sall}. Thus the model is not learning a heavy-tailed distribution directly, but the light-tailed conditional distribution. 
\end{proposition}

See \Cref{app:dlpm:loss:intuitive} for the derivations required for \Cref{prop:denoising}. 
Finally, in \Cref{app:less_rv}, we show that on some conditions, satisfied under design choices \hyperlink{D1}{D1}, \hyperlink{D2}{D2}, \hyperlink{D3}{D3}, the expectation of each term in $\lossDLPMD$ can be rewritten as an expectation with respect to only one univariate random variable (as opposed to $t$ variables, i.e., $A_{1:t}$), reducing the additional computational burden of accommodating heavy tails. As we will estimate the expectations by Monte Carlo averaging, reducing the number of random variables in the expectation is equally important to reduce the error in the estimation. The resulting loss is given in \Cref{app:prop_training_loss_simplified_less}.

\subsection{Deterministic sampling with DLIM}

Using the same techniques as in denoising diffusion implicit models (DDIM, \cite{DDIM}), we can recover a deterministic sampling scheme. We call this algorithmic extension Denoising L\'{e}vy Implicit Models (DLIM), which details are given in \Cref{app:dlim}. We recapitulate the resulting sampling and training algorithms in \Cref{app:sec_algo}. In this context and in the case of the non-isotropic Cauchy distribution $(\alpha = 1)$, it is possible to bypass the data augmentation technique, since a closed-form $\KL$ divergence between two such distributions exists. Full details about these Cauchy DLIM are given in \Cref{app:cauchy_dlim}, but we leave their experimental exploration to future work. 


\subsection{Comparing DLPM to LIM} 
The objective function \eqref{eq:loss_function} is slightly different from the one obtained in the continuous setting of LIM by \cite{NEURIPS2023_score_based_levy_lim}. The training equations are very similar, and can be reformulated to involve a denoising loss (see \Cref{app:comparing_lim_dlpm}, and \eqref{app:eq:lim_vs_dlpm}):
\begin{equation}
    \mathcal{L}_{t-1} : \theta \mapsto \mE \left(\| \epsilonq_t(X_t) - \epsilon_t(X_t, X_0) \|_p^{\eta} \right).
\end{equation}
As a refresher, in the case of DDPM, one sets $p = 2, \eta = 2$.
In the case of DLPM, our discussion leads us to the choice $p = 2, \eta = 1$ (see \eqref{eq:simple_loss}). In the case of LIM, the theory relies on a squared $\ell_2$ loss, setting $p = 2, \eta = 2$ in order to properly derive the loss and effectively approximate the true score of the data, at various noisescales. One must therefore make the assumption that $\mathcal{L}_{t-1}$ is not infinite for each parameter $\theta$ considered, which may not hold since $\epsilon_t(X_t, X_0)$ is $\alpha$-stable distributed and admits no variance. In the experiments for LIM, one is forced to revert to an $\ell_1$ loss, by setting $\eta=1, p = 1$, to obtain a stable training, potentially indicating that the loss is infinite.

Additionally, DLPM and LIM yield different backward processes, which in turn lead to distinct sampling algorithms -- cf.\ Table~\ref{tab:samplers} in the Appendix. Finally, the LIM framework can only accommodate isotropic noise. We refer to \Cref{app:comparing_lim_dlpm} for a detailed comparison between DLPM and LIM.

\section{Experiments}
After the groundwork of the previous sections, we design experiments to demonstrate the practical strengths of our DLPM approach as compared to LIM, apart from its technical simplicity. We recall that setting $\alpha=2$ simply reverts LIM to classical diffusion, and DLPM reverts to DDPM, appart from the square-root in the loss function.
As specified in (\cite{NEURIPS2023_score_based_levy_lim}, Appendix G), LIM relies on gradient and noise clipping, which introduces extra hyperparameters that must be fine-tuned for each dataset. In the experiments, we use these clipping parameters only when specified.
The experimental details relative to this section are available in Appendix~\ref{app:experiment}. 

As our loss function \eqref{eq:loss_function} involves an expectation with respect to $A_{1:T}$, we propose estimating it by using the \emph{median-of-means} estimator, which is known to have better performance for heavy-tailed distributions (\cite{lugosi2019mean}). For an integer $M$, this approach requires sampling $M^2$ many $A_{1:T}$ terms, then split them into $M$ groups of size $M$. To approximate the expectation, we take the sample mean of each group, and finally take the median of the computed $M$ sample means.
In our experiments, we explore $M=1$  (approximating the expectation with only one sample), denoted simply by DLPM, and $M=5$, denoted by \dlpmm{}. Similarly, we denote DLIM and DLIM$_{5}$ for the corresponding deterministic sampling schemes. 

Finally, we consider the range $1.5 \leq \alpha \leq 2.0$. 
In our experiments on images, we make use of the dataset CIFAR10$_-$LT (long tail), that has been introduced in \cite{NEURIPS2023_score_based_levy_lim} as an unbalanced modification of the CIFAR10 dataset.

\subsection{Data coverage and mode collapse in two-dimensional data}
Before progressing to higher dimensional problems, we start with easily controlled and visualized two-dimensional datasets, in order to validate the competitiveness of our method in the contexts where heavy-tailed diffusions are of interest. In particular, we consider heavy-tailed and unbalanced multi-modal datasets. See \Cref{app:exp:2d} for details about the experimental setup in these settings.

\begin{wrapfigure}{r}{0.4\textwidth}
    \centering
    \caption{DLPM with $\alpha=1.7$ and $\alpha=2.$. The lighter-tailed process fails to capture the distribution's tail. }
        \begin{minipage}[l]{0.4\textwidth}
            \centering
            \begin{minipage}[r]{0.49\textwidth}
                \centering
            \includegraphics[width=0.99\linewidth]{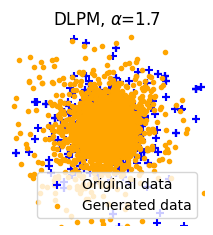}
            \end{minipage}
            \begin{minipage}[l]{0.49\textwidth}
                \centering
        \includegraphics[width=0.99\linewidth]{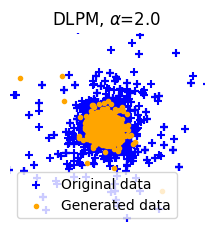}
            \end{minipage}
    \end{minipage}
  \label{fig:heavy_tailed_generation}
\end{wrapfigure}
\paragraph{Enhancing data coverage: capturing the tail of the distribution.} 
We set $d=2$ and generate data points distributed as $\stable_{\alpha}^{\text{i}}(0, 0.05\cdot\Idd)$,
with $\alpha=1.7$. Our aim is to test the ability of each method to cover the dataset correctly; the main challenge is to correctly capture the tails.
As we can visually observe in \Cref{fig:heavy_tailed_generation}, the backward diffusion process in the Gaussian case cannot produce truly heavy-tailed data, mainly stemming from the fact that its variance is always finite. As expected, we see improvements when using noise with $\alpha < 2$.

To quantify this behaviour, we utilize the mean square logarithmic error (MSLE) metric, which compares the relative error between the higher quantiles (\ie, the quantiles corresponding to the tail region that has probability $1-\xi$) of the true and the generated data distribution (see \Cref{app:metrics} for detailed definitions). We observe in \Cref{tab:msle} how, as $\alpha$ gets smaller, one gets better tail approximation. 
Furthermore, DLPM consistently outperforms LIM, indicating that the generation process benefits from the heavy-tailed denoising formulation, rather than the continuous-time one, in this setting.


\begin{figure}[ht]
\centering
\vspace{0pt}
\scalebox{0.85}{
\begin{tabular}{lcccccc}
\toprule
Method & {$\alpha=1.5$} & {$\alpha=1.6$} & $\alpha=1.7$ & $\alpha=1.8$ & $\alpha=1.9$ & $\alpha=2.0$ \\
\midrule
DLPM & {\textbf{0.160} ± 0.128} & {\textbf{0.081} ± 0.078} & \textbf{0.071} ± 0.028 & \textbf{0.099} ± 0.044 & \textbf{0.132} ± 0.101 & 0.798 ± 0.601 \\
{DDPM} & - & - & - & - & - & {0.528 ± 0.400} \\ 
           &    &    &    &    &    & { \hfill \footnotesize{\textit{1.0e-1}}} \\ 
LIM & {0.743 ± 0.290} & {0.497 ± 0.311} & 0.267 ± 0.077 & 0.653 ± 0.413 & 2.444 ± 1.067 & 1.239 ± 0.240 \\
    & { \hfill \footnotesize{\textit{1.0e-08}}} & { \hfill \footnotesize{\textit{8.6e-06}}} & { \hfill \footnotesize{\textit{1.3e-10}}} & { \hfill \footnotesize{\textit{8.8e-06}}} & { \hfill \footnotesize{\textit{7.9e-09}}} & { \hfill \footnotesize{\textit{5.0e-3}}} \\
\bottomrule
\end{tabular}
}
\captionof{table}{$\text{MSLE}_{\xi = 0.95} \downarrow$ averaged over 20 runs. Figures below scores corresponds to $p$-values from Welch's $t$-test (assuming unequal variances), comparing the mean of DLPM with the given method.}
\vspace{0pt}
\label{tab:msle}
\end{figure}


\begin{wrapfigure}{r}{0.2\textwidth}
\centering
\includegraphics[width=0.95\linewidth]{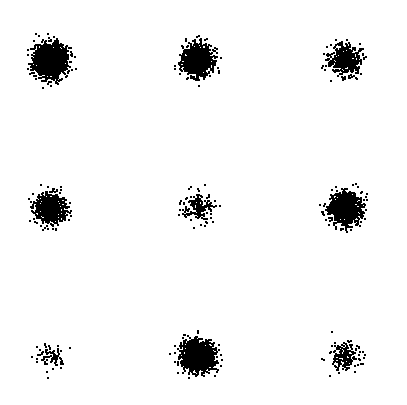}
\captionof{figure}{Gaussian grid}
\label{fig:gmm_grid}
\end{wrapfigure}
\paragraph{Enhancing data coverage: addressing mode collapse.}
To assess the robustness of DLPM to mode collapse, we consider an unbalanced mixture of nine Gaussian distributions. 
We set their standard deviation to $0.05$ and arrange them in a grid-like pattern with equal spacing, in the square $[-1, 1]^2$. 
Their mixture weights range from $.01$ to $.3$
\footnote{The exact mixture weights are $\{.01,.02, .02, .05, .1, .15, .15, .2, .3\}$.}.
We use the $\fpr$ score (\ie, the harmonic mean of precision and recall, see \Cref{app:metrics}) to assess, in a single summary statistic, the quality and diversity in the generated data. 
As shown in \Cref{tab:gmm_grid}, we are able to achieve improved scores by choosing a tail index $\alpha < 2$ with DLPM.
This is not necessarily the case for LIM, which is consistently outperformed. 
Finally, \dlpmm{} shows its strengths with better performance over all the range of $\alpha$, though at the cost of 5 times the run time.

\begin{figure}[ht]
\begin{minipage}{\textwidth}
        
\begin{minipage}[l]{\textwidth}
    \centering
    \scalebox{0.8}{
    \begin{tabular}{lcccccc}
        \toprule
        Method & {$\alpha=1.5$} & {$\alpha=1.6$} & $\alpha=1.7$ & $\alpha=1.8$ & $\alpha=1.9$ & $\alpha=2.0$ \\
        \midrule
        DLPM               & {0.933 ± 0.018}  & {0.923 ± 0.005}   & {0.933 ± 0.028}  & {0.923 ± 0.024}  & {0.907 ± 0.034}  & {0.862 ± 0.028} \\
        
        DLPM$_5$           & \textbf{0.944 ± 0.013} & \textbf{0.943 ± 0.021} & \textbf{0.943 ± 0.010} & \textbf{0.941 ± 0.014} & \textbf{0.928 ± 0.016} & - \\
                                   & \footnotesize{ \hfill \textit{9.0e-3}} & \footnotesize{ \hfill \textit{1.6e-05}} & \footnotesize{ \hfill \textit{7.4e-2}} & \footnotesize{ \hfill \textit{9.0e-4}} & \footnotesize{ \hfill \textit{3.9e-3}} & \\
                                   
        LIM                & {0.842 ± 0.039}  & {0.850 ± 0.046}  & {0.868 ± 0.034}  & {0.874 ± 0.030}  & {0.884 ± 0.017}  & {0.874 ± 0.027} \\
                           & \hfill \footnotesize{\textit{1.7e-14}} & \footnotesize{ \hfill \textit{1.3e-09}} & \footnotesize{ \hfill \textit{5.7e-11}} & \footnotesize{ \hfill \textit{3.9e-09}} & \footnotesize{ \hfill \textit{1.9e-3}} & \footnotesize{ \hfill \textit{9.6e-2}} \\
    DDPM               & -            & -            & -            & -            & -            & {0.867 ± 0.029} \\ 
                           &              &              &              &              &              & \footnotesize{ \hfill \textit{5.0e-1}} \\
        \bottomrule
    \end{tabular}
    }
    \captionof{table}{$\fpr \uparrow$ score, averaged over {30 runs}. Figures below scores corresponds to $p$-values from Welch's $t$-test (assuming unequal variances), comparing the mean of DLPM with the given method.}
    \label{tab:gmm_grid}
\end{minipage}
\end{minipage}
\end{figure}

\subsection{Experiments on image data}
To fairly illustrate the differences between LIM and DLPM, we use the same improved DDPM neural network architecture, as designed in \cite{nichol2021improved}. The specific configuration for each dataset is carefully described in \Cref{app:exp:image}.
Our experiments are designed to compare deterministic and stochastic generation methods under varying conditions.
As a visual check, examples of generated images are listed in \Cref{app:exp:additional}. 

\begin{figure}[ht]
\centering

    \begin{minipage}[t]{0.7\textwidth}
    \centering
    \captionof{table}{FID$\downarrow$, 1000 sampling steps for LIM and DLPM, 25 sampling steps for LIM-ODE and DLIM.}
    \scalebox{0.8}{
    \begin{tabular}{lccccc}
    MNIST & $\alpha=1.5$ & $\alpha=1.7$ & $\alpha=1.8$ & $\alpha=1.9$ & $\alpha=2.0$ \\
    \midrule  
                DDPM         & -       & -       & -       & -       &  \textbf{3.43}  \\
                  {LIM}          & {14.37}    & {11.54}    & {11.18}    & {13.75}    & 11.69 \\
                      \quad {\textit{w/ clipping}}   & \hfill {\textit{4.08}}    & \hfill {\textit{5.17}}    & \hfill {\textit{6.81}}    & \hfill {\textit{11.20}}   &  \\
                  \dlpmm       & \textbf{3.80}   & 3.03    & \textbf{2.51}   & \textbf{2.71}   & - \\
                  DLPM         & 5.39    & \textbf{2.94}   & 2.93    & 3.24    & 3.63 \vspace{2mm} \\
                  {DDIM} &   -    &   -    &    -   &   -    & \textbf{{5.16}} \\
                  LIM-ODE      & {49.63}   & {78.59}   & {92.93}   & {109.48}  & 29.04\\
                  \quad {\textit{w/ clipping}}  & \hfill {\textit{45.72}}   & \hfill {\textit{68.15}}   & \hfill {\textit{85.09}}   & \hfill {\textit{113.20}}  &  \\
                  \dlimm       & \textbf{3.37}   & 2.93    & 3.44    & 4.31    & - \\
                  DLIM         & 3.38    & \textbf{2.81}   & \textbf{3.18}   & \textbf{3.27}   & 5.18 \vspace{7pt}\\
    
     CIFAR10\_LT & & & & & \\
    \midrule  
    DDPM & - & - & - & - & \textbf{19.05}  \\
    {LIM}          & {75.38}   & {35.15}   & {31.14}   & {21.68}   & 21.56 \\
     \quad {\textit{w/ clipping}}       & \hfill {\textit{16.13}} & \hfill {\textit{16.21}} & \hfill {\textit{17.67}} & \hfill {\textit{19.24}} &  \\
         DLPM & \textbf{16.10} & \textbf{18.00} & \textbf{19.94} & \textbf{20.21} & 21.07 \vspace{2mm} \\
    {DDIM} &   -    &     -  &  -     &  -     & \textbf{{23.44}} \\
    {LIM-ODE}      & {42.07}   & {91.64}   & {105.95}   & {407.79}   & 32.00 \\ 
     \quad {\textit{w/ clipping}} & \hfill {\textit{30.17}} & \hfill {\textit{65.78}} & \hfill {\textit{84.55}} & \hfill {\textit{101.70}} &  \\
     DLIM & \textbf{20.69} & \textbf{20.77} & \textbf{21.96} & \textbf{22.79} & 23.99 \\
    \bottomrule
    \end{tabular}
    }
    \label{tab:image_data}
    \end{minipage}
    \hfill 
    \begin{minipage}[t]{0.28\textwidth}
    \centering
        \captionof{figure}{FID$\downarrow$, $\alpha=1.7$}
        \begin{minipage}[b]{1\textwidth}
        \centering
        \includegraphics[width=1\linewidth]{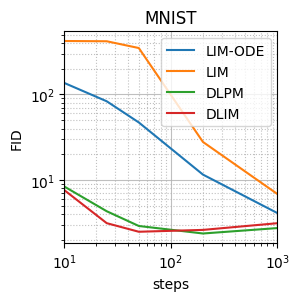}
        \end{minipage}
        \begin{minipage}[t]{1\textwidth}
        \centering
        \includegraphics[width=1\linewidth]{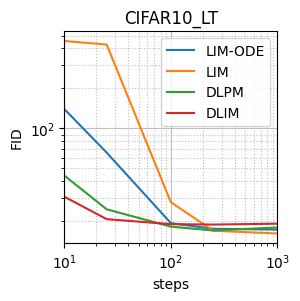}
        \end{minipage}
    \vspace{-25pt}
    \label{fig:lim_vs_dlpm_reverse_steps}
    \end{minipage}
\end{figure}

\paragraph{Convergence speed.}

Consistent with existing literature \cite{DDIM}, our findings as shown in \Cref{fig:lim_vs_dlpm_reverse_steps} confirm that deterministic generation outperforms its stochastic counterpart significantly, especially when fewer than 100 diffusion steps are used, on both MNIST and CIFAR10\_LT. As the number of diffusion steps increases, both of these sampling methods produce similar results. This observation highlights that the advantages of the diffusion process do not only stem from increased randomness at sampling time. These heavy-tailed processes may define more appropriate vector field  on which the noise is transported back to the original data distribution, which would lead to improved model performance (see \cite{karras2022elucidating} for similar discussions on DDPM vs DDIM).

The previous observations are quantitatively supported in \Cref{tab:image_data}, where we present results for both deterministic and stochastic sampling strategies. We compare both methods on stochastic generation at a high step count, to compare their performance at their best regime, and on deterministic generation at a small step count, to assess the tradeoff in computations/quality offered by both methods. As we can see, DLPM surpasses LIM on both datasets. Moreover, these results show that LIM’s performance deteriorates significantly when clipping is not used, raising questions about whether the framework of LIM is inherently well-suited for heavy-tailed distributions. 
More interestingly, we observe in \Cref{tab:image_data} that DLPM consistently outperforms LIM and offers satisfying image quality at low number of steps, both for stochastic and deterministic sampling.

Generated images after $25$ steps achieve a FID score of $2.81$ on MNIST and of $20.69$ on CIFAR10\_LT, as compared to respectively $45.72$ and $30.17$ for LIM-ODE with clipping. On MNIST, with $\alpha=1.7$, DLIM is able to match the sample quality of DLPM with 40 times less diffusion steps, further proving its efficacy.
To visualize these behaviours, we display on \Cref{fig:dlpm_vs_lim_small_steps} 
different generation with varying time horizon $T$. We can see how the backward process defined by DLIM is able to approach the true data distribution more accurately.
\begin{figure}[t]
\centering
\begin{minipage}[b]{0.16\linewidth}
    \includegraphics[width=\linewidth]{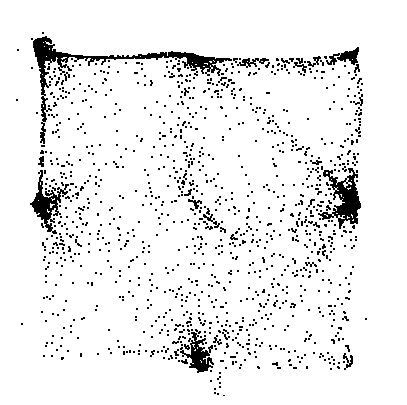}
    \caption*{\textbf{DLIM}, 5 steps}
\end{minipage}
\begin{minipage}[b]{0.16\linewidth}
    \includegraphics[width=\linewidth]{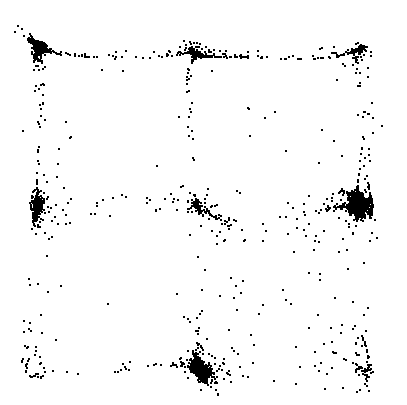}
    \caption*{10 steps}
\end{minipage}
\begin{minipage}[b]{0.16\linewidth}
    \includegraphics[width=\linewidth]{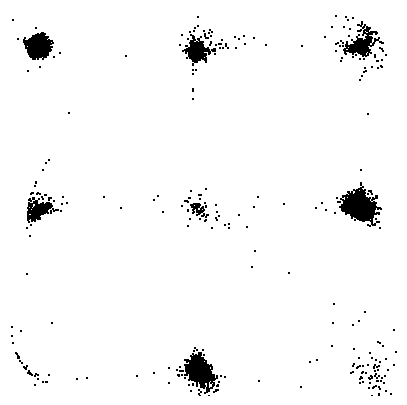}
    \caption*{25 steps}
\end{minipage}
\hfill
\begin{minipage}[b]{0.16\linewidth}
    \includegraphics[width=\linewidth]{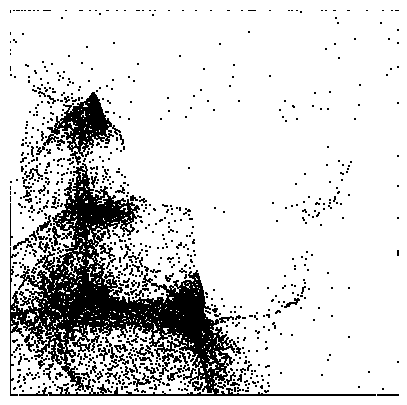}
    \caption*{\textbf{LIM}, 5 steps}
\end{minipage}
\begin{minipage}[b]{0.16\linewidth}
    \includegraphics[width=\linewidth]{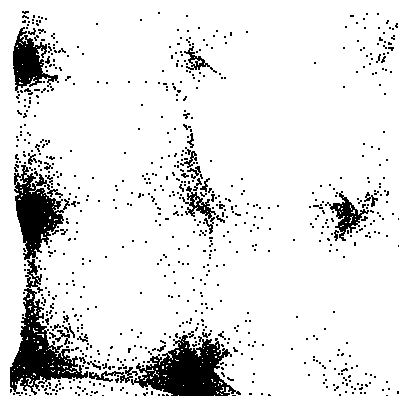}
    \caption*{10 steps}
\end{minipage}
\begin{minipage}[b]{0.16\linewidth}
    \includegraphics[width=\linewidth]{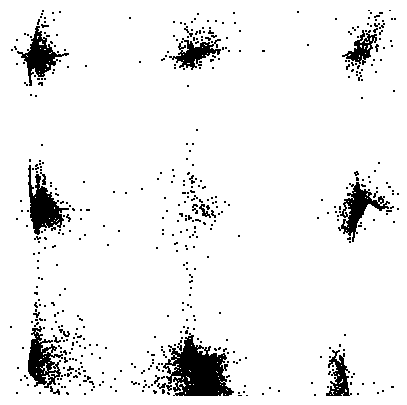}
    \caption*{25 steps}
\end{minipage}
\caption{DLIM and LIM-ODE with small number of steps, on the Gaussian grid of \Cref{fig:gmm_grid}.}
\label{fig:dlpm_vs_lim_small_steps}
\end{figure}

Eventhough lower $\alpha$ usually entails lower FID in this table, LIM-ODE shows worse performance than Gaussian diffusion at 25 reverse steps;
since the image quality is still monotonically decreasing with $\alpha$, except for $\alpha=2$, we can conjecture that the initial instability introduced by heavy-tails are quickly counterbalanced by their benefits.
\dlpmm{} shows consistent improvement over baseline, more particularly for stochastic sampling. 
We provide results for non-isotropic generation, and additional metrics on image data in \Cref{app:exp:additional}, further supporting our claims.

\section{Conclusion}

In this study, we proposed DLPM and DLIM, as heavy-tailed generalizations of DDPM and DDIM. Contrary to similar methods, we believe our approach will be more accessible to the community, thanks to its elementary tools. 
The various experiments conducted suggest DLPM is more effective in leveraging the characteristics of heavy-tailed distributions, providing robust performance across heavy-tailed data, unbalanced datasets and requiring a lower number of diffusion steps.

\subsubsection*{Acknowledgment}
{
A.D is funded by the European Union (ERC, Ocean, 101071601). D.S and U.S are partially funded by the European Union (ERC, Dynasty, 101039676). Views and opinions expressed are however those of the author(s) only and do not necessarily reflect those of the European Union or the European Research Council Executive Agency. Neither the European Union nor the granting authority can be held responsible for them.

U.S is additionally funded by the French government under management of Agence Nationale de la Recherche as part of the ``France 2030'' program, reference ANR-23-IACL-0008 (PR[AI]RIE-PSAI).


The authors are grateful to the CLEPS infrastructure from the Inria of Paris for providing resources and support.

}

\bibliography{reference}
\bibliographystyle{iclr/iclr2025_conference}

\newpage 
\appendix
\section*{Appendix}

The Appendix is organized as follows: 
\begin{itemize}
\item In \Cref{app:sec_algo}, we provide the pseudo-code for the training and sampling algorithms of DLPM and DLIM.
    \item In \Cref{app:alpha_stable_noise}, we characterize the stability property of the $\alpha$-stable distribution, and we give explicit formulas for the distribution of the sum of two $\alpha$-stable random variables.
    \item In \Cref{app:sec_dlpm}, we provide the detailed theory and derivations of the DLPM framework. Working on the process $\listT{X_t}$ defined in \eqref{eqn:dlpm_recall}, and its data augmentation counterpart $\listT{Y_t}$ defined in \eqref{eq:forward_t_with_a}, we start in \Cref{app:forward_process} by characterizing the distribution of $X_t$ given $X_0$ from a given noise schedule $\{(\gstep, \sstep)\}_{t=1}^T$, as this defines the characteristic location $\gall$ and scale $\sall$ of the process at time $t$. Likewise we characterize the distribution of $Y_t$ given $Y_0, \listTA$, as this defines the characteristic location $\gall$ and variance $\Sall$ of the augmented process at time $t$.
    
    In \Cref{app:backward_process}, we focus on the Gaussian trick exploited by the process $\listT{Y_t}$ defined in \eqref{eq:forward_t_with_a}. This leads us to the an explicit formula for the Gaussian density of the backward process conditioned on a sequence $\{A_{t}\}_{t=1}^T$ of $\alpha$-stable random variables. 
    
    In \Cref{app:dlpm_training_loss}, we further put this characterization into good use by obtaining a closed-form formula for the loss function . Follows a discussion on design choices for the model and the loss function at hand, which leads us to the simplified training loss \eqref{eq:simple_loss} corresponding to the denoising loss for $\alpha$-stable diffusion.

    In \Cref{app:principled_approach} we provide a more principled approach to derive the loss function at hand.

    Finally, in \Cref{app:less_rv}, we provide a faster sampling strategy for training DLPM, computing each loss term $\lossDLPMD_{t-1}$ using only two heavy-tailed random variables per datapoint, instead of $t$ random variables.

\item In \Cref{app:dlim}, we adapt the setting for deterministic sampling in classical denoising diffusion DDIM (\cite{DDIM}) to our $\alpha$-stable framework. We naturally call this extension DLIM. Alike DDIM, it is true that the same neural networks can be used for both the DLIM and DLPM generation procedures.

\item In \Cref{app:lim}, we compare in more details the two discrete and continuous frameworks DLPM and LIM, underlining how they offer two distinct loss functions, training and sampling algorithms.

\item In \Cref{app:proofs}, we give proofs relative to our technique for faster training, as introduced in \Cref{app:less_rv}.

\item In \Cref{app:experiment}, we provide additional experimental details.
    
\end{itemize}

\section{Algorithms for DLPM and DLIM}\label{app:sec_algo}
In this section, we explicitly provide the algorithms needed to train and sample from the DLPM and DLIM generative methods. 

\subsection{Training}
The same models can be shared between DLPM and DLIM, as underlined in \Cref{app:dlim:equiv_dlpm}.
We introduce the values $\sall, \gall$ determined by the noise schedule, as presented in \Cref{app:gall_sall}:
\begin{equation}
    \gall =  \prod_{i = 1}^t \gstep\eqsp, \qquad \sall = \left[\sum_{i = 1}^t \left(\frac{\gall[t]}{\gall[i]}\sstep[i]\right)^{\alpha}\right]^{1/\alpha}\eqsp.
\end{equation}
We define $c_A = \cos^{2/\alpha}(\uppi\alpha/4)$ as in \Cref{thm:main_stable}.
We make the design choices \hyperlink{D1}{D1}, \hyperlink{D2}{D2}, \hyperlink{D3}{D3} for our model, as described in \Cref{app:dlpm:loss:intuitive}. Finally, we use the method for faster sampling as described in \Cref{app:less_rv}, \Cref{app:prop_training_loss_simplified_less}. 
The resulting training algorithm is given in \Cref{algo:dlpm_train}.

\begin{algorithm}
\caption{Loss function for DLPM} \label{alg:dlpm}
\begin{algorithmic}[1]
\Require model $\{\epsilonq_t\}_{t=1}^T$, noise schedule $\{(\gstep, \sstep)\}_{t=1}^T$, data $Y_0$
\State Sample $t\sim \mathcal{U}[1, T]$ 
\State Sample $\bar A_t \sim \stableA$
\State Sample $G_t \sim \normal(0, \Idd)$
\State $Y_t \gets \gall Y_0 + \sall \bar A_t^{1/2} G_t$
\State $L_{t-1} \gets  \| \epsilonq_t(Y_t) - \bar A_t^{1/2} G_t\|_2$

\noindent \Return $L_{t-1}$
\end{algorithmic}
\caption{DLPM training - simplified loss}
\label{algo:dlpm_train}
\end{algorithm}

\subsection{Sampling}
Given the noise schedule and a sequence $\{A_t\}_{t=1}^T$ of $\alpha$-stable random variables we define:
\begin{equation}\label{app:eq:dlpm:sampling}
\SallA[t] = \sum_{k = 1}^t \left(\frac{\gall[t]}{\gall[k]}\sqrt{A_k}\sstep[k]\right)^{2} \eqsp, \quad   \GammaA = 1 - \frac{\gamma_t^2 \SallA[t-1]}{\SallA[t]} \eqsp,
\end{equation}
see \Cref{app:prop_backward} and \Cref{app:eq:gall_Sall} from \Cref{app:gall_Sall} for precise statements. 
We give our sampling algorithm for DLPM in \Cref{algo:dlpm_sampling}.
\begin{algorithm}
\caption{Stochastic sampling (DLPM)}
\begin{algorithmic}[1]
\Require model $\{\epsilonq_t\}_{t=1}^T$, noise schedule $\{(\gstep, \sstep)\}_{t=1}^T$
\State Sample $Y_T \sim \stable(0, \sall[T] \Idd)$
\State Sample $\listTA$ \iid, $A_t \sim \stableA$
\For{$t \gets T \textrm{ to } 1$}
    \State Compute $\Sall[t], \Sall[t-1], \Gamma_t$ as in \eqref{app:eq:dlpm:sampling}
    \State Sample $G_t \sim \normal(0, \Idd)$
    \State $\Sigmaq_{t-1} \gets \Gamma_t \Sall[t-1]$
    \State $Y_{t-1} \gets \dfrac{Y_t}{\gstep} - {\Gamma_t \sall}\epsilonq_t(Y_t) + \sqrt{\Sigmaq_{t-1}}G_t$
\EndFor

\noindent \Return $Y_0$
\end{algorithmic}
\label{algo:dlpm_sampling}
\end{algorithm}

For DLIM, one can potentially provide $Y_T$ as input, in order to use the support of the noise distribution as a latent space, alike what is done by \cite{DDIM}. We give our DLIM sampling algorithm in \Cref{algo:dlim_sampling}.

\begin{algorithm}
\caption{Deterministic sampling (DLIM)}
\begin{algorithmic}[1]
\Require model $\{\epsilonq_t\}_{t=1}^T$, noise schedule $\{(\gstep, \sstep)\}_{t=1}^T$
\State Sample $Y_T \sim \stablesym$
\For{$t \gets T \textrm{ to } 1$}
    \State $Y_{t-1} \gets \dfrac{Y_t}{\gstep} -\left( \dfrac{\sall}{\gstep} - \sall[t-1] \right)\epsilonq_t(Y_t)$
\EndFor

\noindent \Return $Y_0$
\end{algorithmic}
\label{algo:dlim_sampling}
\end{algorithm}

\section{Additional Remark on $\alpha$-stable Distributions}
\label{app:alpha_stable_noise}

Stable distributions are closed under convolution for a fixed value of $\alpha$. Since convolution is equivalent to multiplication of the Fourier-transformed function, it follows that the product of two stable characteristic functions with the same $\alpha$ will yield another such characteristic function. We precisely characterize this stability property in the following proposition:

\begin{proposition}
[See {\cite[Proposition 1.3]{nolan2020univariate}}]
\label{app:prop:stab_alpha}
    Let $X_1, X_2$ be two random variables respectively distributed as $X_1 \sim \stable_{\alpha, \beta_1}(\mu_1, \sigma_1)$ and $X_2 \sim  \stable_{\alpha, \beta_2}(\mu_2, \sigma_2)$, with $\mu_1, \mu_2, \beta_1, \beta_2 \in \mR$ and $\sigma_1, \sigma_2 > 0$. Then, $X = X_1 + X_2$ is distributed as $\stable_{\alpha, \beta}(\mu, \sigma)$ where:
    \begin{equation}
            \mu = \mu_1 + \mu_2\eqsp, \qquad 
        \sigma = (\sigma_1^{\alpha} + \sigma_2^{\alpha})^{1/\alpha} \eqsp, \qquad 
        \beta = \dfrac{\beta_1 \sigma_1^{\alpha} + \beta_2 \sigma_2^{\alpha}}{\sigma_1^{\alpha} + \sigma_2^{\alpha}}\eqsp.
    \end{equation}
\end{proposition}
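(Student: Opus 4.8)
The plan is to prove the statement by a direct computation of the characteristic function of $X = X_1 + X_2$, followed by an appeal to the uniqueness theorem for characteristic functions. First, since $X_1$ and $X_2$ are independent, for every $u \in \rset$ one has $\PE[\rme^{\rmi u X}] = \PE[\rme^{\rmi u X_1}]\,\PE[\rme^{\rmi u X_2}]$, so that, substituting \eqref{eq:def_alpha_stable_chara} into each factor,
\begin{equation*}
\PE\bigl[\rme^{\rmi u X}\bigr] = \exp\Bigl\{\rmi u(\mu_1+\mu_2) - |\sigma_1 u|^\alpha\bigl(1 - \rmi\varphi_1\beta_1\,\textrm{sgn}(u)\bigr) - |\sigma_2 u|^\alpha\bigl(1 - \rmi\varphi_2\beta_2\,\textrm{sgn}(u)\bigr)\Bigr\}\eqsp,
\end{equation*}
where $\varphi_j$ is the quantity from \eqref{eq:def_alpha_stable_chara} associated with the scale $\sigma_j$. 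It then remains to recognise the right-hand side as an instance of \eqref{eq:def_alpha_stable_chara} with the parameters announced in the statement.

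For $\alpha \neq 1$ this identification is straightforward, and I would carry it out first. In this case $\varphi_1 = \varphi_2 = \tan(\uppi\alpha/2) =: \varphi$ does not depend on the scale, so, using $|\sigma_j u|^\alpha = \sigma_j^\alpha|u|^\alpha$ and grouping real and imaginary parts, the exponent above becomes $\rmi u(\mu_1+\mu_2) - |u|^\alpha(\sigma_1^\alpha + \sigma_2^\alpha) + \rmi\varphi\,\textrm{sgn}(u)\,|u|^\alpha(\beta_1\sigma_1^\alpha + \beta_2\sigma_2^\alpha)$. Setting $\sigma := (\sigma_1^\alpha+\sigma_2^\alpha)^{1/\alpha}$ and $\beta := (\beta_1\sigma_1^\alpha+\beta_2\sigma_2^\alpha)/(\sigma_1^\alpha+\sigma_2^\alpha)$ gives $\sigma_1^\alpha+\sigma_2^\alpha = \sigma^\alpha$ and $\beta_1\sigma_1^\alpha+\beta_2\sigma_2^\alpha = \beta\sigma^\alpha$, whence the exponent equals $\rmi u\mu - |\sigma u|^\alpha\bigl(1 - \rmi\varphi\beta\,\textrm{sgn}(u)\bigr)$ with $\mu := \mu_1+\mu_2$ --- precisely the characteristic function of $\stable_{\alpha,\beta}(\mu,\sigma)$ by \eqref{eq:def_alpha_stable_chara}. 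Uniqueness of characteristic functions then yields the conclusion. The only facts really used here are that the $\alpha$-th powers of the scales add and that the resulting skewness is the scale-weighted average of $\beta_1$ and $\beta_2$.

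The main obstacle, as always with stable laws, is the boundary case $\alpha = 1$, where $\varphi_j = -(2/\uppi)\log|\sigma_j u|$ genuinely depends on the scale $\sigma_j$ and hence does not factor out. My plan there is to split $\log|\sigma_j u| = \log\sigma_j + \log|u|$: the $\log|u|$ part reassembles --- with the same $\sigma$ and $\beta$ as above --- into the $-(2/\uppi)\log|\sigma u|$ term demanded by \eqref{eq:def_alpha_stable_chara}, while the $\log\sigma_j$ part contributes a term linear in $u$ that amalgamates with the location term. Carefully bookkeeping this scale-dependent logarithmic contribution is the delicate step, and it is exactly what the cited \cite[Proposition~1.3]{nolan2020univariate} carries out, so I would simply invoke it for $\alpha = 1$. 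I note in passing that in the present work the forward noise is symmetric, so the $\varphi_j\beta_j$ terms vanish and the computation is uniform in $\alpha$, and the auxiliary positive-stable variables have index $\alpha/2 \in (0,1)$; the problematic value $\alpha = 1$ therefore does not actually arise in any of our uses of the proposition.
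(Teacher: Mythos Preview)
The paper does not supply its own proof of this proposition: it is simply quoted from \cite[Proposition~1.3]{nolan2020univariate} and then specialised to the symmetric case $\beta_1=\beta_2=0$ that is actually used downstream. Your characteristic-function computation is the standard argument (and is essentially what the cited reference does), so there is nothing to compare beyond noting that you have written out a proof the paper chose to import.

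Two minor remarks. First, you tacitly assume $X_1$ and $X_2$ are independent; the statement as recorded in the paper omits this hypothesis, but it is of course required and is present in Nolan's original. Second, your caution about $\alpha=1$ is well placed: in the parametrisation \eqref{eq:def_alpha_stable_chara} the leftover $\log\sigma_j$ terms you identify do shift the location parameter, so the formula $\mu=\mu_1+\mu_2$ as stated only holds for $\alpha\neq 1$ (or when $\beta_1=\beta_2=0$). Your closing observation that the paper never invokes the proposition at $\alpha=1$ with nonzero skewness --- the forward noise is symmetric and the auxiliary positive-stable variables have index $\alpha/2<1$ --- is therefore exactly the right way to dispose of this edge case.
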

In particular, when $X_1, X_2$ are such that $X_1 \sim \stable_{\alpha}(0, \sigma_1), X_2 \sim \stable_{\alpha}(0, \sigma_2)$, then $X = X_1 + X_2 \sim \stable_{\alpha}(0, (\sigma_1^{\alpha} + \sigma_2^{\alpha})^{1/\alpha})$, which is the key relation used in the later characterizations of the distribution of our forward process.


\section{Theoretical derivations for DLPM}

In this section, we provide the detailed theory and associated derivations of the DLPM framework.

\label{app:sec_dlpm}

\subsection{Setting and notations}

We will denote by $\upphi_d(\cdot | \mu, \Sigma)$ the density of $\mathcal{N}(\mu, \Sigma)$, where $\mu \in \mR^d$ and $\Sigma \in \mR^{d\times d}$. We will denote by $\uppsia$ the density of $\stableA$, where $c_A = \cos^{2/\alpha}(\uppi\alpha/4)$.

\paragraph{Forward process.}\ We reintroduce the setting presented in \Cref{sec:dlpm}, with the noising schedule being denoted by $\{(\gstep, \sstep)\}_{t=1}^T$, and the following \textbf{forward process} on which DLPM is based:
\begin{equation}
\label{app:eq:dlpm:forward}
   X_0 \sim \pstar \eqsp, \quad \text{ and } \quad X_{t} = \gstep X_{t-1} + \sstep \epsilon_t^{(\alpha)},
\end{equation}
where $\pstar$ is the data distribution and $\{\epsa_t\}_{t=1}^T$ are independent random variables distributed as $\stablesym$.

\paragraph{Data augmentation process.}\ We also introduce the associated \textbf{data augmentation process}: 
\begin{equation}
\label{app:eq:dlpm:forward:augmented}
Y_0 \sim \pstar \eqsp, \quad \text{ and } \quad Y_{t}  = \gamma_t Y_{t-1} + \sigma_t A_{t}^{1/2} G_{t}  \eqsp,
\end{equation}
where $\listTG$ and $\listTA$ are independent random variables distributed according to $G_t \sim \gauss(0,\Idd)$ and $A_t \sim \stable_{\alpha/2,1}(0, c_A )$, with $c_A =  \cos^{2/\alpha}(\uppi\alpha/4)$.
From Theorem~\ref{thm:main_stable}, $\{Y_t\}_{t=0}^T$ is a Markov chain that admits the same distribution as $\{X_t\}_{t=0}^T$.
We will denote by $\pt$ the distribution of $Y_t$, and by $\kArg{t|t-1}(\cdot | \cdot)$ the transition density associated to the Markov chain \eqref{app:eq:dlpm:forward}. 

\paragraph{Backward process.}\ A \textbf{backward process} associated to the Markov chain $\listT{Y_t}$ is a Markov chain $\listT{\overleftarrow{Y'}_t}$ such that the two processes $\listT{\overleftarrow{Y'}_{T-t}}$ and $\listT{Y_t}$ have the same distribution. For ease of presentation and following classical notations, we will rather consider $\listT{\iY_{t}}$ where $\iY_t = \overleftarrow{Y'}_{T-t}$. We will denote by $ \ikArg{t-1|t}(\cdot |\cdot)$ the transition densities associated to the process $\listT{\iY_{t}}$.
Since the true backward process is never available to us, we will focus on an approximation induced by a variational family. We consider the process $\listT{\iYArg_t}$ where $\iYArg_T$ is distributed as $\stable(0, \sall \Idd)$, and the density of the distribution of $\iYArg_{t-1}$ conditioned on $\iYArg_{t}$ are given by $\qArg{t-1|t}(\cdot | \cdot)$, where $\theta \in \mR^D$ parameterizes a neural network. We also denote by $\qArg{0:T}$ the joint distribution of $\listT{\iYArg}$ and by $\qArg{t}$ the marginal distribution of $\iYArg_t$.

\paragraph{Further notations.}\ Finally, we denote by $p_t^{\alpha}(\cdot | a_{1:t})$, $\kArg{t|t-1, a}(\cdot | \cdot, a_{1:t})$, and $\qArg{t-1|t, a}(\cdot | \cdot, a_{1:t})$ the densities/transition densities associated to the processes $\listT{Y_t}, \listT{\iYArg_t}$ given $A_t = a_t$ for $1\leq t \leq T$. 
We we will also write $p_t^{\alpha}(\cdot | y_0, a_{1:t})$, $\kArg{t|t-1, 0, a}(\cdot | \cdot, y_0, a_{1:t})$, and $\qArg{t-1|t, 0, a}(\cdot | \cdot, y_0, a_{1:t})$ when further conditioning on $Y_0$.

\subsection{Forward process} \label{app:forward_process}
Let us now characterize the distribution of $X_t$ given $X_0$, and $Y_t$ given $Y_0, \listTA$, which are tractable thanks to \Cref{app:prop:stab_alpha}. These will come in handy, for instance when working on the backward process in \Cref{app:backward_process}.

\begin{proposition}[Distribution of $X_t$ given $X_0$]
\label{app:gall_sall}
    Let $\{X_t\}_{t=0}^T$ be the forward process as given in \eqref{app:eq:dlpm:forward}, and $\{(\gstep, \sstep)\}_{t=1}^T$ the noise schedule. Then the distribution of $X_t$ given $X_0$ is given for any $t$ by 
    \begin{equation} 
        X_t \eqd \gall X_0 + \sall \bar \epsilon_t 
    \end{equation}
    where $\bar \epsilon_t \sim \stablesym$, and $\gall, \sall$ are given by:
    \begin{equation}\label{app:eq:gall_sall}
        \gall =  \prod_{k = 1}^t \gstep[k] \eqsp, \qquad \sall = \left(\sum_{k = 1}^t \left(\dfrac{\gall[t]}{\gall[k]}\sstep[k]\right)^{\alpha}\right)^{1/\alpha} \eqsp.
    \end{equation}
\end{proposition}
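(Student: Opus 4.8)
The plan is to prove this by induction on $t$, using the stability property of $\alpha$-stable distributions from \Cref{app:prop:stab_alpha} (equivalently the ``key relation'' noted just after it: $\stable_\alpha(0,\sigma_1) + \stable_\alpha(0,\sigma_2) \eqd \stable_\alpha(0,(\sigma_1^\alpha+\sigma_2^\alpha)^{1/\alpha})$), together with the scaling property that if $Z \sim \stablesym$ then $cZ \sim \stable_\alpha^{\mathrm{i}}(0, c\Idd)$ for $c > 0$ (which follows immediately from the characteristic function $\mE[\exp(\rmi u^\top Z)] = \exp(-\|u\|^\alpha)$).

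\textbf{Base case.} For $t=1$, the recursion \eqref{app:eq:dlpm:forward} gives $X_1 = \gstep[1] X_0 + \sstep[1]\epsilon_1^{(\alpha)}$ with $\epsilon_1^{(\alpha)}\sim\stablesym$, which is exactly $\gall[1] X_0 + \sall[1]\bar\epsilon_1$ since $\gall[1] = \gstep[1]$ and $\sall[1] = \sstep[1]$, and $\bar\epsilon_1 := \epsilon_1^{(\alpha)} \sim \stablesym$.

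\textbf{Inductive step.} Assume $X_{t-1} \eqd \gall[t-1] X_0 + \sall[t-1]\bar\epsilon_{t-1}$ with $\bar\epsilon_{t-1}\sim\stablesym$ independent of $X_0$ and of $\epsilon_t^{(\alpha)}$. Plugging into \eqref{app:eq:dlpm:forward},
\begin{equation*}
X_t = \gstep[t] X_{t-1} + \sstep[t]\epsilon_t^{(\alpha)} \eqd \gstep[t]\gall[t-1] X_0 + \gstep[t]\sall[t-1]\bar\epsilon_{t-1} + \sstep[t]\epsilon_t^{(\alpha)}.
\end{equation*}
The coefficient of $X_0$ is $\gstep[t]\gall[t-1] = \gall[t]$. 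For the noise part, $\gstep[t]\sall[t-1]\bar\epsilon_{t-1}\sim\stable_\alpha^{\mathrm{i}}(0,\gstep[t]\sall[t-1]\Idd)$ and $\sstep[t]\epsilon_t^{(\alpha)}\sim\stable_\alpha^{\mathrm{i}}(0,\sstep[t]\Idd)$ are independent, so by the convolution/stability property their sum is $\stable_\alpha^{\mathrm{i}}(0,\rho_t\Idd)$ with $\rho_t = ((\gstep[t]\sall[t-1])^\alpha + \sstep[t]^\alpha)^{1/\alpha}$, i.e. the noise part equals $\rho_t\bar\epsilon_t$ in distribution for some $\bar\epsilon_t\sim\stablesym$. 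Finally one checks $\rho_t = \sall[t]$: using the inductive formula $\sall[t-1]^\alpha = \sum_{k=1}^{t-1}(\gall[t-1]\sstep[k]/\gall[k])^\alpha$,
\begin{equation*}
\rho_t^\alpha = \gstep[t]^\alpha\sum_{k=1}^{t-1}\Bigl(\tfrac{\gall[t-1]}{\gall[k]}\sstep[k]\Bigr)^\alpha + \sstep[t]^\alpha = \sum_{k=1}^{t-1}\Bigl(\tfrac{\gall[t]}{\gall[k]}\sstep[k]\Bigr)^\alpha + \Bigl(\tfrac{\gall[t]}{\gall[t]}\sstep[t]\Bigr)^\alpha = \sum_{k=1}^{t}\Bigl(\tfrac{\gall[t]}{\gall[k]}\sstep[k]\Bigr)^\alpha = \sall[t]^\alpha,
\end{equation*}
using $\gstep[t]\gall[t-1] = \gall[t]$. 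This closes the induction.

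The only mild subtlety — not really an obstacle — is being careful about the joint law rather than just marginals: one should argue at the level of the conditional distribution of $X_t$ given $X_0$, noting that $\epsilon_t^{(\alpha)}$ is independent of $(X_0, X_1,\dots,X_{t-1})$, so that the representation $X_{t-1}\eqd\gall[t-1]X_0 + \sall[t-1]\bar\epsilon_{t-1}$ can be taken jointly with $\bar\epsilon_{t-1}$ independent of $\epsilon_t^{(\alpha)}$ and of $X_0$; then the two independent isotropic $\alpha$-stable vectors can be combined via the stability property. I would phrase the whole argument as a statement about the conditional characteristic function $\mE[\exp(\rmi u^\top X_t)\mid X_0]$, which makes the independence bookkeeping automatic and reduces everything to the scalar identity for $\rho_t$ above.
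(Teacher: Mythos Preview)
Your proof is correct and follows precisely the approach the paper indicates: the paper's own proof is just the one-line remark ``The proof is an elementary induction based on \Cref{app:prop:stab_alpha},'' and you have written out exactly that induction with the details filled in.
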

The proof is an elementary induction based on \Cref{app:prop:stab_alpha}.
    

\begin{proposition}[Distribution of $Y_t$ given $Y_0, \listTA$] \label{app:gall_Sall}

Let $\{Y_t\}_{t=0}^T$ be the forward process as given in \eqref{app:eq:dlpm:forward:augmented}, $\{(\gstep, \sstep)\}_{t=1}^T$ the noise schedule, and $\listTA$ the associated $\alpha/2$-stable random variables, parameterizing the variance of the Gaussian noise increments. Then the distribution of $Y_t$ given $Y_0, \listTA$ is the Gaussian distribution with mean $\gall Y_0$  and covariance matrix $\Sall \Idd$, \ie, 
\begin{equation}
    Y_t \eqd \gall Y_0 + \SallA^{1/2} \bar G_t\eqsp,
\end{equation}
where $\bar G_t \sim \normal(0, \Idd)$, and
\begin{equation} \label{app:eq:gall_Sall}
    \gall[t] = \prod_{k = 1}^t \gstep[k]\eqsp, \qquad \Salla[t] = \sum_{k = 1}^t \left(\frac{\gall[t]}{\gall[k]}\sqrt{a_k}\sstep[k]\right)^{2}\eqsp.
\end{equation}
\end{proposition}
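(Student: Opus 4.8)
The plan is to argue by induction on $t \in \{0,1,\ldots,T\}$, exactly mirroring the proof of Proposition~\ref{app:gall_sall}, but exploiting the key observation that once we condition on the whole sequence $\listTA$, the recursion \eqref{app:eq:dlpm:forward:augmented} becomes a linear Gaussian recursion: $Y_t = \gstep Y_{t-1} + (\sstep A_t^{1/2}) G_t$, which is structurally identical to the DDPM forward process with the modified scale $\sstep A_t^{1/2}$. So the whole statement reduces to the standard forward-marginalisation computation for Gaussian diffusions, carried out conditionally on $\listTA$.

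For the base case $t=0$, with the conventions $\gall[0]=1$ (empty product) and $\SallA[0]=0$ (empty sum), the identity $Y_0 = \gall[0] Y_0 + \SallA[0]^{1/2}\bar G_0$ holds trivially. For the inductive step, I would assume that conditionally on $(Y_0,\listTA)$ one has $Y_{t-1} = \gall[t-1] Y_0 + \SallA[t-1]^{1/2}\bar G_{t-1}$ with $\bar G_{t-1}\sim\normal(0,\Idd)$, and substitute into \eqref{app:eq:dlpm:forward:augmented}:
\begin{equation*}
Y_t = \gstep\big(\gall[t-1] Y_0 + \SallA[t-1]^{1/2}\bar G_{t-1}\big) + \sstep A_t^{1/2} G_t = \gall Y_0 + \gstep \SallA[t-1]^{1/2}\bar G_{t-1} + \sstep A_t^{1/2} G_t,
\end{equation*}
using $\gall = \gstep\gall[t-1]$. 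The crucial point is that, conditionally on $\listTA$, the vectors $\bar G_{t-1}$ and $G_t$ are independent centred Gaussians — here one invokes the joint independence of $\listTG$, $\listTA$ and $Y_0$ postulated in \eqref{app:eq:dlpm:forward:augmented}, so that conditioning on $\listTA$ leaves $G_t$ distributed as $\normal(0,\Idd)$ and independent of $\bar G_{t-1}$. Their weighted sum is then a centred Gaussian vector with covariance $\big(\gstep^2\SallA[t-1] + \sstep^2 A_t\big)\Idd$, and it only remains to verify $\gstep^2\SallA[t-1] + \sstep^2 A_t = \SallA[t]$, which is immediate from \eqref{app:eq:gall_Sall}: since $\gstep\cdot(\gall[t-1]/\gall[k]) = \gall[t]/\gall[k]$ we get $\gstep^2\SallA[t-1] = \sum_{k=1}^{t-1}(\gall[t]/\gall[k]\sqrt{A_k}\sstep[k])^2$, and the $k=t$ term $\sstep^2 A_t = (\gall[t]/\gall[t]\sqrt{A_t}\sstep[t])^2$ completes the sum. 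This closes the induction. Equivalently, one can skip the induction and unroll directly to $Y_t = \gall Y_0 + \sum_{k=1}^t (\gall[t]/\gall[k])\sstep[k] A_k^{1/2} G_k$, then identify the conditional law of the sum given $\listTA$ as $\normal(0,\SallA[t]\Idd)$ using independence of the summands.

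I do not expect any real obstacle here: the argument is elementary. The only thing requiring a modicum of care is the measurability/independence bookkeeping legitimising the claim "conditionally on $\listTA$ the $G_t$'s remain independent standard Gaussians", which, as noted, follows directly from the independence hypotheses in the definition of the augmented process. As a consistency check one may also note that marginalising the resulting conditional Gaussian law over $\listTA$ recovers the $\alpha$-stable marginal of Proposition~\ref{app:gall_sall} via Theorem~\ref{thm:main_stable}, but this is not needed for the proof itself.
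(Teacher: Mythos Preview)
Your proposal is correct: the induction (or equivalently the direct unrolling) together with the closure of Gaussians under independent linear combinations, carried out conditionally on $\listTA$, is exactly the elementary argument the paper has in mind. Indeed the paper does not spell out a proof here, stating only that ``the proof is elementary and omitted'', so your write-up simply fills in the standard details.
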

The proof is elementary and omitted. It is worth mentioning the following recurrence, to speedup the computation of the sequence $\{\Salla\}_{t=1}^T$:
\begin{equation}
    \Salla[t] = \sstep[t]^2 a_t + \gstep[t]^2 \Salla[t-1]\eqsp.
\end{equation}



\subsection{Backward process} \label{app:backward_process}
Consider the setting of the data augmentation approach as given in \eqref{app:eq:dlpm:forward:augmented}. By the same arguments used in \Cref{sec:dlpm}, it can be verified that a process starting from $p_T^{(\alpha)}$ and with transition densities $\ikArg{t-1|t}(y_{t-1} |y_t) \propto \pt(y_{t-1}) \kArg{t|t-1}(y_{t}| y_{t-1})$ for any $y_{t-1}, y_{t}$ is a backward process associated with $\listT{Y_t}$.
However, it raises two main problems. First (i), we cannot characterize the distribution of $\ikArg{t-1|t}(y_{t-1} |y_t)$, since we do not know the data distribution. Second (ii), in the case where $\alpha < 2$, we do not have access to an explicit expression for $\ikArg{t-1|t, 0}(y_{t-1} |y_t, y_0)$.

Regarding (i), we have access to the distribution of $Y_t$ given $Y_0$, so a valid strategy consists in devising a method relying on characterizing the backward of the process $\listT{Y_t}$ given $Y_0$. This is the classical strategy used in DDPM (\cite{ho2020denoising}), which is possible in the case $\alpha = 2$ since $\ikArg{t-1|t, 0}(y_{t-1} |y_t, y_0)$ admits an analytical expression for any $y_0, y_{t-1}, y_t$, thanks to the properties of the Gaussian distribution. 

Regarding (ii), in the case where $\alpha < 2$, we make use of the trick introduced in \Cref{thm:main_stable}, justifying the data augmentation approach. We will rather characterize the density of the Markov kernels associated to the backward of the process $\listT{Y_t}$ given $Y_0$ and $\listTA$. This time, since we manage Gaussian noise increments, we can fall back to the classical strategy, as we further develop in the following proposition.

\begin{proposition}[Density of the backward process associated to $\listT{Y_t}$ given $Y_0, \listTA$]
\label{app:prop_backward}

Consider the setting of the data augmentation approach as given in \eqref{app:eq:dlpm:forward:augmented}. Let $\{(\gamma_t, \sigma_t)\}_{t = 1}^T$ be the noise schedule at hand. Let $\kArg{t-1|0,t, a}(\cdot|y_{t}, y_0, a_{1:t})$ be the density of the backward process associated to $\listT{Y_t}$ given $Y_0$ and $\listTA$.
Then $\kArg{t-1|t,0,a}(\cdot|y_{t}, y_0, a_{1:t})$ is the density of a Gaussian distribution $\normal(\meank_{t-1}, \Sigmak_{t-1})$ with mean $\meank_{t-1}$ and variance $\Sigmak_{t-1}$ such that
\begin{equation} \label{app:eq:meank_Sigmak}
    \meank_{t-1}(y_t, y_0, a_{1:t}) = \frac{1}{\gstep}\left( y_{t} - \Gammaa \sall \epsilon_{t}(y_t, y_0) \right)\eqsp, \quad 
    \Sigmak_{t-1}(a_{1:t}) = \Gammaa \Salla[t-1]\eqsp,
\end{equation}
where 
\begin{align}
\Salla[t] &= \sum_{k = 1}^t \left(\frac{\gall[t]}{\gall[k]}\sqrt{a_k}\sstep[k]\right)^{2} \\
\epsilon_{t}(y_t, y_0) &= \frac{y_{t} - \gall y_0}{\sall} \\
\Gammaa &= 1 - \frac{\gstep^2 \Salla[t-1]}{\Salla[t]}\eqsp.    
\end{align}
Eventhough $\Gamma_t$ involves multiple heavy-tailed random variables, it is nonetheless bounded: $0 \leq \Gamma_t \leq 1$.
\end{proposition}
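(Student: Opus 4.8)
The whole computation can be carried out in the Gaussian world. The plan is to condition throughout on $Y_0 = y_0$ and $A_{1:t} = a_{1:t}$: by \eqref{app:eq:dlpm:forward:augmented}, given $A_t = a_t$ the increment $\sstep A_t^{1/2} G_t$ is $\normal(0, \sstep^2 a_t \Idd)$, so conditionally on $A_{1:t} = a_{1:t}$ the chain $\listT{Y_t}$ is a Gaussian Markov chain with one-step transition $\upphi_d(y_t | \gstep y_{t-1}, \sstep^2 a_t \Idd)$, and by \Cref{app:gall_Sall} its conditional marginal is $\pArg{t-1}(y_{t-1}|y_0, a_{1:t-1}) = \upphi_d(y_{t-1} | \gall[t-1] y_0, \Salla[t-1] \Idd)$. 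Once this is in place, $\kArg{t-1|t,0,a}$ is obtained exactly as in Gaussian DDPM, i.e.\ by Bayes' rule and completing the square; the rest is bookkeeping, and the two formulas in \eqref{app:eq:meank_Sigmak} together with the bound $0 \le \Gammaa \le 1$ should both drop out.

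First I would invoke the forward Markov property to write, as a function of $y_{t-1}$ (the factor depending only on $y_t, y_0, a_{1:t}$ being the reciprocal of the normalising constant $\pArg{t}(y_t | y_0, a_{1:t})$),
\[ \kArg{t-1|t,0,a}(y_{t-1}|y_t, y_0, a_{1:t}) \ \propto\ \upphi_d(y_{t-1}|\gall[t-1] y_0, \Salla[t-1]\Idd)\,\upphi_d(y_t|\gstep y_{t-1}, \sstep^2 a_t\Idd)\eqsp. \]
Since $\|y_t - \gstep y_{t-1}\|^2 = \gstep^2\|y_{t-1} - y_t/\gstep\|^2$, the second factor is, up to a $y_{t-1}$-free constant, an isotropic Gaussian density in $y_{t-1}$ with mean $y_t/\gstep$ and variance $\sstep^2 a_t/\gstep^2$. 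Multiplying two isotropic Gaussian densities and matching the quadratic and linear terms in $y_{t-1}$ (the standard product-of-Gaussians identity) shows that the right-hand side is proportional to the density of $\normal(\meank_{t-1}, \Sigmak_{t-1}\Idd)$ with
\[ \Sigmak_{t-1} = \left(\frac{1}{\Salla[t-1]} + \frac{\gstep^2}{\sstep^2 a_t}\right)^{-1}\eqsp, \qquad \meank_{t-1} = \Sigmak_{t-1}\left(\frac{\gall[t-1] y_0}{\Salla[t-1]} + \frac{\gstep y_t}{\sstep^2 a_t}\right)\eqsp. \]

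The remaining step is to recast these in the form \eqref{app:eq:meank_Sigmak}. The only nontrivial identity needed is the recursion $\Salla[t] = \gstep^2 \Salla[t-1] + \sstep^2 a_t$, which is immediate from the closed form of $\Salla[t]$ in \Cref{app:gall_Sall} by isolating the $k = t$ summand. Using it, $\Sigmak_{t-1} = \sstep^2 a_t\,\Salla[t-1]/\Salla[t]$, and since $\Gammaa = 1 - \gstep^2\Salla[t-1]/\Salla[t] = \sstep^2 a_t/\Salla[t]$ this is $\Gammaa\Salla[t-1]$; likewise, substituting the recursion into the mean and collecting the coefficients of $y_t$ and $y_0$ gives $\meank_{t-1} = \gstep^{-1}\big((1-\Gammaa)y_t + \Gammaa \gall[t] y_0\big) = \gstep^{-1}\big(y_t - \Gammaa(y_t - \gall[t] y_0)\big)$, and writing $y_t - \gall[t] y_0 = \sall\,\epsilon_t(y_t, y_0)$ with $\epsilon_t(y_t, y_0) = (y_t - \gall[t]y_0)/\sall$ yields \eqref{app:eq:meank_Sigmak} verbatim (the degenerate case $t=1$, where $\Salla[0] = 0$ and $\Gamma_1 = 1$, gives $\meank_0 = y_0$ and $\Sigmak_0 = 0$, consistently). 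For the last assertion, $\Gammaa = \sstep^2 a_t/\Salla[t]$ with $A_k > 0$ almost surely (the $A_k$ are positive stable), so every summand of $\Salla[t] = \sum_{k=1}^t(\gall[t]\gall[k]^{-1}\sqrt{a_k}\sstep[k])^2$ is nonnegative and the one for $k=t$ equals $\sstep^2 a_t$; hence $0 \le \sstep^2 a_t \le \Salla[t]$ and $0 \le \Gammaa \le 1$.

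I do not expect a genuine obstacle here --- making this step elementary is precisely the purpose of the data-augmentation device. The one point that needs care is notational: the scale $\sall$ appearing in $\epsilon_t(y_t, y_0)$ is the \emph{$\alpha$-stable scale} $\sigma_{1\to t}$ from \Cref{app:gall_sall}, not the conditional Gaussian variance $\Salla[t]$; in fact $\sall$ cancels out of $\meank_{t-1}$, so the backward kernel depends on the realisation $a_{1:t}$ only through $\Salla[t-1]$ and $\Gammaa$, and the ``$\sall\,\epsilon_t$'' form is retained only to mirror the noise-prediction parametrisation used for training.
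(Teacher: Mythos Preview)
Your argument is correct and complete. The paper arrives at the same result by a slightly different bookkeeping: instead of factorising via Bayes' rule and multiplying two isotropic Gaussian densities in $y_{t-1}$, it writes down the $2d$-dimensional joint Gaussian law of $(Y_{t-1},Y_t)$ given $Y_0,A_{1:T}$ (computing the cross-covariance $\rho_t=\gstep\Salla[t-1]$ from the recursion) and then applies the textbook conditional-Gaussian formula $\meank_{t-1}=\gall[t-1]y_0+(\rho_t/\Salla[t])(y_t-\gall[t]y_0)$, $\Sigmak_{t-1}=\Salla[t-1]-\rho_t^2/\Salla[t]$, before reparametrising with $\epsilon_t$ and $\Gammaa$. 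The two routes are of course equivalent; yours is the factorisation one usually sees in the DDPM literature (e.g.\ the derivation in \cite{ho2020denoising}) and makes the ``completing the square'' step explicit, while the paper's joint-Gaussian route packages that step into a single citation. Your closing remark that $\sall$ cancels out of $\meank_{t-1}$, so that the backward kernel depends on $a_{1:t}$ only through $\Salla[t-1]$ and $a_t$, is a useful observation that the paper exploits later (\Cref{app:less_rv}) but does not flag here.
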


\begin{proof}
To determine $\kArg{t-1|t,0,a}(\cdot|y_{t}, y_0, a_{1:t})$, we need to work on the joint distribution of $(Y_{t-1}, Y_{t})$ conditioned on $Y_0, \listTA$, which is a Gaussian vector for which classical techniques will let us derive the distribution of $Y_{t-1}$ given $Y_{t}$. Before doing so we need to compute $\rho_t$ the covariance of $Y_{t-1}$ and $Y_{t}$ given $Y_0, \listTA$, which we do thanks to \Cref{app:gall_Sall}: 
\begin{equation}
    \rho_t = \text{Cov}(Y_{t}, Y_{t-1} | Y_0, A_{1:T}) = \gstep[t] \text{Cov}(Y_{t-1}, Y_{t-1} | Y_0, A_{1:T}) = \gstep[t] \Sall[t-1] \Idd.
\end{equation}
Denote by $\kArg{t-1, t | 0, a}$ the density of $(Y_{t-1}, Y_{t})$ conditioned on $Y_0, A_{1:T}$. Denote by $\upphi_d(\cdot| \mu, \Sigma)$ the density of a $d$-dimensional Gaussian distribution with mean $\mu$ and covariance $\Sigma$. From the results of \Cref{app:gall_Sall}, we can write
\begin{equation}
    \kArg{t-1, t | 0, a}(y_{t-1}, y_{t} | y_0, a_{1:t}) = \upphi_d\left(
\begin{pmatrix}
   y_{t-1} \\ y_{t}
\end{pmatrix}
\bigg |
\begin{pmatrix}
    \gall[t-1]y_0 \\ \gall[t]y_0
\end{pmatrix},
\begin{pmatrix}
    \Salla[t-1] \Idd & \rho_t \Idd \\ \rho_t \Idd & \Salla[t] \Idd
\end{pmatrix}
    \right)
\end{equation}
Then the distribution of $Y_{t-1}$ given $Y_{t}, Y_{0}, A_{1:T}$ is a Gaussian distribution $\normal(\meank_{t-1}, \Sigmak_{t-1})$ \cite[Theorem 3]{HoltGaussian} with mean $\meank_{t-1}$ and variance $\Sigmak_{t-1}$ satisfying:
\begin{align}
    \meank_{t-1}(y_t, y_0, a_{1:t}) &= \gall[t-1]y_0 + \dfrac{\rho_t}{\Salla[t]}\left(y_{t}  - \gall[t] y_0\right) \\   
    \Sigmak_{t-1}(a_{1:t}) &= \Salla[t-1] - \dfrac{\rho_t^2}{\Salla[t]}\eqsp.
\end{align}
By defining
\begin{equation}
    \epsilon_{t}(y_t, y_0) = \frac{y_{t} - \gall[t]y_0}{\sall[t]}\eqsp, \quad \Gammaa = 1 - \frac{\gstep[t]^2 \Salla[t-1]}{\Salla[t]}\eqsp,    
\end{equation}
we give the final expression for the mean $\meank_{t-1}$ and variance $\Sigmak_{t-1}$ of the distribution of $Y_{t-1}$ given $Y_{t}, Y_0$ and $\listTA$:
\begin{align}
\label{app:backward_formula}
    \meank_{t-1}(y_t, y_0, a_{1:t}) &= \dfrac{1}{\gstep[t]}\left(y_{t} - \sall[t] \Gammaa \epsilon_{t}(y_t, y_0)\right)\\
        \Sigmak_{t-1}(a_{1:t}) &= \GammaA \Salla[t-1]\eqsp.
\end{align}
Since $\Gammaa = 1 - \gstep^2 \Salla[t-1]/\Salla[t] = a_t\sstep^2/\Salla$ and $a_t, \gstep, \Sall, \sall > 0$, we have $0\leq \Gamma_t \leq 1$.

\end{proof}

\paragraph{Case $\alpha = 2$} As we set $\alpha=2$, the random variables $\listTA$ become deterministic, equal to $2$. One can check that in this case, with the variance preserving schedule 
\begin{equation}
    \gstep[t] = \sqrt{1 - \beta_t}\eqsp, \quad \gall[t]=\sqrt{\alpha_t}\eqsp, \quad \sstep[t] = \sqrt{\beta_t}\eqsp, \quad \sall[t] = \sqrt{1 - \alpha_t}\eqsp,
\end{equation} then:
\begin{equation}
    \Sall = 2 \sall^2\eqsp = 2 (1 - \alpha_t)\eqsp.
\end{equation}
Further noticing that $\gamma_t = \gall / \gall[t-1]$, one computes
\begin{align}
    \Gamma_t &= 1 - \frac{\sall[t-1]^2 \alpha_t / \alpha_{t-1}}{\sall^2} \\ 
     &= 1 - \frac{(1 - \alpha_{t-1})\alpha_t / \alpha_{t-1}}{1 - \alpha_t} \\
     &= \dfrac{1 - \alpha_t / \alpha_{t-1}}{1 - \alpha_t}\eqsp,
\end{align}
so that one recovers the famous equations made popular in the seminal DDPM paper \cite[Equation 7]{ho2020denoising}:
\begin{align} \label{app:eq:ddpm:reverse}
        \meank_{t-1} &= \dfrac{\sqrt{\alpha_{t-1}}}{\sqrt{\alpha_t}}\left(Y_t - \dfrac{1 - \alpha_t/\alpha_{t-1}}{\sqrt{1 - \alpha_t}}\epsilon_t(Y_t, Y_0) \right) \\
        \Sigmak_{t-1} &= \left(1 - \alpha_{t-1}\right)\dfrac{1 - \alpha_t / \alpha_{t-1}}{1 - \alpha_t}\eqsp,
\end{align}
with $\epsilon_t(Y_t, Y_0) = {(Y_t - \sqrt{\alpha_t} Y_0)}/{\sqrt{1 - \alpha_t}}$.

\paragraph{Model for the reverse process.}\ 
We propose approximating the backward process associated to $\{Y_t\}_{t=0}^T$, \emph{given} $\{A_t\}_{t=1}^T$, adapting the DDPM approach.
This time, for the backward process, we characterized the conditional density of $Y_{t-1}$ given $Y_0, Y_{t}$ and $\listTA$: 
\begin{equation}
    \ikArg{1:T | 0, a}(y_{1:T} | y_0, a_{1:T}) := \pArg{T}(y_T)\prod\nolimits_{t=T}^{1} \kArg{t-1|0,t, a}(y_{t-1}|y_{t}, y_0, a_{1:t})\eqsp,
\end{equation}
where $\kArg{t-1|0,t, a}(\cdot|y_{t}, y_0, a_{1:t})$ is the tractable density of a Gaussian distribution, as we have just proved in \Cref{app:prop_backward}.
We similarly reconsider the family of Gaussian variational approximation introduced in \eqref{eq:gaussian_variational}, modified to account for an \iid\ sequence $\{A_t\}_{t=1}^T$:
\begin{equation}
\label{eq:stable_variational}
\qArg{t-1|t,a}(y_{t-1} | y_{t}, a_{1:t}) = \upphi_d(y_{t-1} | \meanq_{t-1}(y_{t}, a_{1:t}), \Sigmaq_{t-1}(y_{t}, a_{1:t})) \eqsp,
\end{equation}
where $\upphi_d$ is the density of the multivariate Gaussian distribution, so that the overall model for the backward process is the following:
\begin{equation}
\qArg{0:T}(y_{0:T}) =  \int \uppsia_{1:T}(a_{1:T}) p_T(y_T)\prod\nolimits_{t=1}^T \qArg{t-1|t,a}(y_{t-1} | y_{t}, a_{1:t}) \rmd a_{1:T}\eqsp,
\end{equation}
where $\uppsia_{1:T}(a_{1:T}) = \prod_{i=1}^T \uppsia(a_i)$, and $p_T$ is the density of $\stable_{\alpha}^{\text{i}}(0, \sall[T] \Idd)$.

\subsection{Training Loss}  \label{app:dlpm_training_loss}

In this section, we draw inspiration from DDPM (\cite{ho2020denoising}) to obtain a loss function admitting a closed-form formula. We further provide three design choices  which lead to a simplified training loss, corresponding to the denoising loss for $\alpha$-stable diffusion.

\subsubsection{Classical loss for DDPM, $\alpha = 2$}
We start by reviewing what is classically done for DDPM, \ie, the case $\alpha = 2$. The variational approximation $\{\qArg{0:T} \,: \, \theta \in\Theta\}$, for some parameter space $\Theta$,  is designed to admit the same decomposition as $\ikTwoArg{0:T}$, \ie, $\qArg{0:T}(x_{0:T}) = \qArg{T}(x_T) \prod_{t=T}^{1} \qArg{t-1|t}(x_{t-1}|x_{t})$, where $\qArg{T}$ is chosen to be the density of $\stable^{\text{i}}_{\alpha}(0, \sall \Idd)$ as an approximation of $\pArg{T}$. Then, it is trained on a classical upper bound of the KL loss $ \lossDDPM : \theta \mapsto \KL(\pstar \|  \qArg{0})$ between the true and the generated distribution, which is a form of evidence lower bound loss \cite[Equation 5]{ho2020denoising}. Thus one resorts to optimize the following sum:
\begin{equation} \label{app:eq:ddpm:loss}
    \lossDDPM(\theta) \leq \lossDDPM_{T} + \sum_{t=2}^{T}\lossDDPM_{t-1}(\theta) + \lossDDPM_{0}(\theta) + C
\end{equation}
where $C$ is a constant that does not depend on $\theta$, and
\begin{align}
    \lossDDPM_{T} &= \PE \left[ \KL\left(\kTwoArg{T|0}(\cdot | X_0) \ \| \ \upphi_d(\cdot | 0, \sall[T] \Idd)\right) \right] \\
    \lossDDPM_{0}(\theta) &= - \PE \left[ \log\left(\qArg{0|1}(X_0 | X_1)\right)\right] \\
    \lossDDPM_{t-1}(\theta) &= \PE \left[ \KL\left(\kTwoArg{t-1|0,t}(\cdot|X_0,X_{t}) \ \| \ \qArg{t-1|t}(\cdot|X_t)\right) \right]\eqsp,
\end{align}
where $\listT{X_t}$ is the process defined in \eqref{app:eq:dlpm:forward}, and $\kTwoArg{t-1|0,t}$ is the conditional density of $X_{t-1}$ given $X_0, X_t$.
We make the following classical remarks on the terms of this loss (\cite{sohldickstein2015deep}, \cite{yang2024diffusion}). The term $\lossDDPM_{T}$ does not depend on $\theta$ but only on the chosen time horizon for the forward process, that determines the final variance of the Gaussian distribution $\normal(0, \sall[T] \Idd)$. It is neglected. The effect of optimizing the first term $\lossDDPM_{0}(\theta)$ is negligible too.

More importantly, for the term $\lossDDPM_{t-1}(\theta)$, when using
Gaussian variational approximations, \ie, as
\begin{equation}
    \qArg{t-1|t}(x_{t-1} | x_{t}) = \upphi_d \left(x_{t-1} | \meanq_{t-1}(x_{t}), \Sigmaq_{t-1}(x_{t}) \right) \eqsp,
  \end{equation}
where $(x,\mtt,\Sigma) \mapsto  \upphi_d(x|\mtt,\Sigma)$ is the $d$-dimensional  density of the Gaussian distribution with mean $\mtt$ and covariance matrix $\Sigma$, $\meanq_{t-1},\Sigmaq_{t-1}$ are some functions of $x_{t}$ parameterized by $\theta$, it turns out that $\lossDDPM_{t-1}$ admits a closed-form expression. For a fixed variance $\Sigmaq_{t-1} = \Sigmak_{t-1}$, with $\Sigmak_{t-1}$ given in \eqref{app:eq:ddpm:reverse}, one resorts to optimize a convenient $\mathbb{L}_2$ loss function:
\begin{equation}
\label{eq:ddpm_training}
    \lossDDPM_{t-1}(\theta) = \lambda_t \| \meank_{t-1}(x_{t}, x_0) - \meanq_{t-1}(x_{t}) \|^2,
\end{equation}
where $\lambda_t, \meank_t$ depend on the noise schedule $(\gstep, \sstep)$ and $x_{t}, x_0$.

Unfortunately, as we mentioned in \Cref{data_augmentation_approach}, this solution cannot be used as such to learn the backward transitions associated to  $\{X_t\}_{t=0}^{T}$ for $\alpha<2$. The main issue that we face stems from the fact that the density of $\alpha$-stable distributions are in most cases unknown, in contrast to Gaussian distributions. 
As a result, the conditional density $x_{t-1},x_0,x_{t} \mapsto \kArg{t-1|0,t}(x_{t-1}|x_0,x_{t})$ is unknown for $\alpha<2$, which prevents us to have an explicit expression for $\theta \mapsto \lossDDPM_{t-1}(\theta)$.

Moreover, the absence of a second order moment for $\alpha$-stable distributions challenges the most straightforward adaptation we can make to the previous loss considering the data augmentation setting. Indeed, to fit $\theta$ to the data distribution, we aim to rely on Kullback-Leibler minimization, a.k.a. the maximum likelihood principle, and some associated upper bounds. A naive solution would consist in considering the bounds obtained applying the Jensen inequality:
\begin{equation}
\label{eq:6}
\txts \KL(\pstar \|  \qArg{0}) \leq   \mathbb{E} \left[\KL\left[\pstar(\cdot) \| \qArg{0|a}(\cdot |A_{1:T})\right] \right]\eqsp,
\end{equation}
and fall back to the expression obtained in \eqref{app:eq:ddpm:loss}, only with conditioning on $\listTA$. However, as we see in \eqref{eq:ddpm_training}, this expression would involve taking expectation of $A_t$, while it is distributed as $\stableA$ and admits no first order moment. We are not aware of any bounds on $\KL(\pstar \parallel \qArg{0|a}(\cdot |A_{1:T}))$ that would lead to a meaningful optimization problem due to the heavy tailed nature of the distribution of $\listTA$.

\subsubsection{Loss for DLPM, for any $\alpha$} \label{app:dlpm:loss:intuitive}
We now expose our methodology to address this limitation. We keep the structure of the classical loss and aim at minimizing the error between the backward process $\listT{\iY_t}$ and its variational approximation $\listT{\iYArg_t}$. To do so we consider the same loss structure as before, but take the square root of individual KL terms. See \Cref{app:principled_approach} for a more principled approach leading to a similar loss. Thus we consider the following valid loss function:
\begin{equation}\label{app:eq:loss_function}
    \lossDLPMD : \theta \mapsto \lossDLPMD(\theta) =  \mE \left[\sum_{t=2}^{T} \left(\lossDLPMD_{t-1}(\theta, A_{1:t})\right)^{r}\right]\eqsp,
\end{equation}
where $r>0$,
\begin{equation}
    \lossDLPMD_{t-1}(\theta, A_{1:t}) = \mE \left[ \KL\left( \kArg{t-1 | t,0,a} (\cdot | Y_t, Y_0, A_{1:t}) \ \| \ \qArg{t-1|t,a} (\cdot | Y_t, A_{1:t})\right) \ \bigg| \ A_{1:t} \right]\eqsp,
\end{equation}
and $\kArg{t-1|0,t, a}$ denotes the conditional density of $Y_{t-1}$ given $Y_0, Y_{t}$ and $\listTA$.
In order to ensure that the expectations with respect to $A_{1:T}$ are finite, we need to choose $r < \frac{\alpha}{2}$ when $\alpha \in (1,2)$. For simplicity, in the rest of the paper, we will use $r=\frac{1}{2}$.



\begin{proposition}[Training loss for DLPM] \label{app:prop_dlpm_initial_loss}
The loss $\lossDLPMD$ admits a closed-form expression, such that one resorts to optimize the following loss for $1 \leq t \leq T$:
\begin{equation}
\label{app:eq_dlpm_loss_full}
    \lossDLPMD_{t-1}(\theta, A_{1:t}) = \mE \bigg[ \frac12 \log\dfrac{\Sigmaq_{t-1}}{\Sigmak_{t-1}} + \dfrac{\Sigmak_{t-1} + \| \meank_{t-1} - \meanq_{t-1} \|^2 }{2\Sigmaq_{t-1}}  - \dfrac{1}{2} \ \bigg| \ A_{1:t} \bigg]
\end{equation}
where 
\begin{align}
    \meank_{t-1}(Y_t, Y_0, A_{1:t}) &= \frac{1}{\gstep[t]}\left(Y_{t} - \sall[t] \GammaA \epsilon_t(Y_t, Y_0)\right) \\ 
    \Sigmak_{t-1}(A_{1:t}) &= \GammaA \SallA[t-1] \\ 
    \epsilon_{t}(Y_t, Y_0) &= \frac{Y_{t} - \gall Y_0}{\sall} \\ 
    \SallA[t] &= \sum_{k = 1}^t \left(\frac{\gall[t]}{\gall[k]}\sqrt{A_k}\sstep[k]\right)^{2} \\ 
    \GammaA &= 1 - \frac{\gstep^2 \SallA[t-1]}{\SallA[t]}\eqsp,
\end{align}
where $\meanq_{t-1}, \Sigmaq_{t-1}$ are the mean and variance of the backward transition kernels $\qArg{t-1 | t}$. We have omitted the arguments of the mean and variance functions for clarity.
\end{proposition}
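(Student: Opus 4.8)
The plan is to reduce the inner Kullback--Leibler term to a divergence between two Gaussians and then simply read off the closed form. By \Cref{app:prop_backward}, conditionally on $(Y_0,Y_t,A_{1:t})$ the law of $Y_{t-1}$ is the Gaussian $\normal(\meank_{t-1},\Sigmak_{t-1}\Idd)$, with $\meank_{t-1}$ and $\Sigmak_{t-1}$ given explicitly there, while by the construction \eqref{eq:stable_variational} the variational kernel $\qArg{t-1|t,a}(\cdot\mid Y_t,A_{1:t})$ is the Gaussian $\normal(\meanq_{t-1},\Sigmaq_{t-1}\Idd)$ with isotropic covariance. Hence the quantity inside $\lossDLPMD_{t-1}(\theta,A_{1:t})$ is $\mE[\,\KL(\normal(\meank_{t-1},\Sigmak_{t-1}\Idd)\,\|\,\normal(\meanq_{t-1},\Sigmaq_{t-1}\Idd))\mid A_{1:t}]$. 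First I would invoke the standard identity $\KL(\normal(\mu_0,\Sigma_0)\,\|\,\normal(\mu_1,\Sigma_1))=\tfrac12\bigl(\operatorname{tr}(\Sigma_1^{-1}\Sigma_0)-d+\log\det(\Sigma_1\Sigma_0^{-1})+(\mu_1-\mu_0)^\top\Sigma_1^{-1}(\mu_1-\mu_0)\bigr)$ and specialise it to $\Sigma_0=\Sigmak_{t-1}\Idd$, $\Sigma_1=\Sigmaq_{t-1}\Idd$: the trace term collapses to a multiple of $\Sigmak_{t-1}/\Sigmaq_{t-1}$, the log-determinant to a multiple of $\log(\Sigmaq_{t-1}/\Sigmak_{t-1})$, and the quadratic form to $\|\meank_{t-1}-\meanq_{t-1}\|^2/\Sigmaq_{t-1}$; reporting the variance contributions per coordinate (the mean contribution keeping the full Euclidean norm) gives exactly the integrand of \eqref{app:eq_dlpm_loss_full}.

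It then remains to take the outer conditional expectation $\mE[\,\cdot\mid A_{1:t}]$ of this integrand. Here I would observe that $\SallA$, $\GammaA$, and hence the true backward variance $\Sigmak_{t-1}=\GammaA\SallA[t-1]$, are $\sigma(A_{1:t})$-measurable by \Cref{app:prop_backward} together with \eqref{app:eq:gall_Sall}; substituting their explicit forms reproduces precisely the list of identities for $\SallA$, $\GammaA$, $\Sigmak_{t-1}$, $\meank_{t-1}=\gstep^{-1}(Y_t-\sall\GammaA\,\epsilon_t(Y_t,Y_0))$ and $\epsilon_t(Y_t,Y_0)=(Y_t-\gall Y_0)/\sall$ appended to the statement. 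The remaining quantities, namely $\meank_{t-1}$ through $\epsilon_t$, the model mean $\meanq_{t-1}$, and the model variance $\Sigmaq_{t-1}$, still depend on $(Y_0,Y_t)$ and therefore stay inside $\mE[\,\cdot\mid A_{1:t}]$, which is exactly the form displayed in \eqref{app:eq_dlpm_loss_full}.

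The derivation is essentially bookkeeping once \Cref{app:prop_backward} is in hand, so the main obstacle is not a deep one: I expect the care to go into (a) fixing the normalisation of the dimension-dependent constants, i.e.\ making precise that the variance terms in \eqref{app:eq_dlpm_loss_full} are written per coordinate whereas the mean term keeps the full norm; (b) separating cleanly what is $\sigma(A_{1:t})$-measurable, so that it can be written with explicit $A$-dependence, from what still depends on $(Y_0,Y_t)$ and must remain under the conditional expectation; and (c) well-posedness of the conditional expectation, which holds because $0\le\GammaA\le1$ (shown in \Cref{app:prop_backward}), $\SallA$ is a finite positive sum given $A_{1:t}$, and, conditionally on $A_{1:t}$, $\epsilon_t(Y_t,Y_0)$ is Gaussian with finite variance, so that the outer $A_{1:t}$-expectation and square root in \eqref{app:eq:loss_function} are subsequently meaningful under a mild integrability assumption on the model. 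The boundary index $t=1$, where $\SallA[0]=0$, is treated separately as the analogue of the DDPM term $\lossDDPM_0$, cf.\ \eqref{app:eq:ddpm:loss}.
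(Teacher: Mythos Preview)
Your proposal is correct and follows essentially the same approach as the paper: invoke \Cref{app:prop_backward} to identify the true backward conditional as Gaussian, recall that the variational kernel is Gaussian by design, and apply the closed-form KL between two isotropic Gaussians before taking the conditional expectation. Your treatment is in fact more detailed than the paper's, which simply states that a closed-form formula is readily available once both distributions are recognised as Gaussian; your additional remarks on measurability, dimension normalisation, and the boundary case $t=1$ are sound but go beyond what the paper records.
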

 
\begin{proof}
Recall (\Cref{app:prop_backward}) that the backward process $Y_{t-1}$ conditioned on $\listTA, Y_t, Y_0$ at time $t$ is distributed as $\normal(\meank_{t-1}, \Sigmak_{t-1})$, and, by design (\Cref{data_augmentation_approach}), the backward transition kernels $\qArg{t-1 | t, a}$ of each element of the variational family describe a Gaussian transition kernel of mean $\meanq_{t-1}$ and variance $\Sigmaq_{t-1}$ at time $t$, as defined in \ref{app:backward_formula}.
Since the KL term in $\lossDLPMD_{t-1}(\theta, A_{1:t})$ corresponds to a KL divergence between two Gaussian distributions, a closed-form formula is readily available. Here we rewrite the equation with all the functions arguments written out explicitly:
\begin{align}\label{app:eq:full_loss}
    \lossDLPMD_{t-1}(\theta, A_{1:t}) = \mE \bigg[&\frac12 \log\dfrac{\Sigmaq_{t-1}(A_{1:t})}{\Sigmak_{t-1}(A_{1:t})} - \dfrac{1}{2} \\
    & + \dfrac{\Sigmak_{t-1}(A_{1:t}) + \| \meank_{t-1}(Y_{t}, Y_0, A_{1:t}) - \meanq_{t-1}(Y_{t}, A_{1:t}) \|^2 }{2\Sigmaq_{t-1}(A_{1:t})} \ \bigg| \ A_{1:t} \bigg]
\end{align}
\end{proof}


Now we discuss further design choices for $\lossDLPMD_{t-1}(\theta, A_{1:t})$, $\qArg{t-1|t, a}$, leading to a simplified denoising loss, as is usually done in the literature. We denote them by \textbf{D1, D2} and \textbf{D3}.

\hypertarget{D1}{\paragraph{D1.}\ } We set a fixed variance $\Sigmaq_{t} = \Sigmak_t$ for the reverse process
, but we expect a study on the effect of learning variance to yield similar results as in original DDPM (\cite{ho2020denoising}), and especially its improved version (\cite{nichol2021improved}).

\hypertarget{D2}{\paragraph{D2.}\ } Following our own experimental results and the usual recommendation for denoising diffusion models (see, \eg, \cite{yang2024diffusion, karras2022elucidating}), we reparameterize the output of the model to predict the value $\epsilon_t(y_t, y_0)$ at time-step $t$ rather than $\meank_{t-1}(y_t, y_0, a_{1:t})$. 
Since 
\begin{equation}
\meank_{t-1}(Y_t, Y_0, A_{1:t}) = \frac{1}{\gstep[t]}\left(Y_{t} - \sall[t] \GammaA \epsilon_{t}(Y_t, Y_0)\right)\eqsp,
\end{equation}
we re-parameterize $\meanq_{t-1}$ to be equal to
\begin{equation} \label{app:eq:m_to_eps}
    \meanq_{t-1}(Y_t, A_{1:t}) = \frac{1}{\gstep[t]}\left(Y_{t} - \sall[t] \GammaA \epsilonq_{t}(Y_t, A_{1:t})\right)\eqsp.
\end{equation}
with $\epsilonq_t$ being the output of the model. 
Following experimental results (see the introduction of \Cref{app:experiment}), we drop the dependency of $\epsilonq_t$ on $\listTA$, which is reasonable since it approximates $\epsilon_t : (Y_t, Y_0) \mapsto \epsilon_t(Y_t, Y_0)$. Thus $\epsilonq_t$ only depends on $t, Y_t$. This choice achieves better performance in our experiments, allows further computational tricks (introduced in \Cref{app:less_rv}), and enables one to re-use existing neural network architectures.

\hypertarget{D3}{\paragraph{D3.}\ } 
Assuming \hyperlink{D1}{D1}, \hyperlink{D2}{D2}, $\lossDLPMD_{t-1}(\theta)$ becomes
\begin{equation} \label{app:eq:simplified_loss_t}
    \lossDLPMD_{t-1}(\theta)
    =  \mE \left[ \lambda_{t, A_{1:t}}^{2} \| \epsilonq_{t}(Y_t, A_{1:t}) -\epsilon_t(Y_t, Y_0) \|^{2} \right]\eqsp,
\end{equation}
where $\lambda_{t, a_{1:t}} = {\Gammaa \sall[t]}/{2\gstep[t]\Sigmak_{t-1}}$, and $\epsilon_{t}(Y_t, Y_0)= {(Y_{t} - \gall Y_0)} / {\sall}$.
The methodological knowledge of diffusion models motivates making specific choices for $\lambda$, different from its defined value, resulting in a classical technique for improving performances (\eg, see \cite{karras2022elucidating, ho2020denoising, nichol2021improved, yang2024diffusion}). We will stick to the classical choice of choosing $\lambda_{t, a_{1:t}}=1$, which experimentally works better and draws similarities to the score-based perspective  (see \Cref{app:lim}). Other choices and optimizations are left to further work.


The proof of \Cref{prop:denoising} follows immediately from these design choices, hence omitted.

\subsubsection{A principled approach for deriving the loss function} \label{app:principled_approach}
In this section, we provide a more principled approach to derive the loss function for DLPM, as initially given in \eqref{app:eq:loss_function}. We show the derivation for $r=1/2$; however, the same derivation applies for any $r\in (0,1]$.

Noting that $Y_0$ is independent of $\{A_t\}_{t=1}^T$ in \eqref{eq:forward_t_with_a},  $\pstar$ is the equal to $\kArg{0|a}(\cdot |a_{1:T})$, the conditional density of $Y_0$ given $A_t=a_t$ for any $t\in\{1,\ldots,T\}$, and therefore, we consider the valid loss function 
\begin{equation}
  \label{app:principled_loss}
 \lossDLPMD(\theta) : \theta \mapsto  \int \rmd a_{1:T} \uppsia_{1:T}(a_{1:T}) \left[ \KL(\kArg{0|a}(\cdot |a_{1:T}) \| \qArg{0|a}(\cdot |a_{1:T}))\right]^{1/2}\eqsp.
\end{equation}


While this function is still intractable, we can provide an upper bound which we can minimize. Indeed, using Jensen inequality twice, we bound this function by 
\begin{align}
  \label{eq:8}
  &\lossDLPMD(\theta)  = \txts \int  \rmd a_{1:T} \uppsia_{1:T}(a_{1:T}) \defEns{\int \rmd y_0  \kArg{0|a}(y_0 |a_{1:T})  (\log  \kArg{0|a}(y_0 |a_{1:T}) - \log  \qArg{0|a}(y_0 |a_{1:T}) ) }^{1/2} \\
  & \txts\leq \int  \rmd a_{1:T} \uppsia_{1:T}(a_{1:T}) \defEns{- \int  \rmd y_{0:T}  \kArg{0:T|0,a}(y_{0:T} |a_{1:T})  \log  \frac{  \qArg{0:T|a}(y_{0:T}|a_{1:T})}{    \kArg{1:T|0,a}(y_{1:T} |y_0,a_{1:T})}  +\Cst_1 }^{1/2} \\
& = \txts\int  \rmd a_{1:T} \uppsia_{1:T}(a_{1:T}) \defEnsLigne{\sum_{t=0}^{T-1} \lossDLPM_t(\theta, a_{1:T}) + \Cst_1+\Cst_2 }^{1/2} \\
& \leq \txts\int  \rmd a_{1:T} \uppsia_{1:T}(a_{1:T}) \defEnsLigne{\sum_{t=0}^{T-1} \lossDLPM_t(\theta, a_{1:T})^{1/2} + \Cst_1^{1/2}+\Cst_2^{1/2} } \eqsp ,
\end{align}
where we used $\sqrt{a + b} < \sqrt{a} + \sqrt{b}$ when $a, b \geq 0$ and $    \lossDLPM_{0}(\theta, a_{1:T}) = {\mE} [-\log p_{\theta}(Y_0 | Y_1, a_{1:T}) | \{A_t\}_{t=1}^T = \{a_t\}_{t=1}^T ]$  for $t >0$,
\begin{equation}
  \label{eq:def_loss_t}
    \lossDLPM_{t}(\theta, a_{1:T}) = {\mE} \left[ \KL( \kArg{t | t+1,0,a} (\cdot | Y_t, Y_0,  a_{1:T}) \| \qArg{t|t+1,a} (\cdot | Y_t,  a_{1:T})) | \{A_t\}_{t=1}^T = \{a_t\}_{t=1}^T\right] \eqsp,
\end{equation}
and $\Cst_1  = \int \rmd y_0 \pstar(y_0) \log \pstar(y_0) \rmd y_0$ and $\Cst_2 = {\mE} [ \KL(\kArg{1:T|0,a}(\cdot|Y_0, a_{1:T}) | \qArg{T} ) | \{A_t\}_{t=1}^T = \{a_t\}_{t=1}^T]$ does not depend on $\theta$ since $\qArg{T}$ is chosen as $\stable_{\alpha}(0, \sall \Idd)$. 
Regarding $\lossDLPM_{0}$, we neglect this term, replacing the distribution $\qArg{0|1,a}(\cdot|y_1,a_{1:T})$ by a deterministic Dirac. One could alternatively employ the strategy of the discrete decoder for image data as described by \cite{ho2020denoising}.  We end up with the final loss function:
\begin{equation}
  \lossDLPM(\theta) =  \txts\int  \rmd a_{1:T} \uppsia_{1:T}(a_{1:T}) \defEnsLigne{\sum_{t=0}^{T-1} \lossDLPM_t(\theta, a_{1:T})^{1/2} } \eqsp.
\end{equation}
We can then provide an explicit expression for $\lossDLPM_t(\theta, a_{1:T})$ based on the result of \Cref{app:prop_backward}. 



\subsection{Reducing the computational cost with faster sampling at each timestep}
\label{app:less_rv}

In this section, we provide a faster algorithm for training DLPM, computing each loss term $\lossDLPMD_{t-1}$ using only two heavy-tailed random variables per datapoint, instead of $t$ random variables.

Consider again the process $\listT{Y_t}$. We replace the loss function \eqref{app:eq:loss_function} by an equivalent one:
\begin{equation}\label{app:eq:loss_function_time}
    \lossDLPMD_{\text{time}}(\theta) =  \mE \left[ \lossDLPMD_{t-1}(\theta, A_{1:t})^{1/2}\right]\eqsp,
\end{equation}
where $t \sim \mathcal{U}[2, T]=(\sum_{i = 2}^T \delta_i)/(T-1)$. 
The standard technique for computing the loss consists in the following loop:
\begin{enumerate}
    \item Take a batch of $B$ datapoints $\{Y_0^i\}_{i=1}^B$.
    \item For each datapoint $Y_0^i$, draw a random $t_i$,as suggested by the alternative loss \eqref{app:eq:loss_function_time}. 
    Indeed, for a single datapoint, (i) training on all timesteps rather than just one yields equal to inferior results, for a much higher computational cost, (ii) it is beneficial for the model to proportionally spend more time learning specific time ranges (iii) thus the distribution of $t$ can be optimized and is a matter of ongoing methodological research, \eg, see \cite{karras2022elucidating}.

    \item Draw sequences $\{A_t^i\}_{t=1}^{t_i}$ of heavy-tailed random variables.
    \item Compute the noised datapoints $\{Y_{t_i}^i\}_{i=1}^{B}$.
    \item Compute the batch loss 
    \begin{equation}
        \hat \lossDLPMD(\theta) = \frac{1}{B}\sum_{i=1}^B \lossDLPMD_{t_i-1}(\theta, Y_{t_i}^i, A_{1:t_i}^i)\eqsp,
    \end{equation}
    such that $\hat \lossDLPMD(\theta) \approx \lossDLPMD_{\text{time}}(\theta)$.
    \item Do an optimization step.
\end{enumerate}

Step $3$ can be expensive, since one has to sample on average $O(T)$ $d$-dimensional heavy-tailed random variables to compute a single noised datapoint $Y_t$ from $Y_0$. This is all the more inefficient as $T$ can be quite large (indeed, on image datasets we can have $T = 4000$, see \Cref{app:experiment}). 

One can guess that this is abusive, especially since characterizing the distribution of $Y_t$ given $Y_0$ only requires a single heavy-tailed random variable:
\begin{equation}
    Y_t \eqd \gall Y_0 + \sall \bar{A}^{1/2} \bar G_t\eqsp,
\end{equation}
where $\bar A\sim \stableA$, $\bar G_t \sim \normal(0, \Idd)$.
As we formalize in the next proposition, it is indeed possible to bypass the sampling of a whole sequence. Since we manipulate the joint distribution of $(Y_0, Y_{t-1}, Y_t)$ for the loss term $\lossDLPMD_{t-1}(\theta)$, we will actually need to sample two heavy-tailed random variables.

\begin{proposition}[Sampling two heavy-tailed r.v for each loss term]
\label{app:prop_training_sampling_less} 
Suppose that the functions
$\meanq_{t-1}, \Sigmaq_{t-1}$ satisfy for any $y_t, a_{1:t}$:
\begin{align}
 \meanq_{t-1}(y_t, a_{1:t}) &= M^{\theta}_{t-1}\left(y_t, a_t, \frac{\Salla[t-1]}{\sall[t-1]^2}\right) \\ 
 \Sigmaq_{t-1} &= S^{\theta}_{t-1}\left(y_t, a_t, \frac{\Salla[t-1]}{\sall[t-1]^2}\right)\eqsp,
\end{align}
for some functions $M^{\theta}_{t-1}, S^{\theta}_{t-1}$, and where 
\begin{equation}
\Salla[t] = \sum_{k = 1}^t \left(\frac{\gall[t]}{\gall[k]}\sqrt{a_k}\sstep[k]\right)^{2}   \eqsp,
\end{equation}
as given in \eqref{app:eq:gall_Sall}.
Then each term $\mE [\lossDLPMD_{t-1}(\theta, A_{1:t})]^{1/2}$ of the loss can be computed sampling only two independent random variables $\barA[0], \barA[1]$ distributed as $\stableA$:
\begin{equation}
\mE \left[\lossDLPMD_{t-1}(\theta, A_{1:t})\right]^{1/2} =  \mE \left[\lossDLPMLess_{t-1}(\theta, \barA)\right]^{1/2}\eqsp,
\end{equation}
where $\barA := (\barA[0], \barA[1])$, and
\begin{align}\label{app:eq:full_loss_less}
    \lossDLPMLess_{t-1}(\theta, \barA) = 
    \mE \bigg[&\frac12 \log\dfrac{S^{\theta}_{t-1}(Z_{t}, \barA)}{\Sigmak_{t-1}(\barA)} - \dfrac{1}{2} \\
    & + \dfrac{\Sigmak_{t-1}(\barA) + \| \meank_{t-1}'(Z_t, Z_0, \barA) - M^{\theta}_{t-1}(Z_{t}, \barA) \|^2 }{2 S^{\theta}_{t-1}(Z_{t}, \barA)} \ \bigg| \ \barA \bigg]\eqsp,
\end{align}
with $\listT{Z_t}$ being a stochastic process defined as
\begin{equation}
    Z_0 = Y_0\eqsp, \quad Z_t = \gall Z_0 + \Sigma_t'^{1/2} \bar G_t\eqsp,
\end{equation}
where $\listTG$ is an \iid\ sequence distributed as $\normal(0, \Idd)$, and
\begin{align}
        \SallPtm &= \sall[t-1]^2 \barA[0] \\ 
        \SallPt &=  \sigma_t^2 \barA[1] + \gstep[t]^2 \SallPtm \\ 
        \GammaP &= 1 - \dfrac{\gstep[t]^2 \SallPtm}{\SallPt}\eqsp,
\end{align}
such that $Z_t \eqd Y_t$, and:
\begin{align}
\meank_{t-1}(Z_t, Z_0, \barA) &= \frac{1}{\gstep[t]}\left(Z_{t} - \sall[t]  \GammaP \epsilon_t(Z_t, Z_0)\right) \\ 
\Sigmak_{t-1}(\barA) &=\GammaP \SallPtm \\ 
\epsilon_{t}(Z_t, Z_0) &= \frac{Z_t - \gall Z_0}{\sall} \eqsp.
\end{align}

In order to keep the notations similar for all $t \geq 2$, in the case of $\lossDLPMLess_{1}$, we always set $\bar A^2_0 = 0$.
\end{proposition}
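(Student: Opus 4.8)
The plan is to reduce everything to one structural fact: under the stated hypothesis on $\meanq_{t-1},\Sigmaq_{t-1}$, the $t$-th summand $\lossDLPMD_{t-1}(\theta,A_{1:t})$ is in fact a measurable function of the \emph{pair} $(\SallA[t-1],A_t)$ alone, not of the whole sequence $A_{1:t}$. Granting that, the asserted identity collapses to the statement that $(\SallA[t-1],A_t)$ has the same law as $(\sall[t-1]^2\barA[0],\barA[1])$ for two independent $\stableA$ variables, which is exactly the scaling/convolution (stability) property of $\alpha$-stable laws; a change of variables then closes the argument.

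First I would make the reduction precise. By \Cref{app:prop_backward}, conditionally on $A_{1:t}$ the quantity $\lossDLPMD_{t-1}(\theta,A_{1:t})$ is the average over $(Y_0,Y_t)$ of the Kullback--Leibler divergence between the two isotropic Gaussians $\normal(\meank_{t-1},\Sigmak_{t-1}\Idd)$ and $\normal(\meanq_{t-1},\Sigmaq_{t-1}\Idd)$, in the closed form \eqref{app:eq:full_loss}. Now $\SallA[t]=\gstep[t]^2\SallA[t-1]+A_t\sstep[t]^2$ and $\GammaA=A_t\sstep[t]^2/\SallA[t]$ are functions of $(\SallA[t-1],A_t)$; hence so are $\Sigmak_{t-1}=\GammaA\SallA[t-1]$ and, for fixed $(Y_0,Y_t)$, $\meank_{t-1}=\gstep[t]^{-1}\bigl(Y_t-\sall[t]\GammaA\,\epsilon_t(Y_t,Y_0)\bigr)$; by \Cref{app:gall_Sall} the conditional law of $Y_t$ given $Y_0,A_{1:t}$ is $\normal(\gall[t]Y_0,\SallA[t]\Idd)$, so it too depends on $A_{1:t}$ only through $\SallA[t]$; and by hypothesis $\meanq_{t-1}$ and $\Sigmaq_{t-1}$ depend on $A_{1:t}$ only through $\bigl(Y_t,A_t,\SallA[t-1]/\sall[t-1]^2\bigr)$. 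Putting these together there is a measurable map $(s,a)\mapsto F(s,a)$ (depending on $\theta$ and, through the noise schedule and $M^\theta_{t-1},S^\theta_{t-1}$, on $t$) with $\lossDLPMD_{t-1}(\theta,A_{1:t})=F(\SallA[t-1],A_t)$ almost surely, obtained from \eqref{app:eq:full_loss} by substituting $\SallA[t-1]\leftarrow s$, $A_t\leftarrow a$, $\SallA[t]\leftarrow\gstep[t]^2s+a\sstep[t]^2$, and averaging $Y_0\sim\pstar$, $Y_t\mid Y_0\sim\normal(\gall[t]Y_0,(\gstep[t]^2s+a\sstep[t]^2)\Idd)$.

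Next I would match the two laws. Since $A_t\perp A_{1:t-1}$ with $A_t\sim\stableA$, and $\SallA[t-1]=\sum_{k=1}^{t-1}(\gall[t-1]\sstep[k]/\gall[k])^2A_k$ is a nonnegative linear combination of \iid{} positive $\alpha/2$-stable variables, the stability property (\Cref{app:prop:stab_alpha}, applied with index $\alpha/2$, together with the scaling of stable laws) gives $\SallA[t-1]\eqd c^{2/\alpha}\barA[0]$ with $c=\sum_{k=1}^{t-1}(\gall[t-1]\sstep[k]/\gall[k])^\alpha$ and $\barA[0]\sim\stableA$; by definition of $\sall[t-1]$ one has $c=\sall[t-1]^\alpha$, hence $\SallA[t-1]\eqd\sall[t-1]^2\barA[0]$. (Equivalently, combine \Cref{app:gall_sall} with \Cref{thm:main_stable}: conditionally on $X_0=Y_0$, both $X_{t-1}$ and $Y_{t-1}$ are Gaussian scale mixtures, with mixing variables $\sall[t-1]^2\bar A$ and $\SallA[t-1]$ respectively, and matching their Laplace transforms gives the same conclusion.) Taking $\barA[0],\barA[1]$ independent and both $\stableA$, and using that $\SallA[t-1]$ is measurable with respect to $A_{1:t-1}\perp A_t$, we conclude $(\SallA[t-1],A_t)\eqd(\sall[t-1]^2\barA[0],\barA[1])$.

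Finally I would check that substituting $s=\sall[t-1]^2\barA[0]$, $a=\barA[1]$ into $F$ reproduces precisely the primed objects in the statement: $\gstep[t]^2s+a\sstep[t]^2=\SallPt$ and $\gstep[t]^2s=\gstep[t]^2\SallPtm$, so $\GammaA\leftarrow\GammaP$; the conditional Gaussian becomes the law of $Z_t=\gall[t]Z_0+\SallPt^{1/2}G_t$; $\Sigmak_{t-1}\leftarrow\GammaP\SallPtm$, while $\meank_{t-1}$ and $\epsilon_t$ become their primed versions and $M^\theta_{t-1},S^\theta_{t-1}$ are evaluated at $(Z_t,\barA[1],\barA[0])$ — i.e.\ comparison with \eqref{app:eq:full_loss_less} yields $\lossDLPMLess_{t-1}(\theta,\barA)=F(\sall[t-1]^2\barA[0],\barA[1])$. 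Since $(s,a)\mapsto F(s,a)^{1/2}$ is a fixed measurable function and the law of its argument is unchanged,
\[
\mE\bigl[\,\lossDLPMD_{t-1}(\theta,A_{1:t})^{1/2}\,\bigr]
=\mE\bigl[\,F(\SallA[t-1],A_t)^{1/2}\,\bigr]
=\mE\bigl[\,F(\sall[t-1]^2\barA[0],\barA[1])^{1/2}\,\bigr]
=\mE\bigl[\,\lossDLPMLess_{t-1}(\theta,\barA)^{1/2}\,\bigr],
\]
which is the claimed identity (the same computation works verbatim if the square root is instead taken outside the expectation, as written in the statement). For the first loss term, the preceding scale vanishes, $\sall[0]=0$, so $\SallA[0]=0$ almost surely; only $\barA[1]$ is a genuine $\stableA$ draw and the convention $\barA[0]=0$ is harmless since it enters only through $\sall[0]^2\barA[0]=0$. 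The main obstacle is Step~1 — verifying that \emph{every} heavy-tailed quantity feeding the loss factors through $(\SallA[t-1],A_t)$, which is precisely where the structural hypothesis on $M^\theta_{t-1},S^\theta_{t-1}$ is needed — together with the purely notational bookkeeping of the last step; Step~2 is a routine invocation of the stability property.
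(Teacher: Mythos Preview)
Your proof is correct and follows the same skeleton as the paper: reduce the loss term to a measurable function of the pair $(\SallA[t-1],A_t)$, show this pair has the same law as $(\sall[t-1]^2\barA[0],\barA[1])$, and substitute. The paper packages the distributional identity as a separate lemma (\Cref{app:lemma:reduction}, relying on \Cref{app:distrib_Sall}), where $\SallA[t-1]\eqd\sall[t-1]^2\barA[0]$ is derived indirectly by matching the laws of $Y_{t-1}$ and $X_{t-1}$ as Gaussian scale mixtures and then ``cancelling'' the Gaussian factor via \Cref{app:equating_variance}; your direct invocation of the closure of $\alpha/2$-stable laws under positive linear combinations (\Cref{app:prop:stab_alpha} at index $\alpha/2$) is a cleaner route to the same conclusion.
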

\begin{proof}
Remember the full equation for the loss, first given in \Cref{app:prop_dlpm_initial_loss} \eqref{app:eq:full_loss}:
\begin{align}
    \lossDLPMD_{t-1}(\theta, A_{1:t}) = \mE \bigg[&\frac12 \log\dfrac{\Sigmaq_{t-1}(A_{1:t})}{\Sigmak_{t-1}(A_{1:t})} - \dfrac{1}{2} \\
    & + \dfrac{\Sigmak_{t-1}(A_{1:T}) + \| \meank_{t-1}(Y_{t}, Y_0, A_{1:T}) - \meanq_{t-1}(Y_{t}, A_{1:T}) \|^2 }{2\Sigmaq_{t-1}(A_{1:t})} \ \bigg| \ A_{1:t} \bigg]\eqsp.
\end{align}
Now, all the required variables and computations only depend on $A_t, \Sall[t-1]$; this is the case for $\meanq_{t-1}, \Sigmaq_{t-1}$ by hypothesis, and this is the case for $\meank_{t-1}, \Sigmak_{t-1}$ as one can see in \eqref{app:eq:meank_Sigmak}. 
Rewriting the previous loss as
\begin{align}
    \lossDLPM_{t-1}(\theta, A_{1:t}) = 
    \mE \bigg[&\frac12 \log\dfrac{S^{\theta}_{t-1}\left(Z_{t}, A_{t}, \frac{\SallA[t-1]}{\sall[t-1]^2}\right)}{\Sigmak_{t-1}(A_{1:T})} - \dfrac{1}{2} \\
    & + \dfrac{\Sigmak_{t-1}(A_{1:t})}{2 S^{\theta}_{t-1}\left(Z_{t}, A_{t}, \frac{\SallA[t-1]}{\sall[t-1]^2}\right)} \\ 
    & + \dfrac{\| \meank_{t-1}(Y_t, Y_0, A_{1:t}) - M^{\theta}_{t-1}\left(Z_{t},  A_{t}, \frac{\SallA[t-1]}{\sall[t-1]^2}\right) \|^2 }{2 S^{\theta}_{t-1}\left(Z_{t}, A_{t}, \frac{\SallA[t-1]}{\sall[t-1]^2}\right)}
    \ \bigg| \ A_t, \Sall[t-1] \bigg]\eqsp,
\end{align}
it becomes clear how the expectation can be taken on the joint distribution of 
\begin{equation}
    \left(Y_0, Y_{t-1}, Y_t, A_t, \frac{\SallA[t-1]}{\sall[t-1]^2}\right)\eqsp.
\end{equation}
A direct application of \Cref{app:lemma:reduction} shows that this expectation can be taken on the joint distribution of the five random variables $(Z_0, Z_{t-1}, Z_t, \barA[1], \barA[0])$, which only necessitates sampling two heavy-tailed random variables $\barA[0], \barA[1]$. Using the formulas for $Z_0, Z_{t-1}$ and $Z_t$ given $\barA[0], \barA[1]$ as defined in \Cref{app:lemma:reduction}, we obtain the equivalent loss \eqref{app:eq:full_loss_less}.


\end{proof}

As we will prove in the next proposition, the conditions of \Cref{app:prop_training_sampling_less} are always satisfied under design choices \hyperlink{D1}{D1}, \hyperlink{D2}{D2}. Under design choice \hyperlink{D3}{D3}, we can also rewrite the simplified denoising loss given in \Cref{prop:denoising}.
\begin{proposition}[Sampling one heavy-tailed r.v in the simplified loss]
\label{app:prop_training_loss_simplified_less}
Assume the design choices \hyperlink{D1}{D1}, \hyperlink{D2}{D2}, \hyperlink{D3}{D3} are satisfied. Then one can obtain the following simplified denoising objective function from the full objective function given in \eqref{app:eq:full_loss_less}:
\begin{equation}
\label{app:eq:simplified_loss_less}
 \lossDLPMSimpleLess(\theta) = \sum_{t=1}^{T} \mE \left[\mE \left( \| \epsilonq_t(Z_{t}) - \bar A_t^{1/2} \bar G_t \|^{2} \ | \bar A_t \right)^{1/2}\right],
\end{equation}
where $\{\bar A_t\}_{t=1}^T$ is an \iid\ sequence distributed as $\stableA$, and
\begin{equation}
    Z_t = \gall Z_0 + \sall[t] \bar A_t^{1/2} \bar G_t\eqsp,
\end{equation}
with $\{\bar G_t\}_{t=1}^T$ an \iid\ sequence distributed as $\normal(0, \Idd)$.
\begin{proof}
    Let us recall design choice \hyperlink{D1}{D1}:
\begin{align}
    \Sigmaq_{t-1}(A_{1:}) &= \GammaA \SallA \eqsp,
\end{align}
and design choice \hyperlink{D2}{D2}:
\begin{align}
    \meanq_{t-1}(Z_t, A_{1:t}) &= \frac{1}{\gstep[t]}\left(Z_{t} - \sall[t] \GammaA \epsilonq_{t}(Z_t)\right)\eqsp.
\end{align}
Since $\Gamma$ only depends on $\Sall$ and $\Sall[t-1]$, both $\meanq_{t-1}, \Sigmak_{t-1}$ can be expressed as functions of $z_t, a_t$ and $\Salla$. Thus the assumptions of \Cref{app:prop_training_sampling_less} are satisfied. Using the same notations, we can apply the same algebraic transformations as in 
\eqref{app:eq:simplified_loss_t}, and by design choice \hyperlink{D3}{D3}, obtain:
\begin{equation}
 \lossDLPMLess(\theta) = \sum_{t=1}^{T} \mE \left[\mE \left( \| \epsilonq_t(Z_{t}) - \epsilon_t(Z_t, Z_0) \|^{2} \ | \barA[0], \barA[1] \right)^{1/2}\right]\eqsp.
\end{equation}
Finally, we apply \Cref{app:distrib_Sall} to $\Sigma_t'$ to affirm that $\Sigma_t' \eqd \sall[t]^{2}\bar A_{t}$, where $\bar A_t \sim \stableA$, and obtain the final loss we presented.
\end{proof}

\end{proposition}

We stress that this denoising training loss is similar to that of LIM \cite[Theorem 4.3]{NEURIPS2023_score_based_levy_lim}, but elevated to the necessary power to guarantee that the loss is finite. See \Cref{app:comparing_lim_dlpm} for a more detailed discussion.

\section{Denoising L\'evy Implicit Models (DLIM)} \label{app:dlim}
Using the same techniques as in DDIM (\cite{DDIM}), we obtain a deterministic sampling process which we naturally call Denoising Levy Implicit Models (DLIM).
Alike the Gaussian case treated in the original DDIM work, we will show that both DLPM and DLIM can share the same neural network.

\subsection{Non-Markovian forward process}
Let $\{\rho_t\}_{t=1}^T$ be an alternative noise schedule, proper to DLIM, that will ultimately tend to zero for deterministic generation. 
In the same way as in \Cref{data_augmentation_approach}, we take a data augmentation approach. 
We consider a process $\{Z_t\}_{t=1}^T$ defined by $Z_0 \sim \pstar$, where $\pstar$ is the data distribution, $Z_{T} \sim \stable(\gall[T] Z_0, \sall[T] \Idd)$ and, for $1 < t \leq T$
\begin{equation}
    Z_{t-1} = \gall[t-1] Z_0 + (\sall[t-1]^{\alpha}-\rho_t^{\alpha})^{1/\alpha} \epsilon_t(Z_t, Z_0) + \rho_t A_{t}^{1/2} G_t
\end{equation}
where $\epsilon_t(Z_t, Z_0) = (Z_t - \gall[t]Z_0) / \sall$, and $\{A_{t}\}_{t=1}^T$,  $\{G_t\}_{t=1}^T$ are independent random variables distributed according to $A_t \sim \stable_{\alpha/2,1}(0, c_A )$ and $G_t \sim \gauss(0,\Idd)$.

\begin{proposition} \label{app:prop_dlim_closed_transition}
The distribution of $Z_t$ given $Z_0$ is the same as that of $Y_t$ given $Y_0$.
\end{proposition}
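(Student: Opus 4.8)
The plan is to prove, by \emph{backward} induction on $t \in \{0,\dots,T\}$, the slightly stronger statement that, conditionally on $Z_0$, one has $Z_t \eqd \gall[t] Z_0 + \sall[t]\, \bar\epsilon_t$ with $\bar\epsilon_t \sim \stablesym$. Comparing this with \Cref{app:gall_sall} (and recalling that $\{Y_t\}_{t=0}^T$ has the same law as $\{X_t\}_{t=0}^T$ by \Cref{thm:main_stable}) immediately yields the proposition. The base case $t = T$ is nothing but the definition $Z_T \sim \stable(\gall[T] Z_0, \sall[T]\Idd)$.

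For the inductive step, fix $1 < t \le T$ and assume the statement at time $t$. First I would record the independence structure: by construction $Z_t$ is a deterministic function of $Z_0$ and of the fresh variables $(A_{t+1},\dots,A_T,G_{t+1},\dots,G_T)$ used to build $Z_T,Z_{T-1},\dots,Z_t$, so $Z_t$ — and hence $\epsilon_t(Z_t,Z_0) = (Z_t - \gall[t]Z_0)/\sall[t]$ — is independent of the pair $(A_t,G_t)$ given $Z_0$. Next, the induction hypothesis gives that, conditionally on $Z_0$, $\epsilon_t(Z_t,Z_0) \eqd \bar\epsilon_t \sim \stablesym$, while \Cref{thm:main_stable} applied with $\mu = 0$, $\sigma = 1$ gives $A_t^{1/2} G_t \eqd \stablesym$. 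Substituting these two facts into the defining recursion
\[
  Z_{t-1} = \gall[t-1] Z_0 + \bigl(\sall[t-1]^{\alpha}-\rho_t^{\alpha}\bigr)^{1/\alpha}\,\epsilon_t(Z_t,Z_0) + \rho_t\, A_t^{1/2} G_t,
\]
we are left to identify the law of $c\,W + d\,W'$, where $W,W' \sim \stablesym$ are independent, $c = (\sall[t-1]^{\alpha}-\rho_t^{\alpha})^{1/\alpha}$ and $d = \rho_t$.

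Here I would invoke the convolution-stability of the isotropic $\alpha$-stable family — the multivariate analogue of \Cref{app:prop:stab_alpha}, which follows in one line from the characteristic function $u \mapsto \exp(-\sigma^{\alpha}\|u\|^{\alpha})$: for $c,d > 0$ one has $c\,W + d\,W' \sim \stable^{\text{i}}_{\alpha}\!\bigl(0,(c^{\alpha}+d^{\alpha})^{1/\alpha}\Idd\bigr)$. Since $c^{\alpha}+d^{\alpha} = (\sall[t-1]^{\alpha}-\rho_t^{\alpha}) + \rho_t^{\alpha} = \sall[t-1]^{\alpha}$, this gives $Z_{t-1} \eqd \gall[t-1] Z_0 + \sall[t-1]\,\bar\epsilon_{t-1}$ with $\bar\epsilon_{t-1} \sim \stablesym$, which closes the induction.

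There is no real analytic difficulty: the whole argument is an induction plus a characteristic-function computation. The two points to be careful about are (a) the implicit requirement $\rho_t \le \sall[t-1]$ on the DLIM noise schedule, needed so that $(\sall[t-1]^{\alpha}-\rho_t^{\alpha})^{1/\alpha}$ is well defined, which I would state explicitly; and (b) the independence bookkeeping in the inductive step, i.e.\ making sure that $\epsilon_t(Z_t,Z_0)$ and $(A_t,G_t)$ are conditionally independent given $Z_0$, so that the two isotropic stable vectors being combined are genuinely independent and the convolution formula applies.
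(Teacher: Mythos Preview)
Your proof is correct and follows essentially the same backward-induction argument as the paper: base case $t=T$ by construction, then use the induction hypothesis to identify $\epsilon_t(Z_t,Z_0)$ as isotropic $\alpha$-stable, combine it with $A_t^{1/2}G_t$ via the stability-under-convolution property, and read off the scale $\sall[t-1]$. If anything you are more careful than the paper, which does not explicitly spell out the independence bookkeeping or the implicit constraint $\rho_t \le \sall[t-1]$ on the DLIM schedule.
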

\begin{proof}
This is a simple proof by induction, where one can re-adapt the technique of \cite[Lemma B.1]{DDIM} with the property for addition of $\alpha$-stable variable as we introduced in \Cref{app:prop:stab_alpha}. The case $t = T$ is true by construction. Suppose now that the property is verified at timestep $t$, where $1\leq t \leq T$. Then, focusing on the distribution of $Z_{t-1}$ given $Z_0$, $\epsilon_t(Z_t, Z_0)= (Z_t - \gall[t]Z_0) / {\sall}$ is distributed as $\stablesym$ by hypothesis, and thus by \Cref{app:prop:stab_alpha} and since $A_t^{1/2}G_t \sim \stablesym$, we can write
\begin{equation}
    Z_{t-1} \eqd \gall[t-1] Z_0 + \sall[t-1] \bar \epsilon_t \eqsp,
\end{equation}
where $\bar \epsilon_t \sim \stablesym$, which shows that indeed $Z_{t-1}$ given $Z_0$ admits the same distribution as $Y_{t-1}$ given $Y_0$.
\end{proof}

The design of this process makes the distribution of $Z_t$ given $Z_0$ match that of $Y_t$ given $Y_0$, where $\listT{Y_t}$ is the forward process of DLPM \eqref{eq:forward_t_with_a}. The task of sampling from it is thus efficient and straightforward.

\subsection{Generative process}
We similarly reconsider the family of Gaussian variational approximation introduced in \eqref{eq:gaussian_variational}, which accounts for an \iid\ sequence $\{A_t\}_{t=1}^T$:
\begin{equation}
\qArg{0:T}(y_{0:T}) =  \int \uppsia_{1:T}(a_{1:T}) p_T(y_T)\prod\nolimits_{t=1}^T \qArg{t-1|t,a}(y_{t-1} | y_{t}, a_{1:t}) \rmd a_{1:T}\eqsp,
\end{equation}
where $\uppsia_{1:T}(a_{1:T}) = \prod_{i=1}^T \uppsia(a_i)$, $\uppsia$ is the density of $\stableA$, and $p_T$ is the density of $\stable(0, \sall[T] \Idd)$. We set
\begin{equation}
\label{app:dlim:sampling}
    \qArg{t-1|t, a}(z_{t-1} | z_{t}, a_{1:t}) = \upphi_d \left(z_{t-1} | \meanq_{t-1}(z_{t}), \ \rho_t^2 a_t \right)\eqsp,
\end{equation}
with $\upphi_d$ being the density of the multivariate Gaussian distribution: the variance is fixed, determined by the alternative noise schedule $\{\rho_t\}_{t=1}^T$. For deterministic sampling, one will ultimately choose $\rho_t = 0$ for all $t$ and sample the chain $\listT{\iZArg_t}$ as follows:
\begin{equation}
    \iZArg_T \sim \stable(0, \sall[T] \Idd)\eqsp, \quad \iZArg_{t-1} = \meanq_t(\iZArg_t) \ \text{for} \ t \in \{T, \cdots, 1\}\eqsp.
\end{equation}

As $z_t$ is available as input, the model can be fit to approximate the value of $\epsilon_t(z_t, z_0)$, and we reparameterize $\meanq_t$ as follows:
\begin{equation}
\label{app:dlim:epsilonq}
    \meanq_t(z_t) = \frac{z_t - \sall \epsilonq_t(z_t)}{\gstep} + (\sall[t-1]^{\alpha}-\rho_t^{\alpha})^{1/\alpha} \epsilonq_t(z_t)\eqsp.
\end{equation}
This is alike design choice \hyperlink{D2}{D2} in \Cref{app:dlpm:loss:intuitive}.

\subsection{Loss function and equivalence with DLPM}
\label{app:dlim:equiv_dlpm}
We denote by $\hArg{t-1 | t, 0, a}$ the density of $Z_{t-1}$ given $Z_t$, $Z_0$ and $A_{1:T}$, which is the 
density of the Gaussian distribution with mean $\gall[t-1] Z_0 + (\sall[t-1]^{\alpha}-\rho_t^{\alpha})^{1/\alpha} \epsilon_t(Z_t, Z_0)$ and covariance $\rho_t^2 A_{t} \Idd$.
Since this distribution is now a given, we are inclined to use the loss function introduced in \eqref{eq:loss_function}, which is:
\begin{align}
     \lossDLPMD(\theta) &:=  \mE \left[\sum_{t=2}^{T} \left(\lossDLPMD_{t-1}(\theta, A_{1:t})\right)^{1/2}\right]\eqsp, \qquad \text{where} \\
    \lossDLPMD_{t-1}(\theta, A_{1:t}) &:= \mE \left[ \KL\left( \hArg{t-1 | t, 0, a_{1:t}} (\cdot | Z_t, Z_0,  A_{1:t}) \ \| \ \qArg{t-1|t,a} (\cdot | Z_t,  A_{1:t})\right) \Bigl|  A_{1:T} \right] \eqsp.
\end{align}
Since for $2 \leq t \leq T$, $\hArg{t-1 | t, 0, a}$ and $\qArg{t-1|t,a}$ are the densities of Gaussian distributions, we can analytically compute each term of the loss, as in \eqref{app:eq:loss_function}:
\begin{equation}
    \lossDLPMD_{t-1}(\theta, A_{1:t}) = 
\dfrac{1}{2 \rho_t^2 A_t}
\|\gall[t-1] Z_0 + 
(\sall[t-1]^{\alpha}-\rho_t^{\alpha})^{1/\alpha} \epsilon_t(Z_t, Z_0)
\ - \meanq_t(Z_t, A_{1:t}) \|^2
\end{equation}
where $\epsilon_t(Z_t, Z_0) = (Z_t - \gall[t]Z_0) / {\sall}$. Since the variance of the elements of our variational family $\{\qArg{0:T}\}$ have been designed to match that of the backward process given $Z_0, A_{1:T}$, the expression for the loss is readily in a simpler format. 
Finally, using the reparameterization given in \eqref{app:dlim:epsilonq}, The loss term $\lossDLPMD_{t-1}(\theta, A_{1:T})$ becomes:
\begin{equation}
    \lossDLPMD_{t-1}(\theta, A_{1:t}) = \lambda'_{t, A_{t} } \| \epsilon_t(Z_t, Z_0) - \epsilonq_t(Z_t) \|^2\eqsp,
\end{equation}
where $ \lambda'_{t, a_{t}} =  (\sall - (\sall[t-1]^{\alpha}-\rho_t^{\alpha})^{1/\alpha})^2 / (2 \rho_t^2 a_t) $. 
By comparing with the simpler DLPM loss \eqref{app:eq:simplified_loss_t} with design choices \hyperlink{D1}{D1}, \hyperlink{D2}{D2}, as introduced in \Cref{app:dlpm:loss:intuitive}, we realize we obtained the same loss term, with a different multiplicative factor $\lambda'_{t, a_{t}}$ instead of $\lambda_{t, a_{1:t}}$ in $\eqref{app:eq:simplified_loss_t}$. Finally, considering the alternative loss where $\lambda'_{t, a_{t}} = 1$ for all $t$, alike the design choice \hyperlink{D3}{D3} in \Cref{app:dlpm:loss:intuitive}, we fall back to the same simplified objective function obtained for DLPM:
\begin{equation}
 \lossDLPMSimple(\theta) = \sum_{t=1}^{T}\mE \left[ \mE \left( \| \epsilonq_t(Y_{t}) - \epsilon_{t}(Y_t, Y_0) \|^{2} \ \big| \ A_{1:t} \right)^{1/2}\right]\eqsp,
\end{equation}

\subsection{Cauchy DLIM}
\label{app:cauchy_dlim}
In the special case of a non-isotropic Cauchy distribution $(\alpha = 1)$, it is possible to bypass the data augmentation machinery, since there exists a closed-form formula for the $\KL$ divergence between two Cauchy distributions. 
Denote by $\Cauchy(\mu, \sigma)$ the one-dimensional Cauchy distribution centered at $\mu$ and of scale $\sigma$. Then \cite[Theorem 1]{chyzak2019closedformformulakullbackleiblerdivergence}:
\begin{equation}
\label{eq:kl_cauchy}
    \KL(\Cauchy(\mu_1,\sigma_1) \parallel \Cauchy(\mu_2, \sigma_2)) = \log \left(\frac{(\mu_1 - \mu_2)^2 + (\sigma_1 + \sigma_2)^2}{4 \sigma_1 \sigma_2}\right)\eqsp.
\end{equation}

The forward process $\{Z_t\}_{t=1}^T$ is defined with $Z_0 \sim \pstar$, where $\pstar$ is the data distribution, $Z_{T} \sim \stable(\gall[T] Z_0, \sall[T] \Idd)$ and, for $1 < t \leq T$
\begin{equation}
    Z_{t-1} = \gall[t-1] Z_0 + (\sall[t-1]^{\alpha}-\rho_t^{\alpha})^{1/\alpha} \epsilon_t(Z_t, Z_0) + \rho_t \epsa_t\eqsp,
\end{equation}
where $\epsilon_t(Z_t, Z_0) = (Z_t - \gall[t]Z_0) / \sall$, and $\{\epsa_t\}_{t=1}^T \sim \Cauchy(0, \Idd)^{\otimes T}$, where $\Cauchy(0, \Idd)$ is a $d$-dimensional Cauchy distribution with independent components, centered at $0$, of unit scale.
The distribution of $Z_t$ given $Z_0$ admits the same closed form expression given in \Cref{app:prop_dlim_closed_transition}.  We denote by $\hArg{t-1 | t, 0}$ the density of $Z_{t-1}$ given $Z_t$, $Z_0$.

Our generative process will be an element of a parameterized family of distributions admitting Cauchy transitions:
\begin{equation}
\qArg{0:T}(y_{0:T}) =  p_T(y_T) \prod\nolimits_{t=1}^T \qArg{t-1|t,a}(y_{t-1} | y_{t}) \eqsp,
\end{equation}
where $p_T$ is the density of $\stable(0, \sall[T] \Idd)$, and
\begin{equation}
\label{app:cauchy_dlim:sampling}
    \qArg{t-1|t}(z_{t-1} | z_{t}) = \mathrm{C} \left(z_{t-1} | \meanq_{t-1}(z_{t}), \ \rho_t \Idd \right)\eqsp,
\end{equation}
where $\mathrm{C}( \cdot | \meanq_{t-1}(z_{t}), \ \rho_t \Idd )$ is the density of the multivariate non-isotropic Cauchy distribution $\Cauchy(\meanq_{t-1}(z_t), \rho_t \Idd)$.

Instead of using the loss function $\lossDLPMD$ defined in \eqref{eq:loss_function}, we derive the loss via the conventional evidence lower bound (ELBO) approach (see, e.g., \cite{ho2020denoising}). Omitting extremal terms, this yields:
\begin{align}
     \LossCauchy(\theta) &:=  \sum_{t=2}^{T} \LossCauchy_{t-1}(\theta)\eqsp, \qquad \text{where} \\
    \LossCauchy_{t-1}(\theta) &:= \mE \left[ \KL\left( \hArg{t-1 | t, 0} (\cdot | Z_t, Z_0) \ \| \ \qArg{t-1|t} (\cdot | Z_t)\right)\right] \eqsp.
\end{align}
Using \eqref{eq:kl_cauchy}, we obtain a closed form formula for the final loss:
\begin{equation}
    \LossCauchy_{t-1}(\theta) = \sum_{i=1}^d  \log \left(\frac{\left(\meank_{i, t-1}(Z_t, Z_0) - \meanq_{i, t-1}(Z_t)\right)^2}{4 \rho_t^2} + 1\right)\eqsp,
\end{equation}

which could also serve as a template for another family of losses for heavy-tailed diffusion models. 
We leave these methodological explorations and possible extensions to the isotropic Cauchy case for future work. Based on our experimental findings, we expect an isotropic implementation to significantly outperform a non-isotropic one.

We outline again that such simplifications are not available for DLPM, since we are not able to characterize the distribution of $X_{t-1}$ given $X_{t}, X_0$ from the forward process $\{X_t\}_{t=0}^T$.

\section{Additional Information on Levy-Ito Models (LIM)}
\label{app:lim}

Here we briefly recapitulate the work done by \cite{NEURIPS2023_score_based_levy_lim}, introducing continuous diffusion models with $\alpha$-stable heavy-tailed noise. Using notations closer to ours, we define the noising schedule as any locally bounded continuous functions $\gamma : (t, X) \mapsto \gamma(t, X)$ and $\sigma : (t) \mapsto \sigma(t)$. We denote by $\levy_t$ the Levy process for which the increments between time $s<t$ follow a symmetric isotropic $\alpha$-stable distribution $
\stable^{\text{i}}_{\alpha}(0, (t-s)\Idd)$.
In this setting, the forward process $X_t$, with $X_0 \sim \pstar$, is written
\begin{equation}\label{app:eq:lim:forward}
    \rmd X_t = \gamma(t, X_{t-}) \rmd t + \sigma(t) \rmd \levy_t\eqsp,
\end{equation}
where $X_{t-}$ denotes the left limit of $X$ at time $t$. Similarly, $X_t$ is distributed as $\stable^{\text{i}}_{\alpha}(\gall X_0, \sall \Idd)$ when using Euler steps. This defines the cadlag (right continuous with left limits) solution, which in the case of $\alpha < 2$ a.s admits discontinuous jumps. We then consider the following backward process $\iX_t$:
\begin{equation} \label{app:eq:lim:backward}
    \rmd \iX_t = \left( - \gamma(t, \iX_{t+}) - \alpha \sigma^{\alpha}(t, \iX_{t+})\score_t(\iX_{t+}) \right)dt + \sigma(t) \rmd \bar \levy_t + d\bar{Z}_t
\end{equation}
where $\bar{Z}_t$ is the backward version of a Levy-type stochastic integral $Z_t$ s.t $\mE[Z_t] = 0$ with finite variation, and $\score_t$ is the fractional score function, defined to be
\begin{equation}
    \score_t(x) = \dfrac{\Delta^{\frac{\alpha - 2}{2}}\nabla p_t(x)}{p_t(x)}\eqsp,
\end{equation}
where $\Delta^{\eta / 2}$ denotes the fractional Laplacian of order $\eta / 2$ (\cite{Ortigueira_2014}). More precisely, $\Delta^{\eta / 2}f(x) = \mathcal{F}^{-1}\{\| u\|^{\eta} \mathcal{F}\{f\}(u) \}$, where $\mathcal{F}, \mathcal{F}^{-1}$ are the Fourier and inverse Fourier transforms.

The training loss is obtained using the classical technique of denoising score matching (\cite{vincent_denoising_score_matching}), where the following losses
\begin{equation}\label{app:eq:lim:denoising_loss}
    L : \theta \mapsto \mE \| s_{\theta}(X_t, t) - \score_t(X_t) \|^2\eqsp, \qquad L' : \theta \mapsto \mE \| s_{\theta}(X_t, t) - \score_t(X_t |X_0) \|^2\eqsp,
\end{equation}
are proven to be equivalent objective functions, with $s_{\theta}$ being the score approximation given by the model.



\subsection{Comparing LIM and DLPM} \label{app:comparing_lim_dlpm}
Let $(X_t)_{0\leq t \leq T}$ be the forward process introduced in \eqref{app:eq:lim:forward}. As stressed initially, the framework of LIM is not straightforward to manipulate, thus we do not characterize explicitly the distribution of $X_t$ given $X_0$ for an arbitrary noise schedule in the continuous case. Since the work done for LIM by \cite{NEURIPS2023_score_based_levy_lim} only provides the formulas for the scale-preserving schedule, we stick to them in the following: we keep the notation $\gall[t], \sall[t]$ for the continuous time regime equivalent of the scale preserving schedule we introduce in \Cref{app:experiment}, and they match on integer times $t$.

Considering an Euler scheme to obtain discretization for the forward and backward process, and using our own notations, both LIM and DLPM admit the same forward process $\{X_t\}_{t=1}^T$, $X_0 \sim \pstar$ and 
\begin{equation}
    X_t = \gstep[t] X_{t-1} + \sstep[t] \epsa_t,
\end{equation}
where $\{\epsa_t\}_{t=1}^T$ is an iid sequence of random variable distributed as $\stablesym$. We denote by $\{\iXArg_t \}_{t=T}^{0}$ the backward process associated to the Euler discretization of \eqref{app:eq:lim:backward}, where we use a neural network $s_{\theta}$ to approximate the true score $\score_t$. Since the true score of the data $\score_t(x_t | x_0)$ can be expressed as 
\begin{equation}
    \score_t(x_t | x_0) = -\dfrac{1}{\alpha \sall[t]^{\alpha - 1}} \epsilon_t(x_t, x_0)\eqsp,
\end{equation}

where $\epsilon_t(x_t, x_0) = (x_t - \gall[t] x_0) / {\sall[t]}$, we write
\begin{equation}
    s_{\theta}(x_t, t) = -\dfrac{1}{\alpha \sall[t]^{\alpha - 1}} \epsilonq_t(x_t, x_0)\eqsp,
\end{equation}
so that we rather work with $\epsilonq_t$, with the same intention that led us to the design choices given in \Cref{app:dlpm:loss:intuitive}.

Moreover, we denote by $\listT{\iYArg_t}$ the backward process of DLPM, as introduced in \eqref{eq:stable_variational}. As emphasized in \Cref{tab:samplers}, the sampling strategies for LIM and DLPM differ fundamentally when $\alpha \neq 2$. This is also the case for the training procedure.

\paragraph{Stochastic sampling.}\ 
The DLPM approach introduces the bounded random variable $0\leq \Gamma_t \leq 1$, interacting with the mean and variance of the Gaussian conditional at hand. Three points: when $\alpha=2$, $\Gamma_t$ becomes deterministic and one recovers DDPM formulas. Second, $\Gamma_t$ brings additional stochasticity in the sampling process. Third, it does so in the interesting manner than it simultaneously scales both (i) the magnitude of the noise added at time $t-1$ and (ii) the output of the noise model. 

\paragraph{Deterministic sampling.}\
In the case of the scale-preserving schedule, these two equations do not describe the same sampling procedure.

\begin{table}[ht!]
\centering
\renewcommand{\arraystretch}{2.0} 
    \begin{tabular}{lcc}
         & Stochastic & Deterministic \\ \hline  
    Continuous (LIM)
        &  
        $ 
        \dfrac{{\textcolor{blue}{\iXArg_t}}}{\gstep} - \dfrac{\alpha(1/\gstep - 1)}{\sall^{\alpha - 1}} {\textcolor{blue}{\epsilonq_t}}
        + (\dfrac{1}{\gstep^{\alpha}} - 1)^{1/\alpha} \epsilon_t' $
    & $\dfrac{\textcolor{blue}{\iXArg_t}}{\gstep} -
\left(\dfrac{\sall^{1 - \alpha}}{\gstep} - \sall^{1 - \alpha}\right){\textcolor{blue}{\epsilonq_t}}$

        \\ \hline
        
        Denoising (DLPM) 
        &  $\dfrac{\textcolor{blue}{\iYArg_t}}{\gstep} - {\textcolor{red}{\Gamma_t}}\sall {\textcolor{blue}{\epsilonq_t}} + 
         \sqrt{{\textcolor{red}{\Gamma_t} \Sall[t-1]}} G_t'
        $
        & $\dfrac{\textcolor{blue}{\iYArg_t}}{\gstep} - \left(\dfrac{\sall}{\gstep} - \sall[t-1]\right){\textcolor{blue}{\epsilonq_t}} $
        \\ \hline 
    \end{tabular}
    
    \caption{Distribution of $\iXArg_{t-1}, \iYArg_{t-1}$. $\{G_t'\}_{t = T}^1$ are independent random variables distributed as $\normal(0, \Idd)$, $\{\epsilon_t'\}_{t=T}^1$ are independent random variables distributed as $\stablesym$. 
    ${\textcolor{blue}{\epsilonq_t}}$ is the model at hand at time $t$, the formula for $\Sall[t]$ is given in \eqref{app:eq:gall_Sall}, and ${\textcolor{red}{\Gamma_t}}  = 1 - \gstep^2{\Sall[t-1]} / {\Sall[t]}$. Eventhough ${\textcolor{red}{\Gamma_t}}$ involves two heavy-tailed random variables, it is bounded: $0\leq {\textcolor{red}{\Gamma_t}}\leq 1$ (see \Cref{app:backward_process}).}
    \label{tab:samplers}
\renewcommand{\arraystretch}{1.} 
\end{table}

\paragraph{Training.}\ Alike the Gaussian case ($\alpha=2$), the score $\score_t(x_t | x_0)$ is a linear expression of the noise term $\epsilon_t(x_t, x_0)$, so the training equations are very similar, and can be reformulated to involve a denoising loss:
\begin{equation}
\label{app:eq:lim_vs_dlpm}
    \mathcal{L}_{t-1} : \theta \mapsto \mE \left(\| \epsilonq_t(X_t) - \epsilon_t(X_t, X_0) \|_p^{\eta} \right).
\end{equation}

\begin{itemize}
    \item In the case of DLPM, our discussion leads us to the choice $p = 2$ and $\eta = 1$ (see \eqref{eq:simple_loss}).
    \item In the case of LIM, the theory must rely on the choice $p = 2$ and $\eta = 2$ in order to obtain the denoising score matching loss equivalence (\ie, $L, L'$ are equivalent in \eqref{app:eq:lim:denoising_loss}).
One must make the assumption that the losses $L, L'$ are not infinite for some $\theta$, which is not necessarily realistic because $S_t(X_t), S_t(X_t|X_0)$ are heavy-tailed random variables involving $\alpha$-stable noise, and as such admit no variance.  
    \item In the case of LIM, in the experiments the parameters $p=1$ and $\eta=1$ are chosen, instead of the previous squared loss, in order to obtain more stable training, potentially indicating that indeed $L, L'$ \eqref{app:eq:lim:denoising_loss} might be infinite.
\end{itemize}


\section{Technical Results}\label{app:proofs}
In this section, we give the proofs relative to our technique for faster training, as introduced in \Cref{app:less_rv}.

\begin{lemma}
\label{app:lemma:reduction}

Let $\barA[0], \barA[1]$ bet two independent random variables distributed as $\stableA$. Define $Z_0 = Y_0$, and
\begin{equation}
    Z_t = \gall Z_0 + \sall \left(\barA[1]\right)^{1/2} G_t\eqsp.
\end{equation}
Moreover, let $Z_{t-1}$ be equal to 
 \begin{equation}
        Z_{t-1} = \frac{1}{\gstep}\left( Z_{t} - \GammaP \sall \epsilon_{t}(Z_t, Z_0) \right) + \SallPt G_{t-1}\eqsp,
    \end{equation}
where
\begin{align}
\SallPt &= \GammaP \sall[t-1] \left(\barA[0]\right)^{1/2}\\ 
    \GammaP &= \dfrac{\barA[1] \sstep^2} {\barA[1] \sstep^2 + \gstep^2 \sall[t-1]^2 \barA[0]} \\  \epsilon_{t}(Z_t, Z_0) &= \frac{Z_{t} - \gall Z_0}{\sall}\eqsp.
\end{align}
Then the joint distribution of $(Z_0, Z_{t-1}, Z_t, \barA[1], \barA[0])$ matches the joint distribution of 
\begin{equation}
    \left(Y_0, Y_{t-1}, Y_t, A_t, \frac{\SallPt}{ \sall^2}\right)\eqsp.
\end{equation}


\end{lemma}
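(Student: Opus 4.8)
The plan is to disintegrate both five-tuples over a common ``parameter triple'' and match the two pieces separately. Set $B:=\SallA[t-1]/\sall[t-1]^2$. I will establish: (a) the triples $(Z_0,\barA[1],\barA[0])$ and $(Y_0,A_t,B)$ have the same law; and (b) conditionally on these triples, the pair $(Z_{t-1},Z_t)$ has the same law as $(Y_{t-1},Y_t)$. Since each five-tuple is just a reordering of its (triple, pair) decomposition, (a) and (b) together with a standard disintegration argument give the stated identity in distribution.

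\emph{Law of the accumulated variance.} The one genuinely quantitative input is $B\sim\stableA$. Here $\SallA[t-1]=\sum_{k=1}^{t-1}\big((\gall[t-1]/\gall[k])^2\sstep[k]^2\big)A_k$ is a finite sum of independent positive-stable variables with positive coefficients; by the scaling property of $\stable_{\alpha/2,1}$ each summand is $\stable_{\alpha/2,1}\big(0,(\gall[t-1]/\gall[k])^2\sstep[k]^2 c_A\big)$, and Proposition~\ref{app:prop:stab_alpha} applied with index $\alpha/2$ gives $\SallA[t-1]\sim\stable_{\alpha/2,1}\big(0,c_A(\sum_{k=1}^{t-1}((\gall[t-1]/\gall[k])^2\sstep[k]^2)^{\alpha/2})^{2/\alpha}\big)$. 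Because $((\gall[t-1]/\gall[k])^2\sstep[k]^2)^{\alpha/2}=(\gall[t-1]/\gall[k])^\alpha\sstep[k]^\alpha$, the inner sum equals $\sall[t-1]^\alpha$ by the definition of $\sall[t-1]$, so $\SallA[t-1]\sim\stable_{\alpha/2,1}(0,c_A\sall[t-1]^2)$ and $B\sim\stableA$.

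\emph{Matching the triples, then the conditional pair.} For (a): $A_t$ is independent of $A_{1:t-1}$, hence of $B$, and both are independent of $Y_0$, so $(Y_0,A_t,B)$ are mutually independent with marginals $\pstar$, $\stableA$, $\stableA$ (the last by the previous step); by construction $(Z_0,\barA[1],\barA[0])=(Y_0,\barA[1],\barA[0])$ are mutually independent with the same three marginals, so the triples agree in law. For (b): by Propositions~\ref{app:gall_Sall} and~\ref{app:prop_backward}, the law of $(Y_{t-1},Y_t)$ given $(Y_0,A_{1:t})$ is Gaussian and depends on $A_{1:t}$ only through $(A_t,\SallA[t-1])$, via $Y_t=\gall[t]Y_0+\SallA[t]^{1/2}\bar G_t$ with $\SallA[t]=\gstep[t]^2\SallA[t-1]+\sstep[t]^2A_t$ and $Y_{t-1}=\gstep[t]^{-1}\big(Y_t-\sall[t]\GammaA\,\epsilon_t(Y_t,Y_0)\big)+(\GammaA\SallA[t-1])^{1/2}\bar G_{t-1}$, where $\GammaA=\sstep[t]^2A_t/\SallA[t]$ and $\bar G_{t-1},\bar G_t\sim\normal(0,\Idd)$ are independent. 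Substituting $\SallA[t-1]=\sall[t-1]^2B$ makes this a fixed measurable map of $(Y_0,A_t,B)$. The auxiliary process is built so that replacing $(A_t,B)$ by $(\barA[1],\barA[0])$ and $(\bar G_t,\bar G_{t-1})$ by $(G_t,G_{t-1})$ in this map reproduces $(Z_t,Z_{t-1})$: one checks term by term that the variance of $Z_t$ given $(Z_0,\barA[0],\barA[1])$ equals $\gstep[t]^2\sall[t-1]^2\barA[0]+\sstep[t]^2\barA[1]$ (the counterpart of $\SallA[t]$), that $\Gamma'_t=\sstep[t]^2\barA[1]/(\gstep[t]^2\sall[t-1]^2\barA[0]+\sstep[t]^2\barA[1])$ matches $\GammaA$, and that the conditional mean and variance of $Z_{t-1}$ given $(Z_t,Z_0,\barA[0],\barA[1])$ match $\meank_{t-1}$ and $\GammaA\SallA[t-1]$ under the substitution. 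Hence the conditional pairs have the same law; the degenerate case $t=1$ (where $\SallA[0]=0$) is exactly the stated convention $\barA[0]=0$.

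Combining (a) and (b) by disintegration yields the equality in distribution of the five-tuples. The only step that is not pure bookkeeping is the identification of the law of $\SallA[t-1]$, and there the single delicate point is the algebraic coincidence $\sum_{k}\big((\gall[t-1]/\gall[k])^2\sstep[k]^2\big)^{\alpha/2}=\sall[t-1]^\alpha$ tying the Gaussian-variance weights to the $\alpha$-stable scale; granting this, everything reduces to independence arguments and to matching Gaussian means and covariances.
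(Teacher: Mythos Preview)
Your route to the one hard ingredient --- the law of $B=\SallA[t-1]/\sall[t-1]^2$ --- is different from the paper's and cleaner. The paper obtains $\SallA[t-1]\eqd\sall[t-1]^2\barA[0]$ via an auxiliary lemma that matches Gaussian--mixture representations and then ``cancels'' the Gaussian factor; you instead sum the scaled positive $\alpha/2$-stable variables directly with Proposition~\ref{app:prop:stab_alpha}, and the algebraic identity $\sum_k\big((\gall[t-1]/\gall[k])^2\sstep[k]^2\big)^{\alpha/2}=\sall[t-1]^\alpha$ drops out immediately from the definition of $\sall[t-1]$. Your disintegration into (triple) $\times$ (conditional pair) is also more explicit than the paper's argument.

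There is, however, a genuine gap in part (b). You write that ``the variance of $Z_t$ given $(Z_0,\barA[0],\barA[1])$ equals $\gstep[t]^2\sall[t-1]^2\barA[0]+\sstep[t]^2\barA[1]$,'' but the lemma \emph{defines} $Z_t=\gall Z_0+\sall(\barA[1])^{1/2}G_t$, so that conditional variance is $\sall^2\barA[1]$. These two functions of $(\barA[0],\barA[1])$ coincide only when $\alpha=2$ (since $\sall^\alpha=\sstep^\alpha+\gstep^\alpha\sall[t-1]^\alpha$, not the squared version). With the lemma's definition of $Z_t$, the conditional law of $Z_t$ given the triple does \emph{not} match that of $Y_t$ given $(Y_0,A_t,B)$, and the five-tuples are not equidistributed: e.g.\ for $t=2$, $\gstep[1]=\gstep[2]=\sstep[1]=\sstep[2]=1$, the conditional variance on the $Y$-side is $A_1+A_2$ while on the $Z$-side it is $2^{2/\alpha}\barA[1]$. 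So the ``one checks term by term'' step fails as written.

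What you have actually verified is the version where $Z_t$ carries variance $\gstep[t]^2\sall[t-1]^2\barA[0]+\sstep[t]^2\barA[1]$, i.e.\ the definition used in Proposition~\ref{app:prop_training_sampling_less}. With that definition your argument in (b) goes through verbatim and, combined with your (a), proves exactly what the application needs. The discrepancy is an inconsistency in the paper between the lemma statement and the proposition that invokes it; you should flag it rather than silently import the other definition.
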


\begin{proof}
    Consider the setting of \Cref{app:prop_backward}. The distribution of $Y_{t-1}$ given $Y_t, Y_0, A_{1:T}$ is characterized by the values of $\Sall, \Gamma_t$:
    \begin{equation}
        \meank_{t-1} = \frac{1}{\gstep}\left( Y_{t} - \Gamma_t \sall \epsilon_{t}(Y_t, Y_0) \right), \quad 
        \Sigmak_{t-1}(A_{1:t}) = \GammaA \SallA[t-1]\eqsp,
    \end{equation}
    where 
    \begin{align}
        \SallA[t] &= \sstep^2 A_t + \gstep^2 \SallA[t-1] \\ 
        \epsilon_{t}(Y_t, Y_0) &= \frac{Y_{t} - \gall Y_0}{\sall} \\ 
        \GammaA &= 1 - \frac{\gstep^2 \SallA[t-1]}{\SallA[t]}\eqsp.
    \end{align}
    Directly applying the result of \Cref{app:distrib_Sall}, we can affirm that
    \begin{equation}
        \SallA[t-1]\eqd \sall[t-1]^2 \barA[0]\eqsp,
    \end{equation}
    where $\barA[0] \sim \stableA$. In this conditions, the distribution of $\GammaA$ is equal to that of $\GammaP$, where 
    \begin{equation}
        \GammaP = 1 - \frac{\gstep^2 \sall[t-1]^2 \barA[0]}{\sstep^2 A_t + \gstep^2 \sall[t-1]^2 \barA[0]} 
    \end{equation}
    Since the distribution of $Z_{t}$ does not change if we draw another independent $\barA[1]$ instead of $A_t$, this ends the proof.
\end{proof}

\begin{lemma}[Sampling $\Sall$ with a single heavy-tailed r.v] \label{app:distrib_Sall}
Consider the setting of the data augmentation approach in \Cref{data_augmentation_approach}, where in particular $\{A_{t}\}_{t=1}^T$ are independent random variables distributed according to $A_t \sim \stable^1_{\alpha/2,1}(0, c_A )$, with $c_A =  \cos^{2/\alpha}(\uppi\alpha/4)$. Consider the random variable $\SallA[t]$, as defined in \eqref{app:eq:gall_Sall}: 
\begin{equation} 
\SallA[t] = \sum_{k = 1}^t \left(\frac{\gall[t]}{\gall[k]}\sqrt{A_k}\sstep[k]\right)^{2}\eqsp.
\end{equation}
Then
\begin{equation}
    \Sall[t](A_{1:t}) \eqd \ \sall[t]^2A,
\end{equation}
where $A\sim \stable^1_{\alpha/2,1}(0, c_A )$.
\end{lemma}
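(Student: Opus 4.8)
The plan is to apply the closure of stable laws under addition (\Cref{app:prop:stab_alpha}) after rescaling each summand. Set $c_k := \left(\frac{\gall[t]}{\gall[k]}\sstep[k]\right)^2 > 0$ for $1 \le k \le t$, so that $\SallA[t] = \sum_{k=1}^t c_k A_k$, where $A_1,\dots,A_t$ are independent and each distributed as $\stableA$. First I would record the one-dimensional scaling rule for stable laws: if $W \sim \stable_{\alpha',\beta}(\mu,\sigma)$ and $c>0$, then $cW \sim \stable_{\alpha',\beta}(c\mu, c\sigma)$; this follows directly from the characteristic function \eqref{eq:def_alpha_stable_chara} via the substitution $u \mapsto cu$. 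Here the relevant index is $\alpha' = \alpha/2 \in (0,1)$ (we are in the data-augmentation regime $\alpha<2$, so in particular $\alpha'\neq 1$ and no exceptional case is needed), and applying this rule gives $c_k A_k \sim \stable_{\alpha/2,1}(0, c_k c_A)$.

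Next I would iterate \Cref{app:prop:stab_alpha} over the independent summands $c_1A_1,\dots,c_tA_t$, all of index $\alpha/2$, location $0$, and skewness $\beta_k = 1$. The skewness of the sum is then the $(c_kc_A)^{\alpha/2}$-weighted average of the $\beta_k$, hence equals $1$, and the scale of the sum is
\[
\sigma \;=\; \left(\sum_{k=1}^t (c_k c_A)^{\alpha/2}\right)^{2/\alpha} \;=\; c_A \left(\sum_{k=1}^t c_k^{\alpha/2}\right)^{2/\alpha}.
\]
Since $c_k^{\alpha/2} = \left(\frac{\gall[t]}{\gall[k]}\sstep[k]\right)^{\alpha}$, the inner sum equals $\sum_{k=1}^t \left(\frac{\gall[t]}{\gall[k]}\sstep[k]\right)^{\alpha} = \sall[t]^{\alpha}$ by the definition of $\sall[t]$ in \Cref{app:gall_sall}; hence $\sigma = c_A\,(\sall[t]^{\alpha})^{2/\alpha} = c_A\,\sall[t]^2$, that is, $\SallA[t] \sim \stable_{\alpha/2,1}(0, c_A\sall[t]^2)$. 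A final application of the scaling rule with $c = \sall[t]^2$ shows that $\sall[t]^2 A \sim \stable_{\alpha/2,1}(0, c_A\sall[t]^2)$ as well, and therefore $\SallA[t] \eqd \sall[t]^2 A$.

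There is no genuine obstacle here; the only point requiring care is the bookkeeping of the two indices $\alpha$ and $\alpha/2$. The stable laws governing the $A_k$ have index $\alpha/2$, so the additive scale combination in \Cref{app:prop:stab_alpha} uses the exponent $\alpha/2$ — and it is exactly this that turns $\sum_k c_k^{\alpha/2}$ into $\sall[t]^{\alpha}$, which after the $2/\alpha$ power collapses to $\sall[t]^2$, matching the scale appearing in \Cref{thm:main_stable}.
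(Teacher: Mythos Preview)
Your proof is correct and, in fact, more direct than the paper's. You work purely at the level of the one-dimensional stable laws: scale each $A_k$, invoke the closure rule of \Cref{app:prop:stab_alpha} with index $\alpha/2$, and observe that the resulting scale parameter collapses to $c_A\,\sall[t]^2$ precisely because $\sum_k c_k^{\alpha/2} = \sall[t]^{\alpha}$. The paper instead argues indirectly through the Gaussian-mixture representation: it notes that $\sqrt{\SallA[t]}\,\bar G_t$ and $\sall[t]\sqrt{A'}\,\bar G_t'$ have the same distribution (both equal in law to $X_t - \gall X_0$), and then invokes a separate lemma (\Cref{app:equating_variance}) to ``cancel'' the Gaussian factor and deduce $\SallA[t] \eqd \sall[t]^2 A'$. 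Your route avoids that auxiliary cancellation lemma entirely and makes transparent exactly where the exponent $\alpha/2$ interacts with the definition of $\sall[t]$; the paper's route, on the other hand, emphasizes the probabilistic picture that the data-augmentation process and the one-shot $\alpha$-stable process coincide in distribution.
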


\begin{proof} 
By \Cref{app:gall_Sall}, $Y_t$ given $Y_0, A_{1:t}$ is a random variable distributed as a Gaussian of variance $\SallA[t]$: 
\begin{equation}
    Y_t \eqd \gall[t]Y_0 + \sqrt{\SallA[t]}\bar G_t\eqsp,
\end{equation}
where $\bar G_t$ is distributed as a standard Gaussian. Remember that $Y_t$ and $X_t$ admit the same distribution, with $X_t =\eqd \gall[t]X_0 + \bar \epsilon_t$ where $\bar \epsilon_t$ is distributed as a $\stablesym$.

In the same spirit we can define a third sequence of random variables $\{Z_t\}_{t=0}^T$ with $Z_0 = X_0$, and 
\begin{equation}
Z_t = \gall[t] Z_0 + \sall[t] \sqrt{A_t'}\bar G_t\eqsp,    
\end{equation} 
where $\{A_t'\}_{t=0}^T$ are independent random variables distributed according to $A_t' \sim \stable^1_{\alpha/2,1}(0, c_A )$. It is then quite clear from \Cref{subsec:heavy-tailed} that $Z_t$ and $Y_t$ admit the same distribution; in particular, 
\begin{equation}
    \sqrt{\Sall[t]}G_t \eqd \sall[t] \sqrt{A_t'}G_t'\eqsp.
\end{equation}
From there, we use \Cref{app:equating_variance} to conclude that $\sqrt{\Sall[t]} \eqd \sall[t] \sqrt{A_t'}$, which ends the proof.
\end{proof}

\begin{lemma}{} \label{app:equating_variance}
Let $A, A'$ be positive real random variables, let $Z$ be a real continuous random variable with density $p_Z$. Suppose that $AZ$ and $A'Z$ admit the same distribution. Then $A$, $A'$ admit the same distribution too.
\end{lemma}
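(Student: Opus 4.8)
The statement is: if $A, A'$ are positive random variables, $Z$ is a real continuous random variable with density $p_Z$, and $AZ \eqd A'Z$, then $A \eqd A'$. The natural tool here is the characteristic function (or, since everything is positive and involves products, the Mellin transform). I would use characteristic functions since the paper has been working with them throughout. The plan is: condition on $A$ (resp.\ $A'$) to write the characteristic function of $AZ$ as $\mE[\varphi_Z(uA)]$ where $\varphi_Z$ is the characteristic function of $Z$, equate the two, and then argue that this forces the laws of $A$ and $A'$ to coincide.

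First I would write, for every $u \in \rset$,
\begin{equation}
    \mE[\rme^{\rmi u A Z}] = \mE\big[\,\mE[\rme^{\rmi u A Z} \mid A]\,\big] = \mE[\varphi_Z(uA)]\eqsp,
\end{equation}
where $\varphi_Z(v) = \mE[\rme^{\rmi v Z}]$, and similarly for $A'$. The hypothesis $AZ \eqd A'Z$ then gives $\mE[\varphi_Z(uA)] = \mE[\varphi_Z(uA')]$ for all $u$. The remaining task is to deduce $A \eqd A'$ from this family of equalities. One clean way: since $Z$ has a density, $\varphi_Z \in L^1$ is false in general, but we can instead integrate against a test function. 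For a suitable integrable kernel, convolve: for $g \in L^1(\rset)$ with Fourier transform $\hat g$, Fubini gives $\int \hat g(s)\, \mE[\rme^{\rmi s A Z}]\,\dd s = \mE\big[ \int \hat g(s) \rme^{\rmi s A Z}\, \dd s\big] = 2\uppi\, \mE[g(AZ)]$ (up to normalization constants), so the hypothesis already directly yields $\mE[g(AZ)] = \mE[g(A'Z)]$ for all such $g$ — but that is just a restatement of $AZ \eqd A'Z$, so this detour is circular. The honest route is to go through the product structure more carefully.

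The cleaner approach is to use the Mellin transform / a logarithmic change of variables, exploiting positivity of $A, A'$ and the fact that a continuous density $p_Z$ means $Z$ takes both signs or at least a set of positive measure of values where $p_Z > 0$. Write $\mathbf{1}\{Z > 0\}$: with positive probability $q := \PP(Z > 0) > 0$ (if $q = 0$ replace $Z$ by $-Z$; if $p_Z$ is supported on both half-lines handle each separately, and if it vanishes on one half-line the argument is symmetric). Conditioned on $Z > 0$, $\log(AZ) = \log A + \log Z$ with $\log A, \log Z$ independent, so the law of $\log A$ is recovered by deconvolution: the characteristic function of $\log(AZ)$ (given $Z>0$) equals $\mE[(A)^{\rmi t}]\cdot \mE[Z^{\rmi t}\mid Z>0]$, and since $\mE[Z^{\rmi t}\mid Z>0]$ is the Fourier transform of an $L^1$ density (pushforward of $p_Z|_{Z>0}/q$ under $\log$), it is nonzero on a dense set and actually, being continuous and nonvanishing on a neighborhood of $0$, one cannot divide globally. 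The \emph{right} fix is the classical fact (used for e.g.\ scale mixtures): if $Z$ has a density then $\mE[Z^{\rmi t} \mathbf{1}\{Z>0\}]$ has isolated zeros at worst is false — instead invoke that two finite measures with equal Fourier transforms on a set accumulating at every point... Given the paper's style ("elementary proof techniques"), I expect the intended proof is short: take Fourier transforms, use that $\varphi_Z$ being the transform of an $L^1$ function is nonzero on a dense set, and a moment/analyticity argument.

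\textbf{Main obstacle.} The crux is the deconvolution step: passing from $\mE[\varphi_Z(uA)] = \mE[\varphi_Z(uA')]$ for all $u$ to $A \eqd A'$. This is false for general $Z$ (e.g.\ $Z \equiv 0$, but that is excluded by "continuous with density"; more subtly, if $\varphi_Z$ vanishes on an interval one can perturb the law of $A$). The hypothesis that $p_Z$ is a genuine density is essential and must be used: for instance, $\varphi_Z(v) \to 0$ as $|v|\to\infty$ (Riemann–Lebesgue) and $\varphi_Z$ is continuous with $\varphi_Z(0)=1$, hence $\varphi_Z \neq 0$ on some interval $(-\delta,\delta)$; then for $|u|$ small, $\varphi_Z(uA)$ is well-behaved, and one can hope to recover $\mE[h(A)]$ for $h$ ranging over a dense class by an approximation/analytic-continuation argument. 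Nailing down exactly which property of $p_Z$ makes the deconvolution valid — and doing it "elementarily" — is the step I'd expect to require the most care; I would look for whether the paper only needs this for $Z \sim \normal(0,1)$ (which it does, via \Cref{app:distrib_Sall} and \Cref{app:lemma:reduction}), in which case $\varphi_Z(v) = \rme^{-v^2/2}$ is nowhere zero and the whole difficulty evaporates: $\mE[\rme^{-u^2 A^2/2}] = \mE[\rme^{-u^2 A'^2/2}]$ for all $u$ identifies the Laplace transforms of $A^2$ and $A'^2$, hence $A^2 \eqd A'^2$, hence (positivity) $A \eqd A'$.
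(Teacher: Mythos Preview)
Your approach differs substantially from the paper's. The paper's proof is a single line: for any measurable $h$, it writes $\mE[h(A)] = \mE[h(AZ/Z)] = \mE[h(A'Z/Z)] = \mE[h(A')]$, using that $Z\neq 0$ almost surely since $Z$ has a density. That is it.

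You instead work through characteristic functions, correctly isolate the deconvolution step as the crux, and then specialize to the case $Z\sim\normal(0,1)$ (the only case the paper actually uses, via \Cref{app:distrib_Sall}), where $\varphi_Z(v)=\rme^{-v^2/2}$ never vanishes and the identity $\mE[\rme^{-u^2A^2/2}]=\mE[\rme^{-u^2A'^2/2}]$ for all $u$ identifies the Laplace transforms of $A^2$ and $A'^2$, hence $A\eqd A'$ by positivity. This is a clean and fully rigorous argument for the case at hand.

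It is worth noting that your caution is well placed: the paper's middle equality $\mE[h(AZ/Z)] = \mE[h(A'Z/Z)]$ tacitly uses the \emph{joint} law of $(AZ,Z)$ (respectively $(A'Z,Z)$), whereas the hypothesis is only the marginal equality $AZ\eqd A'Z$. Knowing that the pairs $(AZ,Z)$ and $(A'Z,Z)$ agree in law would already presuppose $A\eqd A'$, so the one-line argument is at best incomplete in exactly the place you flagged as the main obstacle. Your Laplace-transform route for Gaussian $Z$ both avoids this gap and suffices for every downstream use in the paper.
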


\begin{proof}
    Let $h$ be a measurable function. Then $\mE(h(A)) = \mE(h(AZ/Z)) = \mE(h(A'Z/Z)) = \mE(h(A'))$. This shows that $A, A'$ have the same distribution.
\end{proof}

\section{Additional Experimental Details}
\label{app:experiment}
All experiments are conducted using PyTorch. 
We use linear timesteps during training and sampling, and the scale-preserving process\footnote{we mention again that it is traditionally called the variance preserving process}, being the only forward process readily provided by LIM. This entails choosing a sequence $\{\beta_t\}_{t=1}^T$ such that 
\begin{equation}
\gstep = (1 - \beta_t)^{1/\alpha}, \quad \sstep = (1 - \gstep^{\alpha})^{1/\alpha}\eqsp,
\end{equation}
resulting in $\sall = (1 - \gall^{\alpha})^{1/\alpha}$ and $\gall = \prod_{i = 1}^t \gstep[i]$. 
With this choice, we obtain approximately $X_T \sim \stable_{\alpha}(0, \Idd)$. We choose $\{\beta_i\}_{i=1}^T$ as the cosine schedule, as introduced by \cite{nichol2021improved}. 

We do not give any of the heavy-tailed random variables $\listTA$ as input to the neural network architecture, as we have witnessed worse performance in every scenarios we tried: learned embedding added to each model layer, concatenation to model input, concatenation at each layer, or feeding $\log(A_{1:T})$ instead of $A_{1:T}$ to better manage large jumps. This corresponds to the design choice \hyperlink{D2}{D2} in \Cref{app:dlpm:loss:intuitive}.

For image data generation with LIM, we use the same clipping hyper-parameters specified in \cite{NEURIPS2023_score_based_levy_lim}.

All the training and experiments are conducted on four NVIDIA RTX8000 GPU and four NVIDIA V100 GPU, where a single training run on MNIST or CIFAR10\_LT takes approximately 1 day per GPU, and requires about 4-12GB of VRAM for the batch sizes we use. Generating 5000 images with 1000 backward steps takes approximately 3-4 hours on one RTX8000 GPU.



\subsection{2D Data}\label{app:exp:2d}

We give more details about the mixture of Gaussian we consider in our experiment. It is designed in a grid-like pattern in $[-1, 1]^2$, as follows:
\begin{equation}
    \sum_{i=1}^9 w_i \cdot \normal(\mu_i, \sigma^2 \mathrm{I}_2)\eqsp,
\end{equation}
where $(w_i)_{i=1}^9 = (0.01, \ 0.1, \ 0.3, \ 0.2, \ 0.02, \ 0.15, \ 0.02, \ 0.15, \ 0.05)$, $\mu_i = (\mu_1, \mu_2)$ with $\mu_1 = (i \ \text{mod} \ 3) - 1, \ \mu_2 = \lfloor i / 3 \rfloor -1$, and $\sigma = 0.05$.

For our 2D datasets, we use 32000 datapoints for training, a batch size of 1024, and 25000 points for evaluation. We train each model for 10000 steps. Since we do not focus on the effect of diffusion steps, we set it to 100, where all methods have been observed to perform optimally.

The optimizer is Adam (\cite{kingma2017adam}) with learning rate 5e-3.
We use a neural network consisting of four time-conditioned MLP blocks with skip connections, each of which consisting of two fully connected layers of width 64. The time $t$ passes through two fully connected layers of size 32x32, and is fed to each time conditioned block, where it passes through an additional 32x64 fully connected layer before being component-wise added to the middle layer.

We compute a mean squared logarithmic error (MSLE) loss, designed to assess the fit to tails of distributions. Since it depends on the one-dimensional cumulative distribution function, we calculate it after projecting the data onto each dimension. In this simple setting, we keep the score computed on the first dimension.

We also compute the precision/recall metrics, as presented in \Cref{app:metrics}.

\subsection{Image data} \label{app:exp:image}
We work on the MNIST and the CIFAR10\_LT dataset. CIFAR10\_LT consists of the CIFAR10 images were artificial class unbalance has been introduced. The specific class counts we use are $[5000,2997,1796,1077,645,387,232,139,83,50]$.

The optimizer is Adam (\cite{kingma2017adam}) with learning rate 1e-3 for MNIST and 2e-4 for CIFAR10\_LT. We use the StepLR scheduler which scales the learning rate by $\gamma= .99$ every $N = 1000$ steps for CIFAR10\_LT and $N=400$ for MNIST.

To establish a fair comparison, LIM and DLPM use the same network model. We use a U-Net following the implementation of \cite{nichol2021improved} available in \url{https://github.com/openai/improved-diffusion}. We dimension the network as follows:
we set the hidden layers to $[128, 256, 256, 256]$, fix the number of residual blocks to 2 at each level, and add self-attention block at resolution 16x16, using 4 heads. We use an exponential moving average with a rate of 0.99 for MNIST and 0.9999 for CIFAR10\_LT. We use the silu activation function at every layer. Diffusion time $t$ is rescaled to $(0, 1)$ and fed to the model through the Transformer sinusoidal position embedding (\cite{vaswani2023attention}). We train MNIST for 120000 steps with batch size 256 with a time horizon $T=1000$, and CIFAR\_LT for 400000 steps with batch size 100 with a time horizon $T=4000$.

We use the FID metric for assessing the quality of our generative models, computing this metric between 5000 using images and 5000 generated images.

\subsection{Metrics for generative models} \label{app:metrics}

\paragraph{MSLE} we use a mean squared logarithmic error (MSLE) metric tailored to measure the fit on the tails of the distribution at hand.
 Drawing inspiration from \cite{allouche:ev_gan_heavy_tail}, we define the MSLE as the squared distance between the logarithm of the inverse cumulative distributions of the original and generated data. If $F, \hat F$ denote respectively the cumulative distribution function of the original data and the empirical cumulative distribution function of the generated data, then 
\begin{equation}
    \text{MSLE}(\xi) = \int_{\xi}^1 \left(\log F^{-1}(p) - \log \hat F^{-1}(p) \right)^2 dp\eqsp,
\end{equation}
where $\xi$ is chosen the be $0.95$.

\paragraph{Precision/recall} These metrics are introduced in the setting of generative models by \cite{sajjadi2018assessing}, and assess the overlap of sample distributions using local geometric structures. Precision measures how much the generated distribution is contained in the original data distribution (measuring quality), and recall measured how much of the original data distribution is covered by the generated distribution (diversity). We also consider the $\fpr$ score which we define as the harmonic mean of these two values:
\begin{equation}
    \fpr = 2\cdot \dfrac{\text{precision}\cdot \text{recall}}{\text{precision} + \text{recall}}\eqsp.
\end{equation}


%

\subsection{Additional results}
\label{app:exp:additional}
We provide some more results on MNIST and CIFAR10\_LT, with FID for non-isotropic noise, and with the $\fpr$ metric for other methods (with clipping enabled in LIM and LIM-ODE). We also provide grid images in order to visually check the performance of DLPM.

\begin{minipage}[t]{\textwidth}
    \centering
    \begin{tabular}{lccccc}
        \toprule
        MNIST & $\alpha=1.5$ & $\alpha=1.7$ & $\alpha=1.8$ & $\alpha=1.9$ & $\alpha=2.0$ \\
        \midrule  
        DLPM$^{\text{ni}}$ & 44.17   & 14.06   & 5.74    & 3.62    & - \\
        DLIM$^{\text{ni}}$ & 14.96   & 51.58   & 59.84   & 76.03   & - \\
        \bottomrule
    \end{tabular}
    \captionof{table}{FID$\downarrow$, 1000 sampling steps for DLPM$^{\text{ni}}$, 25 sampling steps for DLIM$^{\text{ni}}$.}
    \label{tab:mnist_ni}
\end{minipage}

\begin{table}[ht]
    \centering
    \scalebox{1.0}{
    \begin{tabular}{lccccc}
    \toprule
     &   DLIM &   DLPM &    LIM & LIM-ODE & DDPM \\
    $\alpha$ &        &        &        &  &       \\
    \midrule
    1.7   &  0.884 &  \textbf{0.887} &  0.857 &   0.869 & - \\
    1.8   &  0.874 &  \textbf{0.881} &  0.821 &   0.875 & - \\
    1.9   &  0.877 &  \textbf{0.878} &  0.700 &   0.808 & - \\
    2.0   &  0.820 &  0.871 &  0.694 &   0.772 & \textbf{0.881} \\
    \bottomrule
    \end{tabular}
    }
\caption{MNIST, $\fpr \uparrow$}
\end{table}
\begin{table}[ht]
    \centering
    \scalebox{1.0}{
    \begin{tabular}{lccccc}
    \toprule
     &   DLIM &   DLPM &    LIM & LIM-ODE & DDPM \\
    $\alpha$ &        &        &        &      &   \\
    \midrule
    1.7   &  0.676 &  0.675 &  \textbf{0.679} &   0.677 & - \\
    1.8   &  0.669 &  \textbf{0.680} &  0.677 &   0.673 & - \\
    1.9   &  0.667 &  \textbf{0.669} &  0.661 &   0.669 & - \\
    2.0   &  0.664 &  \textbf{0.667} &  0.660 &   0.665 & 0.666 \\
    \bottomrule
    \end{tabular}
    }
    \caption{CIFAR10\_LT, $\fpr \uparrow$}
\end{table}
\begin{figure}[h!]
  \centering
  \begin{minipage}{0.6\textwidth}
    \centering
    \includegraphics[width=\linewidth]{img_rebuttal/img_grid_sto/mnist/mnist_ddpm_1.7.png}
    \caption{MNIST, DLPM ($\alpha = 1.7$)}
  \end{minipage}
\end{figure}
\begin{figure}[h!]
  \centering
  \begin{minipage}{0.6\textwidth}
    \centering
    \includegraphics[width=\linewidth]{img_rebuttal/img_grid_sto/cifar10_lt/cifar10_lt_ddpm_1.7.png}
    \caption{CIFAR10\_LT, DLPM ($\alpha = 1.7$)}
  \end{minipage}
\end{figure}

\end{document}